\documentclass{article}

% if you need to pass options to natbib, use, e.g.:
%     \PassOptionsToPackage{numbers, compress}{natbib}
% before loading neurips_2024

% ready for submission
\usepackage{arxiv}

% to compile a preprint version, e.g., for submission to arXiv, add add the
% [preprint] option:
%     \usepackage[preprint]{neurips_2024}

% to compile a camera-ready version, add the [final] option, e.g.:
%     \usepackage[final]{neurips_2024}

% to avoid loading the natbib package, add option nonatbib:
%    \usepackage[nonatbib]{neurips_2024}

\usepackage[utf8]{inputenc} % allow utf-8 input
\usepackage[T1]{fontenc}    % use 8-bit T1 fonts
\usepackage{hyperref}       % hyperlinks
\usepackage{url}            % simple URL typesetting
\usepackage{booktabs}       % professional-quality tables
\usepackage{amsfonts}       % blackboard math symbols
\usepackage{nicefrac}       % compact symbols for 1/2, etc.
\usepackage{microtype}      % microtypography
\usepackage{xcolor}         % colors

%%%%% NEW MATH DEFINITIONS %%%%%

% \usepackage{amsmath,amsfonts,bm}
\usepackage{mathtools}
% Mark sections of captions for referring to divisions of figures

% Tensor
\DeclareMathAlphabet{\mathsfit}{\encodingdefault}{\sfdefault}{m}{sl}
\SetMathAlphabet{\mathsfit}{bold}{\encodingdefault}{\sfdefault}{bx}{n}

% Graph

% Sets

% Don't use a set called E, because this would be the same as our symbol
% for expectation.

% Entries of a matrix

% entries of a tensor
% Same font as tensor, without \bm wrapper

\def\E{{\bf E}}

\def\0{{\bf 0}}
\def\1{{\bf 1}}

\def\AM{{\mathcal A}}

\def\DM{{\mathcal D}}

\def\FM{{\mathcal F}}
\def\IM{{\mathcal I}}

\def\OM{{\mathcal O}}
\def\PM{{\mathcal P}}
\def\SM{{\mathcal S}}
\def\TM{{\mathcal T}}

\def\RB{{\mathbb R}}
\def\NB{{\mathbb N}}
\def\EB{{\mathbb E}}

\def\argmax{\mathop{\rm argmax}}

\def\ie{{\em i.e.\/}}

% The true underlying data generating distribution

% The empirical distribution defined by the training set

% The model distribution

% Stochastic autoencoder distributions

 % Laplace distribution

%\newcommand{\E}{\mathbb{E}}

%\newcommand{\R}{\mathbb{R}}

\newcommand{\Var}{\mathrm{Var}}

\newcommand{\Tfed}{\mathcal{T}_\text{fed}}

\newcommand{\Tfedn}[1]{\mathcal{T}_{\text{fed-}#1}}

% Wolfram Mathworld says $L^2$ is for function spaces and $\ell^2$ is for vectors
% But then they seem to use $L^2$ for vectors throughout the site, and so does
% wikipedia.

\newcommand{\ind}{\mathds{1}}

 % See usage in notation.tex. Chosen to match Daphne's book.

%\DeclareMathOperator*{\argmax}{arg\,max}
%\DeclareMathOperator*{\argmin}{arg\,min}

\def\AM{{\mathcal A}}

\def\DM{{\mathcal D}}

\def\FM{{\mathcal F}}
\def\IM{{\mathcal I}}

\def\MM{{\mathcal M}}

\def\OM{{\mathcal O}}
\def\PM{{\mathcal P}}
\def\SM{{\mathcal S}}
\def\TM{{\mathcal T}}

\def\RM{{\mathcal R}}

\def\RB{{\mathbb R}}
\def\EB{{\mathbb E}}

%\newcommand{\ie}{\textit{i}.\textit{e}.~}

% For theorems and such
\usepackage{amsmath}
\usepackage{amssymb}
\usepackage{mathtools}
\usepackage{amsthm}
\usepackage{ dsfont }
\usepackage{diagbox}

%%%%%%%%%%%%%%%%%%%%%%%%%%%%%%%%
% THEOREMS
%%%%%%%%%%%%%%%%%%%%%%%%%%%%%%%%
\theoremstyle{plain}
\newtheorem{thm}{Theorem}[section]
\newtheorem{lem}{Lemma}[section]

\newtheorem{asmp}{Assumption}[section]

\usepackage{algorithm}
\usepackage{algorithmic}
\usepackage{bm}

\title{Federated Control in Markov Decision Processes}

% The \author macro works with any number of authors. There are two commands
% used to separate the names and addresses of multiple authors: \And and \AND.
%
% Using \And between authors leaves it to LaTeX to determine where to break the
% lines. Using \AND forces a line break at that point. So, if LaTeX puts 3 of 4
% authors names on the first line, and the last on the second line, try using
% \AND instead of \And before the third author name.

\author{%
	\textbf{Hao Jin}\\
	jin.hao@pku.edu.cn\\
	Peking University\\
    \\
	\textbf{Liangyu Zhang}\\
	zhangliangyu@pku.edu.cn\\
	Peking University\\
    \And
    \textbf{Yang Peng}\\
    pengyang@pku.edu.cn\\
    Peking University\\
    \\
	\textbf{Zhihua Zhang}\\
	zhzhang@math.pku.edu.cn\\
	Peking University\\
}

\begin{document}

\maketitle

\begin{abstract}
We study problems of federated control in Markov Decision Processes. To solve an MDP with large state space, multiple learning agents are introduced to collaboratively learn its optimal policy without communication of locally collected experience. In our settings, these agents have limited capabilities, which means they are restricted within different regions of the overall state space during the training process. In face of the difference among restricted regions, we firstly introduce concepts of leakage probabilities to understand how such heterogeneity affects the learning process, and then propose a novel communication protocol that we call Federated-Q protocol (FedQ), which periodically aggregates agents' knowledge of their restricted regions and accordingly modifies their learning problems for further training. In terms of theoretical analysis, we justify the correctness of FedQ as a communication protocol, then give a general result on sample complexity of derived algorithms FedQ-X with the RL oracle 
, and finally conduct a thorough study on the sample complexity of FedQ-SynQ. Specifically, FedQ-X has been shown to enjoy linear speedup in terms of sample complexity when workload is uniformly distributed among agents. Moreover, we carry out experiments in various environments to justify the efficiency of our methods.
\end{abstract}

\section{Introduction}
Recent years have witnessed a rapidly increasing deployment of reinforcement learning (RL) \cite{sutton1998introduction} in the policy-making process of many realistic applications, such as online display advertising \cite{zou2019reinforcement,bai2019model} and routing of automobiles \cite{kiran2021deep,fayjie2018driverless,chen2019model}.
In these scenarios, traditional single-agent RL algorithms are seriously challenged by the scales of learning problems.
Specifically, RL problems are modeled as Markov Decision Processes (MDPs) with a state space $\SM$  and an action space  $\AM$. 
Sizes of the state space $|\SM|$ and the action space $|\AM|$ explosively grow in the real-world applications mentioned above, while the sample complexity of a single-agent RL algorithm linearly increases with respect to $|\SM|$ and $|\AM|$ \cite{li2023q,chen2020finite}.
To speed up the learning process, it is appealing to consider multi-agent variants of RL algorithms \cite{wang2019distributed,martinez2019decentralized}.
In some other tasks, the policy-making process is naturally distributed among multiple separate agents \cite{gedel2021low,habibi2019comprehensive}.
For instance, the task of data transmission has to be sequentially processed by multiple cell towers with limited coverage distance \cite{ahamed20215g,dhekne2017extending}.
What makes things worse is that communication among learning agents has to consider issues of privacy protection, which makes any gathering of locally collected data infeasible.
This motivates us to design communication protocols for 
% \red{privacy-preserving} 
distributed RL problems in a setting that we call the federated control.

In particular, we  formulate the federated control problem for distributed RL as a six-tuple $\langle \SM,\AM,\PM,\RM,\gamma,\{\SM_k\}_{k=1}^N\rangle$, where $N$ indicates the number of distributed agents and $\SM_k\subseteq\SM$ represents the subset of space the $k$-th agent has access to.
Specifically, having access to $\SM_k$ indicates that the $k$-th agent is able to collect experience and execute its policy when the state falls within $\SM_k$, and the data collected by the $k$-th agent is therefore notated as $\DM_{\SM_k}$.
These agents are expected to collaboratively learn the optimal policy $\pi^\star$ in maximizing cumulative reward function of the MDP $\langle \SM,\AM,\PM,\RM,\gamma\rangle$.
Prevalent multi-agent variants \cite{liu2019lifelong,nadiger2019federated,wang2020federated,jin2022federated} assume that agents have access to the overall state set $\SM$, which leads to a special case of our federated control problems with $\SM_k=\SM,~\forall k\in[N]=\{1,\dots,N\}$.
In their settings, these agents periodically aggregate either their policy gradients or their local Q functions to accelerate the training process.
Such assumption on agents simplifies the design of communication protocols and naturally guarantees the convergence of algorithms \cite{jin2022federated,woo2023blessing}.
Our formulation of federated control problems emphasizes the heterogeneity among $\{\SM_k\}_{k=1}^N$, where agents are specialized in collecting data and executing policies within certain set of states.

Heterogeneity among subsets of states challenges traditional designs of communication protocols, since agents face related but different learning problems.
Inspired by techniques in decomposing large-scale MDPs \cite{meuleau1998solving,fu2015optimal,daoui2010exact}, we believe that these agents are able to learn the globally optimal policy when provided with properly designed knowledge from others.
In our settings, agents restricted in different regions lack information about states beyond their collected experience, while they are specialized in extracting information of local policies within their subsets of states.
Therefore, we schedule local agents to periodically upload their Q functions based on their locally learned optimal policies, and then utilize these information to guide later training of these agents.
Such methodology is referred to as Federated-Q protocol, \textbf{FedQ}, in the following discussion.
It is worth noting that FedQ does not restrict how local agents update their policies, but requires them to keep timely Q functions of their local policies, which is prevalent in popular RL algorithms, such as critic functions in those policy-based solutions.
Although FedQ as a communication protocol looks simple at the very first sight, we will justify its correctness and show its generality in deriving theoretically-efficient algorithms in terms of sample complexity.

Existing tools for analysing Q function based RL methods are improper for the theoretical justification of FedQ due to the introduction of heterogeneity among $\{\SM_k\}_{k=1}^N$.
To address the above challenge, we formulate FedQ with a federated Bellman operator $\Tfed$, which is composed of $N$ local Bellman operators in $N$ local MDPs.
Specifically, a local MDP represents the learning problem within the restricted region given the auxiliary information broadcast by FedQ, in which the process of local updates is formulated as a local Bellman operator; the federated Bellman operator tracks the performance of aggregated policies at each communication round.
Suppose the RL oracle utilized by local agents in solving local MDPs is $X$.
Accompanied with the communication protocol FedQ, such algorithm for federated control problems is abbreviated as FedQ-X.
For the algorithm FedQ-X, we provide a general result of its sample complexity.
Specifically, we have shown that FedQ-SynQ, which utilizes synchronous Q-learning as the RL oracle, manages to achieve lower sample complexity at the expense of more rounds of communication.
Moreover, FedQ-X has been shown to assist any participating agent in achieving $N$-times lower sample complexity, when the workload is distributed among different agents, \ie, $|\SM_k|=|\SM|/N,~\forall k\in[N]$.

In summary, our paper offers the following contributions:
\begin{itemize}
\setlength\itemsep{0.1em}
    \item We are the first to consider federated control problems with heterogeneity among state subsets within which different agents are respectively restricted.
    This setting better matches realistic scenarios with distributed deployment and concerns on privacy protection.
    Additionally, we introduce the concept of leakage probabilities to depict how such heterogeneity affects the process of policy learning.
    \item We propose an efficient communication protocol, FedQ, to solve federated control problems.
    FedQ requires agents to communicate knowledge of states within their restricted regions, and utilizes these information for local training.
    \item We theoretically justify the correctness of FedQ, propose a general result for sample complexity of any specific algorithm FedQ-X, and tighten the sample complexity of FedQ-SynQ at the expense of more rounds of communication.
    Moreover, the linear speedup of FedQ-X w.r.t. sample complexity is theoretically justified when workload is uniformly distributed
\end{itemize}

\section{Preliminaries}
In this section, we start from the introduction of classical control problems in MDPs, then briefly review basic knowledge about Q-learning and finally present synchronous Q-learning as a thoroughly studied RL oracle.
\subsection{Classical control in Markov Decision Processes}
An MDP is usually formulated as a five-tuple $\MM=\langle\SM,\AM,\RM,\PM,\gamma\rangle$: $\SM$ and $\AM$ represent the state space and the action space, $\RM$ models the reward function, $\PM$ stands for the transition dynamic, and $\gamma$ serves as the discounted factor.
A classical control problem is to find the optimal policy $\pi^\star_\SM$ maximizing cumulative reward:
\begin{equation*}
\pi^\star_\SM=\arg\max_{\pi_\SM} J(\pi_\SM),
\end{equation*}
\begin{equation}
\label{pre:obj_func}
J(\pi)=\mathbb{E}_{\PM,\pi}\Bigg [ \sum_{t=0}^\infty \gamma^t\RM(s_t,a_t)\Big|s_0\sim d_0\Bigg],
\end{equation}
where expectations are evaluated w.r.t. the randomness of both transition dynamic $\PM$ and the policy $\pi$, $d_0\in\Delta(\SM)$ indicates the distribution of initial states $s_0$, and the footnote $\SM$ of $\pi_\SM$ indicates that the learning agent is allowed to collect experience $D_\SM$ on the overall state space $\SM$ and train its policy $\pi_\SM$ accordingly.
Specifically, an agent adopting Q-value based algorithms is able to collect experience replay buffer $D_\SM=\{(s,a,r,s')|s\in\SM\}$ and an agent with policy gradient methods is able to collect trajectories $D_\SM=\{\tau:=(s_t,a_t,r_t,s_{t+1})_{t=0}^\infty|s_t\in\SM\}$.

\subsection{Basics of Q-learning}
Q-learning represents a class of online reinforcement learning algorithms based on the modification of value functions and Q functions.
For any feasible policy $\pi$, we are able to derive its value function in an MDP $\MM=\langle\SM,\AM,\RM,\PM,\gamma\rangle$ for all $s\in \SM$
\begin{equation}
\notag
V^\pi(s):=\EB_{\PM,\pi}\left [ \sum_{t=0}^\infty \gamma^t \RM(s_t,a_t)\Big|s_0=s\right ],
\end{equation}
and Q function for all $(s,a)\in\SM\times\AM$
\begin{equation}
\notag
    Q^\pi(s,a):=\EB_{\PM,\pi}\Big [\sum_{t=0}^\infty \gamma^t \RM(s_t,a_t)\Big|s_0=s,a_0=a\Big].
\end{equation}
The optimal value function and the optimal Q function are defined respectively as
\begin{equation}
    V^\star(s)=\max_\pi V^\pi(s),~~Q^\star(s,a)=\max_\pi Q^\pi(s,a).
\end{equation}
It is well-known that an optimal deterministic policy $\pi^\star$ can be accordingly derived as
\begin{equation*}
    \pi^\star(s) = \arg\max_{a\in\AM} Q^\star(s,a).
\end{equation*}
To achieve the optimal Q function $Q^\star$, various algorithms are based on the optimistic Bellman operator $\TM$ defined as follows for all $(s,a)\in\SM\times\AM$:
\begin{equation}
\label{pre:optimBellman}
    \TM(Q)(s,a):=\RM(s,a)+\gamma\EB_{s'\sim\PM(\cdot|s,a)}\left[\max_{a'\in\AM}Q(s',a')\right].
\end{equation}
Furthermore, the operator $\TM$ represents a contraction mapping, which means for all $Q_1,Q_2\in\RB^{|\SM||\AM|}$
\begin{equation*}
    \|\TM(Q_1)-\TM(Q_2)\|_\infty\leq \gamma \|Q_1-Q_2\|_\infty,
\end{equation*}
where $\|\cdot\|_\infty$ represents the maximum norm.
The optimal Q function $Q^\star$ is the unique stationary point of this operator $\TM$, \ie, $Q^\star = \TM(Q^\star)$.

\subsection{Synchronous Q-learning}
\label{pre:SyncQ}
Synchronous Q-learning, a thoroughly studied variant of Q-learning, utilizes a generative model to iteratively learn the optimal Q function $Q^\star$.
Specifically, take $Q_t\in\RB^{|\SM||\AM|}$ as the Q function at the $t$-th iteration,
and the synchronous setting allows us to generate $s_t(s,a)\sim\PM(\cdot|s,a)$ for every $(s,a)\in\SM\times\AM$.
We are next to introduce an empirical operator $\TM_t$ as follows:
\begin{align*}
    \TM_t(Q_{t-1})(s,a)&:=\RM(s,a)+\gamma\max_{a'\in\AM}Q_{t-1}(s_t(s,a),a'),
    % s'&\sim\PM(\cdot|s,a),
\end{align*}
and accordingly update all state-action values via the following update rule:
\begin{equation}
\label{pre:SynQ_update}
    Q_{t}=(1-\eta_t)Q_{t-1}+\eta_{t-1} \TM_t(Q_{t-1}),
\end{equation}
where $\eta_t$ serves as the learning rate in the $t$-th iteration.
Note that the empirical operator $\TM_t$ is an unbiased estimate of the optimistic Bellman operator $\TM$ defined in Eq. (\ref{pre:optimBellman}), which means $Q_t$ is meant to converge towards the optimal Q function $Q^\star$.

\section{Federated Control in MDPs}
In this section, we firstly formulate problems of federated control in MDPs, and then propose the notion of leakage probability to quantify degrees of heterogeneity in federated control problems.

\subsection{General Problem Formulation}
Problems of federated control are formulated with a six-tuple $\langle \SM,\AM,\RM,\PM,\gamma,\{\SM_k\}_{k=1}^N\rangle$, which is composed of a classical control problem $\MM=\langle \SM,\AM,\RM,\PM,\gamma\rangle$ and a set of restricted regions $\{\SM_k:\SM_k\subseteq \SM\}_{k=1}^N$ for the $N$ involved agents.
Moreover, a feasible federated control problem requires  $\cup_{k=1}^N\SM_k=\SM$, which means that these $N$ restricted regions cover the overall state space.
Similar to classical control, federated control focuses on maximizing the cumulative reward in the MDP $\MM$.
However, instead of learning the policy $\pi_\SM$ on the overall data $D_\SM$, federated control requires these $N$ agents to learn $N$ local policies $\{\pi_{\SM_k}\}_{k=1}^N$ based on local experience restricted within their assigned regions $\{D_{\SM_k}\}_{k=1}^N$.
The federated control problem is formulated as follows:
\begin{align}
    \notag
    \pi^\star_\text{Fed}&=\arg\max_{\pi\in\Pi_\text{Fed}} J(\pi),\\
    \Pi_\text{Fed}:=\{\Phi(\pi_{\SM_1},\dots,&\pi_{\SM_N}):\pi_{\SM_k} \text{ is trained on } D_{\SM_k}\},
\end{align}
where $\Phi$ indicates a predefined strategy merging $N$ local policies to a global policy, and $J(\cdot)$ follows the same definition in the classical control problem described in Eq. \eqref{pre:obj_func}.
\subsection{Leakage Probabilities in Federated Control}
To quantify the hardness of solving federated control problems, we introduce the concept of leakage probabilities for any feasible federated control problem.
The leakage probability reflects the strength of connection among different regions $\{\SM_k\}_{k=1}^N$ under the given transition dynamic $\PM$, which plays an important role in the analysis of FedQ.

For any restricted region $\SM_k$, we evaluate the leakage probability for any state-action pair $(s,a)\in\SM_k\times\AM$:
\begin{equation*}
\PM_k(s,a):=\sum_{s'\notin\SM_k}\PM(s'|s,a).
\end{equation*}
Based on values of leakage probability, we manage to classify state-action pairs $(s,a)\in\SM_k\times\AM$ into two groups, the kernel $K_k$ and the edge  $E_k$:
\begin{align*}
    K_k:=\{(s,a)\in\SM_k\times\AM:\PM_k(s,a)=0\},\\
    E_k:=\{(s,a)\in\SM_k\times\AM:\PM_k(s,a)>0\}.
\end{align*}
All state-action pairs in $E_k$ have non-zero probability of transiting out of $\SM_k$.
To depict the connection between $\SM_k$ and $\SM\backslash\SM_k$, we introduce $p_{\max}^k$ as the maximum leakage probability for the subset $\SM_k$:
\begin{align*}
    p_{\max}^k:=\max_{(s,a)\in E_k}\PM_k(s,a).
\end{align*}
For the $k$-th restricted region, a larger $p_{\max}^k$ indicates that there exists a state-action pair with higher probabilities of transiting to other regions.
For the problem of federated control, we are able to construct sets of maximum leakage probabilities $\{p_{\max}^k\}_{k=1}^N$.

Maximum leakage probabilities quantify the strength of connection among subsets of states $\{\SM_k\}_{k=1}^N$ under certain transition probability $\PM$, which implicitly determines the relevance among learning problems of different agents.
Specifically, a smaller $p_{\max}^k$ indicates a more closed transition dynamic within $\SM_k$ and a more independent learning process for the $k$-th agent.
Consider an extreme case where every involved agent is able to access the whole state space $\SM$, the edging group for any region remains empty $E_k=\emptyset$, and maximum leakage probabilities $p_{\max}^k$ are accordingly set to $0$.
Zero leakage probabilities match the relative independence among these agents, where any single agent is able to learn the optimal policy without any communication.

\section{FedQ: Federated-Q Protocol}
In this section, we firstly introduce concepts of augmented local MDPs for $N$ agents, and then propose the communication protocol of FedQ as a solution to federated control problems.

\subsection{Augmented Local Markov Decision Process}
In the federated control problem, participated agents are not allowed to access experience collected outside their restricted regions, which makes the overall MDP $\MM$ improper for the algorithm design and analysis of local policy learning.
Given a federated control problem as $\langle \SM,\AM,\RM,\PM,\gamma,\{\SM_k\}_{k=1}^N\rangle$, we construct augmented local MDPs $\{\MM_k\}_{k=1}^N$ for participated agents.
Specifically, we introduce a predefined vector $\tilde{V}\in\RB^{|\SM|}$, augment the state space with an absorbing state $\tilde{\SM}:=\SM\cup\{s_{null}\}$, and formulate the $k$-th local MDP $\MM_k$ as $\langle \tilde{\SM},\AM,\RM_k,\PM_k,\gamma,\tilde{V}\rangle$ where the augmented reward $\RM_k$ is defined as 
\begin{equation*}
    \RM_k(s,a)=
    \begin{cases}
    \RM(s,a)&s\in\SM_k, \\
    \tilde{V}(s)&s\in\SM\backslash\SM_k, \\
    0&s=s_{null},
    \end{cases}
\end{equation*}
and the augmented dynamic $\PM_k$ is defined as
\begin{equation*}
    \PM_k(s'|s,a)=
    \begin{cases}
    \PM(s'|s,a)&s\in\SM_k, \\
    1&s'=s_{null}~\text{and}~s\notin \SM_k, \\
    0&\text{otherwise}.
    \end{cases}
\end{equation*}
Put simple, $\MM_k$ keeps the reward function and transition dynamics within the restricted region $\SM_k$ unchanged, and treats states outside $\SM_k$ as absorbing states with terminal rewards according to the given vector $\tilde{V}$.
The construction of $\{\MM_k\}_{k=1}^N$ depends on the value of $\tilde{V}$ and a proper $\tilde{V}$ leads to rational augmented local MDPs.

\begin{algorithm}[!htb]
    \caption{FedQ with an RL oracle $X$ (FedQ-X)}
    \begin{algorithmic}
    \label{Alg:FedQ_full}
        \STATE \textbf{Initialization:} $\langle \SM,\AM,\RM,\PM,\gamma,\{\SM_k\}_{k=1}^N\rangle$ as the federated control problem,$~Q_0$ as the initial global Q functions.
        % \STATE \textbf{Initialization:} Initial Q function $Q_0$ and $\{Q_{k,0}\}_{k=1}^N$, and the federated control problem $\langle \SM,\AM,R,\PM,\gamma,\{\SM_k\}_{k=1}^N\rangle$.
        % \STATE $Q\leftarrow Q_0$.
        % \STATE $Q\leftarrow Q_0;~Q_k\leftarrow Q_{k,0},\forall k\in [N]$.
        \FOR{ $r=0$ {\bfseries to} $R-1$ }
            \STATE $V_r(s)\leftarrow\max_{a\in\AM} Q_r(s,a),\forall s\in\SM$.
            \STATE Construct $\{\MM_{r,k}\}_{k=1}^N$ based on $\tilde{V}=V_r$.
            % \FOR{ $e=1$ {\bfseries to} $E$ }
                \FOR{ $k=1$ {\bfseries to} $N$ }
                    \STATE Update the local policy $\pi_{r,k}$ to maximize cumulative reward in $\MM_{r,k}$ via calling $X$ for $E$ iterations.
                    \STATE Obtain Q functions of the local policy as $Q_{r,k}$.
                    % \STATE Update $Q_{k}$ to maximize cumulative reward in $\MM_k$ following the rule described in Eq. (\ref{pre:SynQ_update}).
                \ENDFOR
            % \ENDFOR
            \STATE Aggregate $\{Q_{r,k}\}_{k=1}^N$ as $Q_{r+1}$ following the rule described in Eq. \eqref{Alg:FedQ}.
        \ENDFOR
        \STATE \textbf{return} $Q_R$.
    \end{algorithmic}
\end{algorithm}

\subsection{Federated-Q Protocol}
We propose Federated-Q protocol (FedQ) as a communication protocol to solve federated control problems, which involves Q functions of policies locally learned in $\{\MM_k\}_{k=1}^N$.
To exchange knowledge between different restricted regions, FedQ reloads local MDPs $\{\MM_k\}_{k=1}^N$ via periodically updating values of $\tilde{V}$.
Suppose the values of $\tilde{V}$ after the $r$-th round of communication are denoted as $V_r$, and the local MDP constructed for the $k$-th agent based on $\tilde{V}=V_r$ is denoted as $\MM_{r,k}$.
In terms of local updates, agents are expected to optimize their local policies in local MDPs respectively with any feasible oracle.
After $E$ steps of local updates, FedQ requires agents to communicate Q functions $\{Q_{r-1,k}\}_{k=1}^N$ of their locally learned policies for the $r$-th round of communication.
During the $r$-th communication round, $\{Q_{r-1,k}\}_{k=1}^N$ are aggregated as follows for the derivation of $V_{r}$:
\begin{align}
    \label{Alg:FedQ}
    Q_{r}(s,a)=\frac{1}{N(s)}&\sum_{k=1}^NQ_{r-1,k}(s,a)\mathds{1}_k(s),\forall a\in\AM,\\
    \notag
    V_{r}(s) &= \max_a Q_r(s,a),\forall s\in\SM,
\end{align}
where $N(s):=\sum_{k=1}^N\mathds{1}_k(s)$ denotes the number of regions containing the state $s$.
Afterwards, values of $V_r$ are broadcast among $N$ agents during the $r$-th communication round to accordingly construct $\{\MM_{r,k}\}_{k=1}^N$ for the next $E$ iterations of local updates.
Full implementation of FedQ is shown in Algorithm \ref{Alg:FedQ_full}.

FedQ is regarded as a communication protocol, because it does not specify the type of oracles utilized by local agents for local updates.
Its only requirement on local agents is to timely update their Q functions in local MDPs, which naturally holds for popular RL algorithms.
Specifically, Q-value based algorithms keep Q functions of the current policy for following updates, while most of policy-gradient based algorithms maintain value (Q) functions for the reduction on variation of policy updates.
When FedQ is adopted as the communication protocol and local agents utilize RL algorithm $X$ as the oracle mentioned in Algorithm \ref{Alg:FedQ_full}, the federated algorithm is shortened as FedQ-X for the following theoretical analysis.

\section{Analysis of FedQ}
In this section, we theoretically analyse mechanisms of the proposed communication protocol, FedQ, in converging to the globally optimal Q function, and then gives the sample complexity for a detailed algorithm, FedQ-SynQ.
Detailed proofs of lemmas and theorems in this section are left in Appendices \ref{Appendix:FedOp}, \ref{Appendix:SampleComplexityFedQX}, and \ref{section:proof_fedq_synq}.

\subsection{Convergence of FedQ}
To analyse the convergence of FedQ, we conduct two-step theoretical analysis without specifying detailed RL oracles.
In order to investigate effects of the communication protocol on convergence, we ignore approximation error in deriving optimal policies of local MDPs.
Processes of local updates and Q-value aggregation are respectively modeled as local Bellman operators and federated Bellman operators.
\subsubsection{Local Bellman Operators}
The concept of local Bellman operators $\{\Tfed^k\}_{k=1}^N$ is introduced to model the progress of local updates for these $N$ agents.
For the $k$-th agent, $\Tfed^k$ formulates the mapping from the aggregated Q function $Q_r$ to the optimal Q functions $Q_{r,k}$ for its local MDP $\MM_{r,k}$:
\begin{equation*}
    \Tfed^k(Q_r) = Q_{r,k}.
\end{equation*}
Since $Q_{r,k}$ represents Q functions of the optimal policy in $\MM_{r,k}$, it has to satisfy the following Bellman optimality equation for all $(s,a)\in\SM_k\times\AM$:
\begin{align}
\notag
    Q_{r,k}(s,a)&=\RM_k(s,a)+\gamma\EB_{\PM_k}\max_{a'}Q_{r,k}(s',a')\\
\notag
    &=\RM(s,a) + \gamma\EB_{\PM}\Big [\mathds{1}_k(s')\max_{a'}Q_{r,k}(s',a')\\
    \label{ana:Tfed_local_optimal}
    &~~~~~+(1-\mathds{1}_k(s'))V_r(s')\Big ].
\end{align}
We are next to show the property of contraction for the operator $\Tfed^k$ in federated control problems.
\begin{lem}
[Contraction property of $\Tfed^k$]
\label{ana:Tfedk_contraction}
For any local Bellman operator $\Tfed^k$, it satisfies the contraction property as follows:
    \begin{align}
        \max_{s\in\SM_k,a\in\AM}\Big|\Tfed^k(Q_1)(s,a)-\Tfed^k(Q_2)(s,a)\Big|\leq \\
        \gamma_\text{fed}^k\|Q_1-Q_2\|_\infty,
        \notag~\forall Q_1,Q_2\in\RB^{|\SM||\AM|},
    \end{align}
    where $\gamma_\text{fed}^k=\frac{\gamma p^k_{\max}}{1-\gamma (1-p^k_{\max})}\leq\gamma$.
\end{lem}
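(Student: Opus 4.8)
The plan is to read off the fixed-point equations for the two images of $\Tfed^k$ and compare them term by term. Fix $Q_1,Q_2\in\RB^{|\SM||\AM|}$, set $V_i(s):=\max_a Q_i(s,a)$, and let $\hat Q_i:=\Tfed^k(Q_i)$, so that restricted to $\SM_k\times\AM$ each $\hat Q_i$ obeys the Bellman optimality equation (\ref{ana:Tfed_local_optimal}):
\[
\hat Q_i(s,a)=\RM(s,a)+\gamma\,\EB_{s'\sim\PM(\cdot|s,a)}\Big[\mathds{1}_k(s')\max_{a'}\hat Q_i(s',a')+(1-\mathds{1}_k(s'))V_i(s')\Big].
\]
Only values on $\SM_k\times\AM$ enter both the aggregation rule and the statement, so this is all that is needed. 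Subtracting the two equations, the reward cancels; for $s'\in\SM_k$ the inner maxima involve only states in $\SM_k$, so they are controlled by $\Delta:=\max_{s\in\SM_k,a\in\AM}|\hat Q_1(s,a)-\hat Q_2(s,a)|$, while for $s'\notin\SM_k$ the term is controlled by $|V_1(s')-V_2(s')|\le\|Q_1-Q_2\|_\infty=:\delta$. This yields, for every $(s,a)\in\SM_k\times\AM$,
\[
\big|\hat Q_1(s,a)-\hat Q_2(s,a)\big|\le\gamma\big[(1-\PM_k(s,a))\,\Delta+\PM_k(s,a)\,\delta\big],
\]
where $\PM_k(s,a)=\sum_{s'\notin\SM_k}\PM(s'|s,a)$ is precisely the leakage probability.

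Next I would evaluate this at a pair $(s^\star,a^\star)$ attaining $\Delta$ (it exists since $\SM_k\times\AM$ is finite) and write $p^\star:=\PM_k(s^\star,a^\star)$. The inequality becomes the self-referential bound $\Delta\le\gamma(1-p^\star)\Delta+\gamma p^\star\delta$; since $0\le 1-p^\star\le 1$ and $\gamma<1$ give $1-\gamma(1-p^\star)>0$, rearranging yields
\[
\Delta\le\frac{\gamma p^\star}{1-\gamma(1-p^\star)}\,\delta.
\]
To upgrade $p^\star$ to $p_{\max}^k$ I check that $f(p):=\gamma p/(1-\gamma(1-p))$ is nondecreasing on $[0,1]$: a direct derivative gives $f'(p)=\gamma(1-\gamma)/(1-\gamma+\gamma p)^2>0$. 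Since $p^\star\le p_{\max}^k$ in all cases --- if $(s^\star,a^\star)\in E_k$ this is the definition of $p_{\max}^k$, and if $(s^\star,a^\star)\in K_k$ then $p^\star=0$ --- monotonicity gives $\Delta\le f(p_{\max}^k)\,\delta=\gfed^k\,\delta$, which is the asserted contraction. Finally, $\gfed^k\le\gamma$ follows from $f$ being increasing together with $f(1)=\gamma$ and $p_{\max}^k\le 1$.

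I expect the only non-mechanical step to be the self-referential one: isolating the maximizing pair, verifying $1-\gamma(1-p^\star)>0$ so the scalar inequality can legitimately be solved for $\Delta$, and then justifying the replacement $p^\star\mapsto p_{\max}^k$ via the monotonicity of $f$ rather than, say, naively bounding $1-\PM_k$ from below (which would lose the improvement over $\gamma$). The remaining steps --- cancellation of rewards, the two elementary $\max$-difference estimates, and splitting the expectation over $\{s'\in\SM_k\}$ versus $\{s'\notin\SM_k\}$ --- are routine. One edge case deserves a sentence: when $E_k=\emptyset$ the convention $p_{\max}^k=0$ forces $\gfed^k=0$, and the argument then collapses to $\Delta\le\gamma\Delta$, hence $\Delta=0$, consistent with the bound.
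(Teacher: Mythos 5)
Your proposal is correct and follows essentially the same route as the paper's proof: split the expectation over $\{s'\in\SM_k\}$ versus $\{s'\notin\SM_k\}$ to get the pointwise bound $|\hat Q_1(s,a)-\hat Q_2(s,a)|\le\gamma(1-\PM_k(s,a))\Delta+\gamma\PM_k(s,a)\delta$, then solve the resulting self-referential scalar inequality. The only difference is cosmetic: where the paper takes a supremum over $E_k$ and introduces a case-defined $p^k_\star\in\{p^k_{\min},p^k_{\max}\}$ (arguing afterwards that $p^k_\star=p^k_{\max}$), you evaluate at the maximizing pair and invoke monotonicity of $p\mapsto\gamma p/(1-\gamma+\gamma p)$, which is a slightly cleaner way to land on the same constant.
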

\textit{\textbf{Remark}} 1: 
Although $\Tfed^k$ satisfies the contraction property, it is not a traditionally known contraction mapping as optimistic Bellman operators $\mathcal{T}$.
In fact, $\Tfed^k$ is an idempotent operator, \ie,  $\left(\Tfed^k\right)^2(Q_r)=\Tfed^k(Q_r)$.
\subsubsection{Federated Bellman Operator}
To study the convergence of FedQ, we focus on the global Q function obtained via aggregating local Q functions during each communication round.
We introduce federated Bellman operator $\Tfed$ to formulate the relationship between global Q functions of adjacent communication rounds:
\begin{equation*}
    \Tfed(Q_r)=Q_{r+1}.
\end{equation*}
Updates from $Q_r$ to $Q_{r+1}$ are indirect, since they do not depend on direct sampling from the global MDP.
According to the formulation of FedQ, $\Tfed$ is in fact made up of local Bellman operators as follows: 
\begin{equation*}
    \Tfed(Q)(s,a)=\frac{1}{N(s)}\sum_{k=1}^N\Tfed^k(Q)(s,a)\mathds{1}_k(s).
\end{equation*}
The property of $\Tfed$ is strongly dependent on properties of $\{\Tfed^k\}_{k=1}^N$.
In fact, as long as $\{\Tfed^k\}_{k=1}^N$ satisfy the contraction property, $\Tfed$ is also a contraction mapping.
We formalize the relationship with the following lemma.
\begin{lem}[Contraction property of $\Tfed$]\label{lem:Tfed_contraction}
$\Tfed$ is a contraction mapping, \ie
\begin{align}
    \|\Tfed(Q_1)-\Tfed(Q_2)\|_\infty\leq\gamma_\text{fed}\|Q_1-Q_2\|_\infty,\\
    \notag
    \forall Q_1,Q_2\in\RB^{|\SM||\AM|},
\end{align}
\vspace{-0.1in}
where $\gamma_\text{fed}=\max_k\gamma_\text{fed}^k\leq\gamma$.
\end{lem}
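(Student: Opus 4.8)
The plan is to treat $\Tfed$ as a state-wise convex combination of the local operators $\{\Tfed^k\}_{k=1}^N$ and to propagate the contraction estimate of Lemma~\ref{ana:Tfedk_contraction} through this combination. Concretely, fix an arbitrary pair $(s,a)\in\SM\times\AM$. Since $\cup_{k=1}^N\SM_k=\SM$ we have $N(s)=\sum_{k=1}^N\mathds{1}_k(s)\ge 1$, so the weights $w_k(s):=\mathds{1}_k(s)/N(s)$ are well defined, nonnegative, sum to one over $k$, and are supported exactly on $\{k:s\in\SM_k\}$. Hence $\Tfed(Q)(s,a)=\sum_{k=1}^N w_k(s)\,\Tfed^k(Q)(s,a)$ is a genuine convex combination (the zero weights killing any term where $\Tfed^k$ is irrelevant), and by the triangle inequality
\begin{equation*}
\bigl|\Tfed(Q_1)(s,a)-\Tfed(Q_2)(s,a)\bigr|\le \sum_{k=1}^N w_k(s)\,\bigl|\Tfed^k(Q_1)(s,a)-\Tfed^k(Q_2)(s,a)\bigr|.
\end{equation*}

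For each $k$ appearing with positive weight we have $s\in\SM_k$, so Lemma~\ref{ana:Tfedk_contraction} applies at $(s,a)$ and gives $\bigl|\Tfed^k(Q_1)(s,a)-\Tfed^k(Q_2)(s,a)\bigr|\le \gamma_\text{fed}^k\|Q_1-Q_2\|_\infty\le \gamma_\text{fed}\|Q_1-Q_2\|_\infty$, where the second inequality is just $\gamma_\text{fed}=\max_k\gamma_\text{fed}^k$. Substituting this into the bound above and using $\sum_{k=1}^N w_k(s)=1$ collapses the convex combination, yielding $\bigl|\Tfed(Q_1)(s,a)-\Tfed(Q_2)(s,a)\bigr|\le \gamma_\text{fed}\|Q_1-Q_2\|_\infty$. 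Taking the supremum over $(s,a)\in\SM\times\AM$ proves the claimed contraction, and $\gamma_\text{fed}\le\gamma$ follows from the bound $\gamma_\text{fed}^k\le\gamma$ already recorded in Lemma~\ref{ana:Tfedk_contraction}.

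I expect no genuine obstacle in this lemma itself: given Lemma~\ref{ana:Tfedk_contraction}, the statement reduces to the elementary fact that a convex combination of $\gamma_\text{fed}$-Lipschitz maps is $\gamma_\text{fed}$-Lipschitz. The only points warranting care are bookkeeping: (i) the aggregated iterate $Q_r$, and hence the norm $\|\cdot\|_\infty$, lives on $\SM\times\AM$ rather than on the augmented space $\tilde{\SM}$; and (ii) Lemma~\ref{ana:Tfedk_contraction} controls $\Tfed^k$ only on $\SM_k\times\AM$, but this is precisely the index set carrying nonzero weight $w_k(s)$, so no term is left unbounded. All of the real work — the appearance of $p_{\max}^k$ through the geometric factor $\gamma p_{\max}^k/(1-\gamma(1-p_{\max}^k))$, and the idempotence of $\Tfed^k$ noted in Remark~1 — is confined to Lemma~\ref{ana:Tfedk_contraction} and is not revisited here.
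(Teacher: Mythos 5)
Your proposal is correct and is essentially identical to the paper's own proof: both decompose $\Tfed(Q)(s,a)$ as the average $\frac{1}{N(s)}\sum_k \Tfed^k(Q)(s,a)\mathds{1}_k(s)$, apply the triangle inequality, invoke the local contraction bound of Lemma~\ref{ana:Tfedk_contraction} on the indices with $s\in\SM_k$, and conclude via $\gamma_\text{fed}=\max_k\gamma_\text{fed}^k$. Your framing as a convex combination with weights $w_k(s)=\mathds{1}_k(s)/N(s)$ is just a slightly more explicit rendering of the same three-line estimate.
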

\textit{\textbf{Remark}} 2: The dependence of $\gamma_{\text{fed}}$ on $\{\gamma_{\text{fed}}^k\}_{k=1}^N$ implies that uniformly smaller $\{p_{\max}^k\}_{k=1}^N$ result in a smaller $\gamma_{\text{fed}}$.
This matches our intuition that FedQ achieves faster convergence when federated control problems exhibit more closed transition dynamics within restricted regions $\{\SM_k\}_{k=1}^N$.

We are next to show that $\Tfed$ has a unique stationary point corresponding to the globally optimal Q functions.
\begin{lem}
    [Stationary point of $\Tfed$]\label{lem:Tfed_fixpoint} 
    Given a federated control problem $\langle \SM,\AM,R,\PM,\gamma,\{\SM_k\}_{k=1}^N\rangle$,
    $\MM=\langle \SM,\AM,R,\PM,\gamma\rangle$ is its related MDP, and $Q^\star$ is its optimal Q function satisfying Eq. \ref{pre:optimBellman}.
    Then $Q^\star$ is the unique stationary point of $\Tfed$.
\end{lem}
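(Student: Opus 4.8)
The plan is to establish the claim in two directions. First I would show that $Q^\star$ is a stationary point of $\Tfed$, i.e. $\Tfed(Q^\star) = Q^\star$; then, since Lemma \ref{lem:Tfed_contraction} already gives that $\Tfed$ is a contraction mapping (with modulus $\gamma_\text{fed} < 1$), the Banach fixed-point theorem immediately yields uniqueness, so no separate uniqueness argument is needed.

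For the forward direction, fix $k \in [N]$ and consider the local operator $\Tfed^k$ evaluated at $Q^\star$. By definition $\Tfed^k(Q^\star)$ is the optimal Q function of the augmented local MDP $\MM_{r,k}$ built with $\tilde V = V^\star$, where $V^\star(s) = \max_a Q^\star(s,a)$. I would verify that the restriction of $Q^\star$ to $\SM_k \times \AM$ already satisfies the Bellman optimality equation \eqref{ana:Tfed_local_optimal} for $\MM_{r,k}$: substituting $Q_{r,k} = Q^\star$ and $V_r = V^\star$ into the right-hand side of \eqref{ana:Tfed_local_optimal} gives
\begin{align*}
    \RM(s,a) + \gamma\EB_{\PM}\Big[\mathds{1}_k(s')\max_{a'}Q^\star(s',a') + (1-\mathds{1}_k(s'))V^\star(s')\Big]
    = \RM(s,a) + \gamma\EB_{\PM}\big[V^\star(s')\big],
\end{align*}
since $\max_{a'}Q^\star(s',a') = V^\star(s')$ on both branches; and the latter equals $\TM(Q^\star)(s,a) = Q^\star(s,a)$ by Eq. \eqref{pre:optimBellman} and the fixed-point property $Q^\star = \TM(Q^\star)$. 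Because the optimal Q function of $\MM_{r,k}$ on $\SM_k \times \AM$ is the \emph{unique} solution of that Bellman equation (standard contraction argument within the local MDP, or invoke Remark 1's idempotence), this shows $\Tfed^k(Q^\star)(s,a) = Q^\star(s,a)$ for all $(s,a) \in \SM_k \times \AM$. Then aggregating,
\begin{equation*}
    \Tfed(Q^\star)(s,a) = \frac{1}{N(s)}\sum_{k=1}^N \Tfed^k(Q^\star)(s,a)\mathds{1}_k(s) = \frac{1}{N(s)}\sum_{k=1}^N Q^\star(s,a)\mathds{1}_k(s) = Q^\star(s,a),
\end{equation*}
using $\sum_k \mathds{1}_k(s) = N(s)$. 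Hence $Q^\star$ is a stationary point.

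Combining the two parts: $Q^\star$ is a stationary point of $\Tfed$, and since $\Tfed$ is a $\gamma_\text{fed}$-contraction on $\RB^{|\SM||\AM|}$ (a complete metric space under $\|\cdot\|_\infty$), its stationary point is unique, so it must coincide with $Q^\star$. I expect the main obstacle to be the careful bookkeeping in the forward direction: one must be precise that $\Tfed^k(Q^\star)$ denotes the optimal Q function of the $\tilde V = V^\star$ augmented MDP (not some arbitrary one), and that the value $\Tfed^k(Q^\star)$ is only pinned down on $\SM_k \times \AM$ — which is exactly the domain on which the aggregation in \eqref{Alg:FedQ} reads it through the indicator $\mathds{1}_k(s)$ — so the argument does not need to say anything about the absorbing state $s_{null}$ or about states outside $\SM_k$. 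Everything else is routine.
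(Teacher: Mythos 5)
Your proposal is correct and follows essentially the same route as the paper: verify that $Q^\star$ (with boundary values $V^\star$) satisfies the Bellman optimality equation of the local MDP built from $\tilde V = V^\star$, invoke uniqueness of that local fixed point to get $\Tfed^k(Q^\star)=Q^\star$ on $\SM_k\times\AM$, aggregate via $\sum_k\mathds{1}_k(s)=N(s)$, and obtain uniqueness from the contraction property of $\Tfed$. The only cosmetic difference is that the paper packages the extension of $Q^\star$ outside $\SM_k$ into an explicit auxiliary function $Q^\star_k$, whereas you simply note the aggregation never reads those entries.
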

Therefore, we justify the correctness of our proposed communication protocol, FedQ.
As long as the RL oracle $X$ is accurate enough, global Q functions aggregated during communication rounds converge as if a super agent with access to the global state space applies optimistic Bellman operator in updating its Q functions.

\subsection{Sample Complexity of FedQ-X}
Combining the communication protocol with any feasible RL oracle $X$, we get algorithms for federated control problems as FedQ-X.
Even when the RL oracle $X$ is unable to derives the exact solution shown in \ref{ana:Tfed_local_optimal}, FedQ-X is also guaranteed to converge to the optimal Q functions.
\begin{lem}
    [Convergence of FedQ-X]\label{lem:converge_fedqx}
    Suppose $Q^\star$ is the Q function of the optimal policy in the MDP $\MM$, and
    the RL oracle $X$ provides sub-optimal local policies for local MDPs with entrywise $\varepsilon_r$-accurate Q functions at the $r$-th communication round.
    Then the global Q function $Q_R$ in FedQ-X satisfies
    \begin{equation*}
        \begin{aligned}
            \limsup_{R\rightarrow\infty}\|Q^\star-Q_R\|_\infty&\leq\limsup_{R\rightarrow\infty}\sum_{r=1}^R\gamma_{\text{fed}}^{R-r}\varepsilon_r\\ &\leq\frac{1}{1-\gamma_{\text{fed}}}\limsup_{R\rightarrow\infty}\varepsilon_R.
        \end{aligned}
    \end{equation*}
    Furthermore, if $\varepsilon_R\to0$, we have $Q_R$ converges to $Q^\star$.
\end{lem}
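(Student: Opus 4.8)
The plan is to view FedQ-X as an \emph{inexact} fixed-point iteration for the federated Bellman operator $\Tfed$ and to run the standard perturbation argument, exploiting that $\Tfed$ is a $\gamma_{\text{fed}}$-contraction (Lemma \ref{lem:Tfed_contraction}) whose unique fixed point is $Q^\star$ (Lemma \ref{lem:Tfed_fixpoint}). First I would translate the oracle guarantee into a per-round aggregation bound. At round $r$ the oracle returns local Q functions $Q_{r-1,k}$ that are entrywise $\varepsilon_r$-accurate for the optimal local Q function, i.e. $\max_{(s,a)\in\SM_k\times\AM}|Q_{r-1,k}(s,a)-\Tfed^k(Q_{r-1})(s,a)|\le\varepsilon_r$, since by definition $\Tfed^k(Q_{r-1})$ is the exact solution of the Bellman optimality equation \eqref{ana:Tfed_local_optimal}. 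The aggregation rule \eqref{Alg:FedQ} forms, for each $(s,a)$, a convex combination over exactly the agents $k$ with $s\in\SM_k$, and $\Tfed(Q_{r-1})(s,a)$ is the same convex combination of the $\Tfed^k(Q_{r-1})(s,a)$; hence the $\ell_\infty$ error is not inflated and $\|Q_r-\Tfed(Q_{r-1})\|_\infty\le\varepsilon_r$.

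Next I would set $e_r:=\|Q_r-Q^\star\|_\infty$ and use $Q^\star=\Tfed(Q^\star)$ together with the triangle inequality and Lemma \ref{lem:Tfed_contraction} to get $e_r\le\|Q_r-\Tfed(Q_{r-1})\|_\infty+\|\Tfed(Q_{r-1})-\Tfed(Q^\star)\|_\infty\le\varepsilon_r+\gamma_{\text{fed}}\,e_{r-1}$. Unrolling this recursion yields $e_R\le\gamma_{\text{fed}}^R e_0+\sum_{r=1}^R\gamma_{\text{fed}}^{R-r}\varepsilon_r$, and since $\gamma_{\text{fed}}<1$ the term $\gamma_{\text{fed}}^R e_0$ vanishes as $R\to\infty$, giving the first claimed inequality $\limsup_{R\to\infty}\|Q^\star-Q_R\|_\infty\le\limsup_{R\to\infty}\sum_{r=1}^R\gamma_{\text{fed}}^{R-r}\varepsilon_r$.

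For the second inequality I would prove the elementary fact that for $\gamma_{\text{fed}}\in[0,1)$ and any bounded nonnegative sequence $(\varepsilon_r)$ with $L:=\limsup_r\varepsilon_r$, one has $\limsup_{R\to\infty}\sum_{r=1}^R\gamma_{\text{fed}}^{R-r}\varepsilon_r\le L/(1-\gamma_{\text{fed}})$. Fix $\delta>0$, pick $r_0$ with $\varepsilon_r\le L+\delta$ for all $r\ge r_0$, and split the sum: the head $\sum_{r<r_0}\gamma_{\text{fed}}^{R-r}\varepsilon_r$ is at most $\gamma_{\text{fed}}^{R-r_0+1}$ times a constant independent of $R$, hence tends to $0$; the tail is at most $(L+\delta)\sum_{r\ge r_0}\gamma_{\text{fed}}^{R-r}\le(L+\delta)/(1-\gamma_{\text{fed}})$. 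Letting $R\to\infty$ then $\delta\to0$ gives the bound, and combined with the previous paragraph this yields $\limsup_{R\to\infty}\|Q^\star-Q_R\|_\infty\le\frac{1}{1-\gamma_{\text{fed}}}\limsup_{R\to\infty}\varepsilon_R$. In particular, if $\varepsilon_R\to0$ then $L=0$, so $e_R\to0$, i.e. $Q_R\to Q^\star$.

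All of the computations are routine; there is no serious obstacle. The only points requiring a little care are the first step — verifying that averaging over the state-dependent set of agents containing $s$ is nonexpansive in $\|\cdot\|_\infty$ and that $\Tfed(Q_{r-1})$ is literally that same average of the $\Tfed^k(Q_{r-1})$ — and the $\varepsilon$--$r_0$ splitting in the last step. I would also pay attention to index bookkeeping: which $\varepsilon$ attaches to which round in Algorithm \ref{Alg:FedQ_full}, and the fact that $Q_{r-1,k}$ and $\Tfed^k(Q_{r-1})$ are only defined on $\SM_k\times\AM$, so the per-round error must be taken as a maximum over $(s,a)\in\SM_k\times\AM$ and over the contributing agents.
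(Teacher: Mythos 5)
Your proposal is correct and follows essentially the same route as the paper: the paper likewise writes the per-round update as an inexact application of $\Tfed$, bounds $\|Q^\star-Q_R\|_\infty\le\gamma_{\text{fed}}\|Q^\star-Q_{R-1}\|_\infty+\varepsilon_R$ via the triangle inequality and the contraction property, unrolls the recursion, and uses the identical $\epsilon$--$r_0$ tail-splitting argument for the second inequality. The only cosmetic difference is that the paper takes the maximum over agents of the local errors $\|\Tfed^k(Q_{R-1})-Q_{R-1,k}\|_{\SM_k}$ before invoking contraction, whereas you first absorb the averaging into a single global bound $\|Q_r-\Tfed(Q_{r-1})\|_\infty\le\varepsilon_r$; these are equivalent.
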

Additionally, we manage to provide the following upper bound for sample complexity of FedQ-X.
\begin{asmp} 
    [Sample Complexity of the oracle X]
    \label{ana:X_sample}
    Suppose the target MDP is formulated as $\langle \SM,\AM,R,\PM,\gamma\rangle$, and $Q^\star$ is the Q function of the optimal policy.
    Consider for any $\delta\in(0,1)$ and $\varepsilon\in(0,\frac{1}{1-\gamma})$.
    In order to produce a policy whose Q function $Q$ satisfies $\|Q^\star-Q\|_\infty\leq\varepsilon$ with probability at least $1-\delta$, the RL oracle $X$ requires following sample complexity
    \begin{equation*}
        \tilde{O}(\frac{|\SM||\AM|}{\varepsilon^2(1-\gamma)^m})
    \end{equation*}
    where $m\geq 3$ and $\tilde{O}$ ignores logarithmic terms.
\end{asmp}
\textit{\textbf{Remark}} 3: Sample complexity of $X$ is lower bounded by $\tilde{O}(\frac{|\SM||\AM|}{\varepsilon^2(1-\gamma)^3})$, \ie, $m=3$, which is achieved by variance-reduced Q-learning \cite{wainwright2019variance} and other model-based methods \cite{agarwal2020model, li2023breaking}.

\begin{thm}
    [Sample Complexity of FedQ-X]
    \label{Thm:SampleComplexity_FedQX}
    Suppose the federated control problem is formulated as $\langle \SM,\AM,\RM,\PM,\gamma,\{\SM_k\}_{k=1}^N\rangle$, $Q^\star$ is the optimal Q function of the corresponding MDP, and $p_{\max}^k>0$ for all $k\in[N]$.
    Consider the RL oracle $X$ satisfies Assumption \ref{ana:X_sample}, and the initialization of global Q function obeys $0\leq Q_0(s,a)\leq\frac{1}{1-\gamma},\forall (s,a)\in\SM\times\AM$.
    FedQ-X requires $R=\tilde{O}(\frac{p_{\max}}{1-\gamma})$ rounds of communication and the $k$-th agent requires $$\tilde{O}\big(\frac{p_{\max}^3|\SM_k||\AM|}{\varepsilon^2(1-\gamma)^{m+3}}\big)$$ samples to achieve $\|Q^\star-Q_R\|\leq\varepsilon$ with probability at least $1-\delta$.
\end{thm}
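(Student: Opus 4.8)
The plan is to combine the convergence rate of the federated Bellman operator $\Tfed$ (Lemmas \ref{lem:Tfed_contraction}, \ref{lem:Tfed_fixpoint}) with the per-round error-propagation bound of Lemma \ref{lem:converge_fedqx} and the oracle guarantee of Assumption \ref{ana:X_sample}, being careful to track how accuracy requirements at each round feed into the sample count. First I would use $\gamma_\text{fed}\le\gamma_\text{fed}^k=\frac{\gamma p_{\max}^k}{1-\gamma(1-p_{\max}^k)}$ — and, writing $p_{\max}:=\max_k p_{\max}^k$, the uniform bound $\gamma_\text{fed}\le \frac{\gamma p_{\max}}{1-\gamma(1-p_{\max})}$ — to estimate $1-\gamma_\text{fed}$. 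A short computation gives $1-\gamma_\text{fed}\ge \frac{1-\gamma}{1-\gamma(1-p_{\max})}\ge\frac{1-\gamma}{p_{\max}}$ (using $1-\gamma(1-p_{\max})=1-\gamma+\gamma p_{\max}\le p_{\max}$ is false in general, so more carefully $1-\gamma(1-p_{\max})\le 1$, giving $1-\gamma_\text{fed}\ge 1-\gamma$; the sharper bound uses $1-\gamma(1-p_{\max})\ge 1-\gamma$, hence $\frac{1}{1-\gamma_\text{fed}}\le \frac{1-\gamma(1-p_{\max})}{1-\gamma}\le\frac{1}{1-\gamma}$, and conversely $\gamma_\text{fed}\le 1-(1-\gamma)/1$). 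The key quantitative point I want is $\frac{1}{1-\gamma_\text{fed}}=O\!\big(\frac{1}{1-\gamma}\big)$ uniformly, together with $\gamma_\text{fed}\le 1-c(1-\gamma)/p_{\max}$ for a constant $c$, which is what yields $R=\tilde O\!\big(\frac{p_{\max}}{1-\gamma}\big)$: taking $R$ rounds with $\gamma_\text{fed}^R\cdot\|Q^\star-Q_0\|_\infty\le \gamma_\text{fed}^R/(1-\gamma)$ driven below $\varepsilon/2$ requires $R\gtrsim \frac{1}{1-\gamma_\text{fed}}\log\frac{1}{\varepsilon(1-\gamma)}=\tilde O\!\big(\frac{p_{\max}}{1-\gamma}\big)$.

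Next I would choose the per-round oracle accuracy. By Lemma \ref{lem:converge_fedqx}, if the oracle returns $\varepsilon_r$-accurate local Q functions at round $r$, then $\|Q^\star-Q_R\|_\infty\le \gamma_\text{fed}^R\|Q^\star-Q_0\|_\infty+\sum_{r=1}^R\gamma_\text{fed}^{R-r}\varepsilon_r$ (the $\gamma_\text{fed}^R$-decay of the initial error comes from iterating the contraction; Lemma \ref{lem:converge_fedqx} as stated gives the $R\to\infty$ tail, but the finite-$R$ version is immediate from the same recursion $\|Q^\star-Q_{r}\|_\infty\le\gamma_\text{fed}\|Q^\star-Q_{r-1}\|_\infty+\varepsilon_r$). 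Setting $\varepsilon_r\equiv \varepsilon'$ for all $r$ gives total error $\le\gamma_\text{fed}^R/(1-\gamma)+\varepsilon'/(1-\gamma_\text{fed})$. Picking $\varepsilon'=\Theta\big((1-\gamma_\text{fed})\varepsilon\big)=\Theta\big((1-\gamma)\varepsilon/p_{\max}\big)$ — wait, I should double check the direction: since $\frac{1}{1-\gamma_\text{fed}}\le\frac{1}{1-\gamma}$ I may simply take $\varepsilon'=\Theta((1-\gamma)\varepsilon)$, and with $R=\tilde O(\frac{p_{\max}}{1-\gamma})$ both terms are $\le\varepsilon/2$. Now invoke Assumption \ref{ana:X_sample} applied to the $k$-th local MDP $\MM_{r,k}$, whose state space has size $|\SM_k|+1=\tilde O(|\SM_k|)$ and action space $|\AM|$: producing an $\varepsilon'$-accurate policy costs $\tilde O\!\big(\frac{|\SM_k||\AM|}{(\varepsilon')^2(1-\gamma)^m}\big)=\tilde O\!\big(\frac{p_{\max}^2|\SM_k||\AM|}{\varepsilon^2(1-\gamma)^{m+2}}\big)$ per round. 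Multiplying by $R=\tilde O(\frac{p_{\max}}{1-\gamma})$ rounds yields the claimed $\tilde O\!\big(\frac{p_{\max}^3|\SM_k||\AM|}{\varepsilon^2(1-\gamma)^{m+3}}\big)$ samples for agent $k$. A union bound over the $NR$ oracle calls, absorbed into the $\tilde O$, delivers the overall $1-\delta$ success probability.

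Two technical points need care. The subtle step is verifying that Assumption \ref{ana:X_sample} legitimately applies to each augmented local MDP $\MM_{r,k}$: one must check the oracle's hypothesis $\varepsilon'\in(0,\frac{1}{1-\gamma})$ holds, that the effective discount factor is still $\gamma$ (it is, by construction of $\PM_k$), and — most importantly — that the "optimal $Q$" the oracle converges to inside $\MM_{r,k}$ is exactly $\Tfed^k(Q_r)$, so that an $\varepsilon'$-accurate oracle output is an $\varepsilon'$-accurate estimate of $\Tfed^k(Q_r)$ in the sense required by Lemma \ref{lem:converge_fedqx}; this is where Eq. \eqref{ana:Tfed_local_optimal} is used. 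The second point is the constraint $\varepsilon<\frac{1}{1-\gamma}$ and ensuring the chosen $\varepsilon'$ does not violate the oracle's validity range — since $\varepsilon'\le\varepsilon<\frac{1}{1-\gamma}$, this is automatic. The main obstacle I anticipate is bookkeeping the interaction between the number of rounds $R$ and the required accuracy $\varepsilon'$: because $R$ depends on $1-\gamma_\text{fed}$ and $\varepsilon'$ must scale with $1-\gamma_\text{fed}$ to control the geometric-sum error, one has to be sure the $\tilde O$ hides only logarithmic factors in $\frac{1}{\varepsilon}$, $\frac{1}{1-\gamma}$, $\frac{1}{\delta}$, $N$, and not an extra polynomial factor in $p_{\max}$ — which is why the sharp estimate $\frac{1}{1-\gamma_\text{fed}}=\Theta\!\big(\frac{\text{something}}{1-\gamma}\big)$ rather than a looser bound matters, and I would isolate that estimate as a preliminary computation before assembling the final count.
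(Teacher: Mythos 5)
Your proposal is correct and follows essentially the same route as the paper's proof: the finite-$R$ recursion $\|Q^\star-Q_R\|_\infty\le\gamma_\text{fed}^R/(1-\gamma)+\sum_{r}\gamma_\text{fed}^{R-r}\varepsilon_r$, the identity $\frac{1}{1-\gamma_\text{fed}}=1+\frac{\gamma p_{\max}}{1-\gamma}$ to get $R=\tilde O(p_{\max}/(1-\gamma))$, a constant per-round accuracy $\varepsilon_r=\Theta(\varepsilon(1-\gamma_\text{fed}))$, and a union bound over the oracle calls. One aside: your alternative choice $\varepsilon'=\Theta((1-\gamma)\varepsilon)$ is also legitimate (since $1-\gamma_\text{fed}\ge 1-\gamma$) and would in fact yield the sharper count $\tilde O\big(p_{\max}|\SM_k||\AM|/(\varepsilon^2(1-\gamma)^{m+3})\big)$, but the computation you actually carry out uses $\varepsilon'=\Theta((1-\gamma)\varepsilon/p_{\max})$ and reproduces the paper's stated $p_{\max}^3$ bound.
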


\textit{\textbf{Remark}} 4: 
The number of communication rounds for FedQ-X depends on the value of $p_{\max}$.
This matches our previous analysis on contraction mapping $\Tfed$.

Suppose the RL oracle is synchronous Q-learning whose sample complexity is given by $\tilde{O}(\frac{|\SM||\AM|}{\varepsilon^2(1-\gamma)^4})$.
Following Theorem \ref{Thm:SampleComplexity_FedQX}, FedQ-SynQ is expected to have a sample complexity of $\tilde{O}(\frac{p_{\max}^3|\SM||\AM|}{\varepsilon^2(1-\gamma)^7})$.
We are next to show that sample complexity of FedQ-SynQ can be tightened at the expense of more rounds of communication, which matches theoretical results of \cite{woo2023blessing} and does not contradict with Theorem \ref{Thm:SampleComplexity_FedQX}.
\begin{thm}
    [Sample Complexity of FedQ-SynQ]\label{Thm:samplecomplexity_fedsynq}
    Suppose a federated control problem is formulated as $\langle \SM,\AM,\RM,\PM,\gamma,\{\SM_k\}_{k=1}^N\rangle$ ,
    $Q^*$ is the optimal Q function of the corresponding MDP, and $p_{\max}^k>0$ for all $k\in[N]$.
    Consider the initialization of global Q function obeys $0\leq Q_0(s,a)\leq\frac{1}{1-\gamma},\forall (s,a)\in\SM\times\AM$.
    FedQ-SynQ requires $\tilde{O}(\frac{1}{1-\gamma}\max\{\frac{1}{1-\gamma},N_{\min}\})$ rounds of communication and the $k$-th agent requires
    $$
    \tilde{O}\big(\frac{|\SM_k||\AM|}{N_{\min}\varepsilon^2(1-\gamma)^5}\big)
    $$
    samples to achieve $\|Q^\star-Q_R\|\leq\varepsilon$ with probability at least $1-\delta$, where $N_{\min}:=\min_{s\in\SM}N(s)$.
\end{thm}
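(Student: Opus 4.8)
The plan is to bypass the black-box reduction of Theorem~\ref{Thm:SampleComplexity_FedQX}: feeding synchronous Q-learning through Assumption~\ref{ana:X_sample} wastes powers of $1-\gamma$ because every round is forced to solve its local MDP to uniform $\varepsilon$-accuracy. Instead I would open up synchronous Q-learning inside each round and exploit that the $N$ agents sample from independent generative models, so the per-agent statistical errors are damped by a factor $N(s)\ge N_{\min}$ after the $1/N(s)$-averaging in Eq.~\eqref{Alg:FedQ} --- the ``blessing of federation'' effect of \cite{woo2023blessing}, here transported to heterogeneous $\{\SM_k\}$, where heterogeneity enters only through the overlap count $N(s)$.

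Step 1 (per-round decomposition). Fix round $r$; each agent runs $E$ synchronous Q-learning updates \eqref{pre:SynQ_update} with a rescaled-linear stepsize in $\MM_{r,k}$, warm-started from $Q_r$ on $\SM_k$ and equal to $V_r$ off $\SM_k$. The Bellman optimality operator of $\MM_{r,k}$ is a $\gamma$-contraction whose fixed point equals $\Tfed^k(Q_r)$ on $\SM_k\times\AM$ by Eq.~\eqref{ana:Tfed_local_optimal} (and equals $V_r$ off $\SM_k$ exactly), so the sharp analysis of synchronous Q-learning \cite{wainwright2019variance,li2023q} gives $Q_{r,k}-\Tfed^k(Q_r)=\beta_{r,k}+\zeta_{r,k}$ on $\SM_k\times\AM$, with $\zeta_{r,k}$ a stepsize-weighted martingale sum in agent $k$'s fresh transitions (mean zero, per-coordinate variance governed by $E$, $1-\gamma$ and the one-step variance of the augmented optimal value) and $\beta_{r,k}$ a deterministic/higher-order residual of size $\gamma^{E}$ times the warm-start gap plus $o(E^{-1/2})$-type terms.

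Step 2--3 (aggregate and unroll). By Eq.~\eqref{Alg:FedQ}, $Q_{r+1}-\Tfed(Q_r)=\bar\beta_r+\bar\zeta_r$ with $\bar\zeta_r(s,a)=\tfrac1{N(s)}\sum_{k:\,s\in\SM_k}\zeta_{r,k}(s,a)$ a sum of $N(s)$ independent mean-zero terms; a Freedman/Bernstein bound then makes $\|\bar\zeta_r\|_\infty$ smaller by $1/\sqrt{N_{\min}}$ than the single-agent term, uniformly over $(s,a)$ and over all $\le R$ rounds and $\le RE$ iterations after a union bound costing only logarithms (and pinning $\delta$). Using the contraction of $\Tfed$ (Lemmas~\ref{ana:Tfedk_contraction}, \ref{lem:Tfed_contraction}) and unrolling as in Lemma~\ref{lem:converge_fedqx}, $\|Q^\star-Q_R\|_\infty \le \gamma_\text{fed}^{R}\|Q^\star-Q_0\|_\infty+\sum_{r=1}^{R}\gamma_\text{fed}^{R-r}\big(\|\bar\beta_{r-1}\|_\infty+\|\bar\zeta_{r-1}\|_\infty\big)$; the $\gamma^{E}$ factor inside $\bar\beta$ together with $\|Q_r|_{\SM_k}-\Tfed^k(Q_r)\|_\infty=O(\|Q^\star-Q_r\|_\infty)+\text{noise}$ lets the bias term be re-absorbed into the geometric sum after one further unrolling, leaving a residual of order $\tfrac1{1-\gamma_\text{fed}}$ times the averaged statistical floor (which decays in $E$ and $N_{\min}$). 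Finally pick $R$ so $\gamma_\text{fed}^{R}\|Q^\star-Q_0\|_\infty\le\varepsilon/2$ --- with $\|Q^\star-Q_0\|_\infty\le\tfrac1{1-\gamma}$ and $\tfrac1{1-\gamma_\text{fed}}\le1+\tfrac{p_{\max}}{1-\gamma}$, plus the extra slack needed for an $E$-step local solve to still realize an effective per-round $\gamma_\text{fed}$-contraction, this gives $R=\tilde O\big(\tfrac1{1-\gamma}\max\{\tfrac1{1-\gamma},N_{\min}\}\big)$ --- and pick $E$ so the averaged floor is $\le\varepsilon/2$; multiplying $R\cdot E\cdot|\SM_k||\AM|$ (one sweep of $\SM_k\times\AM$ per local iteration) yields the stated per-agent sample bound.

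The crux is Steps~2--3: isolating a genuinely mean-zero leading term in the synchronous Q-learning iterate so that cross-agent independence truly buys the $1/N_{\min}$ reduction, while showing the non-averaging pieces --- the $\gamma^{E}$ bias and the higher-order moment remainders, which also carry the variance-of-optimal-value quantity of the \emph{augmented} MDP --- stay dominated for the chosen $E$. The secondary difficulty is the moving, agent-dependent target: $\MM_{r,k}$ varies with $r$ through $V_r$ and with $k$ through $\SM_k$, so the warm start entering round $r+1$ is never the exact fixed point of the next local operator; controlling this mismatch by $\|Q^\star-Q_r\|_\infty$ and folding it into the geometric sum without inflating constants is precisely what forces the $\max\{\tfrac1{1-\gamma},N_{\min}\}$ factor in the round count above the naive $\tfrac1{1-\gamma}$.
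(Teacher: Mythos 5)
Your overall strategy---opening up synchronous Q-learning, isolating a mean-zero martingale whose variance is cut by the $1/N(s)$-averaging in Eq.~\eqref{Alg:FedQ}, and handling the drift recursively---is the right one, and it is essentially what the paper does. But your specific decomposition is round-level where the paper's is iteration-level, and this is not a cosmetic difference: it costs a factor of $\tfrac{1}{1-\gamma}$ in the worst case and puts you in the wrong regime for $E$. Concretely, to make each communication round realize an effective $\gamma_{\text{fed}}$-contraction you need the deterministic part of the $E$-step local solve to shrink the warm-start gap $\|Q_r-\Tfed^k(Q_r)\|_\infty$ below roughly $\tfrac{1-\gamma_{\text{fed}}}{4}$ of itself; since the local iteration contracts toward its fixed point only at rate $(1-\eta(1-\gamma))$ per step, this forces $\eta E\gtrsim\tfrac{1}{1-\gamma}\log\tfrac{1}{1-\gamma_{\text{fed}}}$. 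Combined with $R=\tilde{O}(\tfrac{1}{1-\gamma_{\text{fed}}})$ rounds and the noise accumulation $\sum_r\gamma_{\text{fed}}^{R-r}\|\bar\zeta_r\|_\infty\lesssim\tfrac{1}{1-\gamma_{\text{fed}}}\cdot\tfrac{1}{1-\gamma}\sqrt{\eta/N_{\min}}$, you get $\eta\lesssim\varepsilon^2N_{\min}(1-\gamma)^2(1-\gamma_{\text{fed}})^2$ and $T=RE=\tilde{O}\big(\tfrac{1}{\varepsilon^2N_{\min}(1-\gamma)^3(1-\gamma_{\text{fed}})^3}\big)$, which is $\tilde{O}\big(\tfrac{1}{\varepsilon^2N_{\min}(1-\gamma)^6}\big)$ when $p_{\max}$ is close to $1$---one power of $1-\gamma$ worse than the theorem. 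Also note that your $\beta_{r,k}$ must absorb the non-mean-zero drift $\eta\gamma\sum_i(1-\eta)^{E-i}P(V^\star-V_{i-1,k})$, which is of order $\gamma\|Q^\star-Q_r\|_\infty$, not ``$\gamma^E$ times the warm-start gap plus $o(E^{-1/2})$''; it is only small relative to the current error, which is exactly why the per-round contraction argument is the binding constraint.

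The paper avoids this loss by never asking a round to approximate its local fixed point. It runs one error decomposition $\Delta_t=E_t^1+E_t^2+E_t^3$ over all $T=ER$ iterations: $E_t^2$ is a single Freedman-bounded martingale across all iterations and all agents (this is where $1/\sqrt{N_{\min}}$ enters), and $E_t^3$ yields a windowed recursion $\|\Delta_t\|_\infty\le M+\tfrac{1+\gamma}{2}\max_{\iota(t)-\beta E\le i\le t-1}\|\Delta_i\|_\infty$ in which a constant-factor contraction is obtained every $\tilde{O}(1/\eta)$ iterations---because the drift is $\gamma$ times a geometrically weighted \emph{average} of past errors---rather than every $\tilde{O}(\tfrac{1}{\eta(1-\gamma)})$ iterations. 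This gives $T=\tilde{O}(\tfrac{1}{\eta(1-\gamma)})$ with $\eta=\tilde{\Theta}(N_{\min}\varepsilon^2(1-\gamma)^4)$, hence the stated $(1-\gamma)^5$. Crucially, the paper's $E$ is capped \emph{above}, $\eta(E-1)\lesssim\min\{1-\gamma,\tfrac{1}{N_{\min}}\}$, to control the client drift $\|Q_{i,k}-Q_{\iota(i)}\|_{\SM_k}$ between synchronizations; the claimed round count $R=T/E=\tilde{O}\big(\tfrac{1}{1-\gamma}\max\{\tfrac{1}{1-\gamma},N_{\min}\}\big)$ comes from this cap, not from the warm-start mismatch you invoke. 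Your plan wants $E$ large and $R$ small, so even if carried through it would prove a statement with different (and, for the sample complexity, weaker) parameters than Theorem~\ref{Thm:samplecomplexity_fedsynq}. To close the gap you would need to replace the per-round unrolling of Lemma~\ref{lem:converge_fedqx} with a recursion at the granularity of individual iterations, as in \cite{woo2023blessing}.
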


It is worth noting that the dependence of sample complexity for the $k$-th agent on the size of its restricted region $|\SM_k|$.
FedQ-X, including FedQ-SynQ, achieves linear speedup w.r.t. sample complexity for any agent, when the workload is uniformly distributed among agents, \ie,  $|\SM_k|=O(|\SM|/N)$.
Specifically, each of agents utilizing FedQ-X enjoys a sample complexity of $\tilde{O}\big(\frac{p_{\max}^3|\SM||\AM|}{N\varepsilon^2(1-\gamma)^{m+3}}\big)$.

\begin{figure*}[t]
    \centering
    \includegraphics[width=0.6\textwidth]{./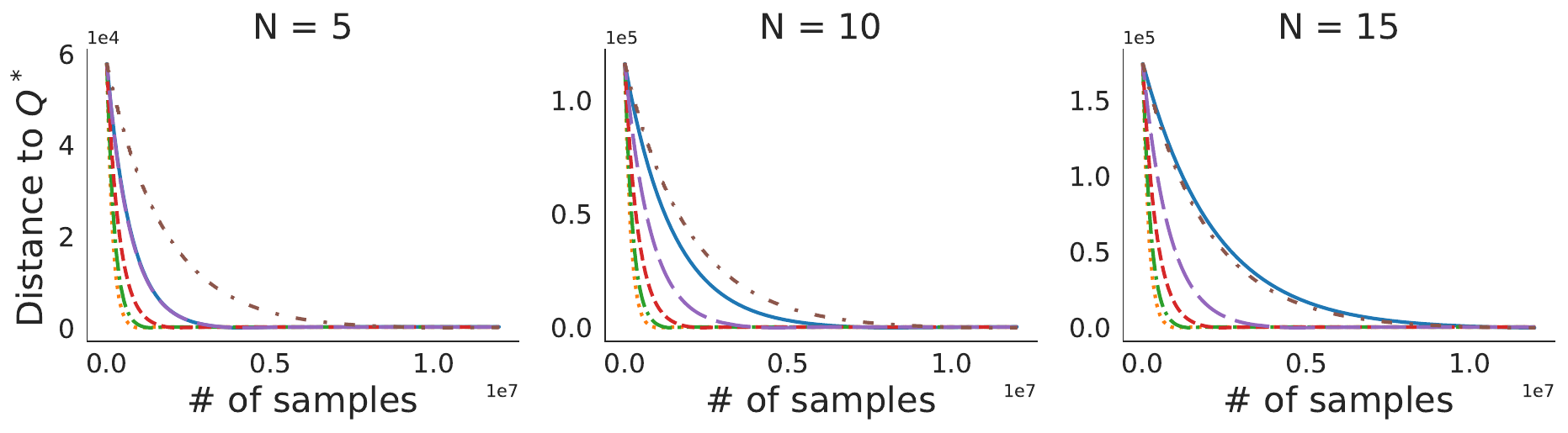}
    \includegraphics[width=0.8\textwidth]{./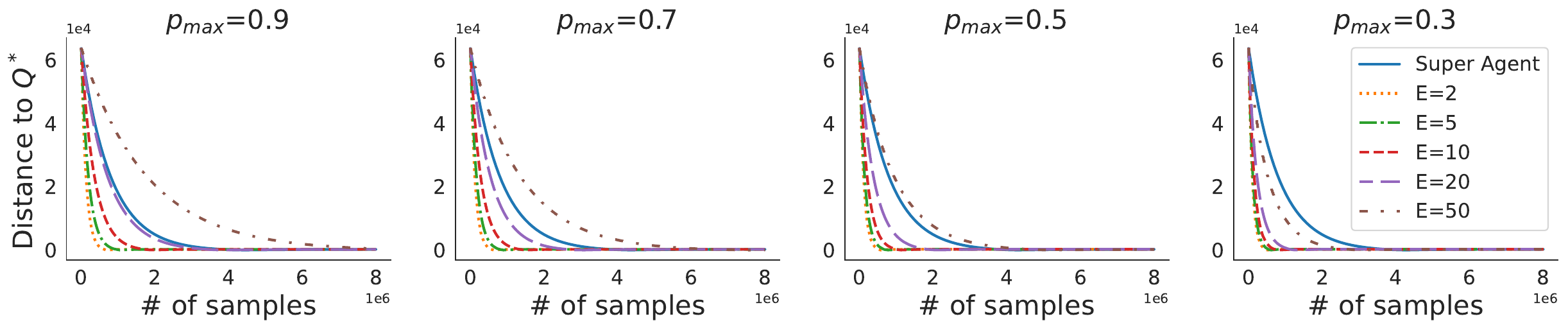}
    \includegraphics[width=0.8\textwidth]{./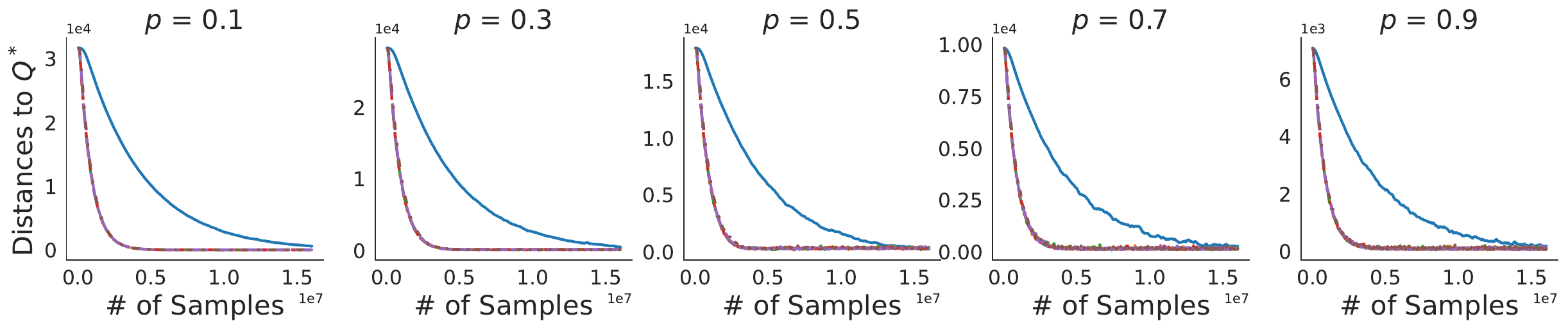}
    %\vspace{-0.1in}
    \caption{Convergence of FedQ-SynQ: the first row is in RandomMDP with different numbers of $N$, the second row is in RandomMDP with different  values of $p_{\max}$, and the third row is in WindyCliff with different values of wind power $p$.}
    \label{fig:FedQ-SyncQ}
\end{figure*}

\section{Numerical Experiments}
In this section, we conduct numerical experiments in tabular environments to examine the convergence of FedQ as well as sample complexity of FedQ-SynQ.

\subsection{The Set-up}
\textbf{Environments.} 
We construct two tabular environments with various settings of $\{S_k\}_{k=1}^N$ and transition dynamics $\PM$.
Specifically, RandomMDP with randomly generated transition dynamics, and WindyCliff with a navigation task.
Detailed description is left in Appendix \ref{App:env_construction}.

\textbf{Control of Heterogeneity.} 
In environments of RandomMDP, we generate instances with similar degrees of heterogeneity via controlling $(N,K_S,E_S,N_S)$ where 
\begin{align*}
    K_S = \#\{s:s\in S_k,N(s)=1\},&\forall k \in\{1,\dots,N\};\\
    E_S = \#\{s:s\in S_k,N(s)>1\},&\forall k \in\{1,\dots,N\};\\
    N(s)\in\{1,N_S\},&\forall s\in\SM
\end{align*}
where $K_S$ represents number of states an agent has unique access to, $E_S$ represents number of states it shares with other agents, and $N_S$ characterizes how many different agents are able to share access to a single state.
Additionally, we introduce $p_{\max}$ in some cases of RandomMDP to control the degree of connection among different regions
\begin{equation*}
    p_{\max}=\sum_{s'\notin \SM_k}\PM(s'|s,a),\forall (s,a)\in\SM_k\times\AM,\forall k\in\{1,\dots,N\}.
\end{equation*}
In environments of WindyCliff, the state space is split either horizontally or vertically.
Therefore, either actions of the agent or the unexpected wind results in transitions among different regions.
Both the splitting direction and the power of wind affect the degree of connection among $\{\SM_k\}_{k=1}^N$.

\textbf{Explanation on Baselines.}
To justify the speedup effect of FedQ-SynQ, we introduce single-agent synchronous Q-learning as baseline algorithms.
Specifically, a \textbf{super agent} is assumed to have access to the overall state space $\SM$, which means it generates samples for all state-action pairs every iteration.
Therefore, the super agent in fact has $N$-times larger sample complexity than a local agent with $|\SM_k|=|\SM|/N$ in a single iteration.
In this way, we compare sample complexity of the super agent and local agents in converging to the globally optimal Q function $Q^\star$.

\textbf{Other details.} We leave choices of hyper-parameters in constructing environments and the selection of learning rates in Appendix \ref{App:env_construction} and \ref{App:exper_hyper}.

\begin{figure}[!htb]
    \centering
    \vspace{-0.1in}
    \includegraphics[width=0.3\textwidth]{./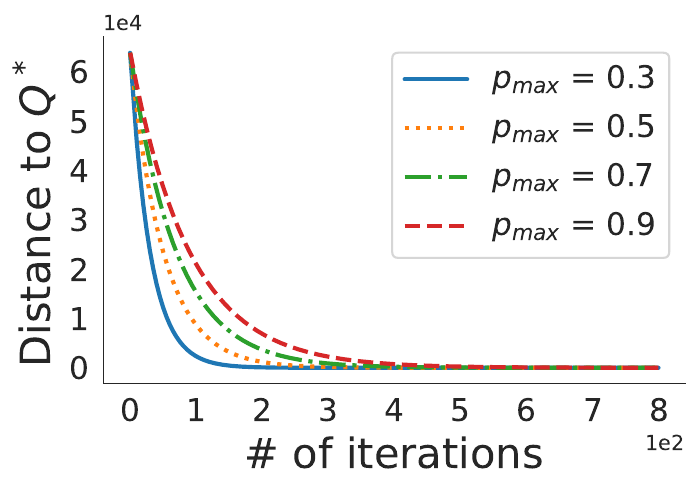}
    %\vspace{-0.1in}
    \caption{Iteration complexity of FedQ in different orthogonal cases of RandomMDP with $(N,K_S,E_S)=(10,20,0)$.}
    \vspace{-0.1in}
    \label{fig:pmax_exact}
\end{figure}

\subsection{Iteration Complexity of FedQ}
To justify the iteration complexity of FedQ, we conduct experiments in environments of RandomMDP with different settings of $p_{\max}$.
Specifically, these experiments require local agents to exactly solve the optimal policy in their local MDPs.
Therefore, Fig. \ref{fig:pmax_exact} unveils the relationship between iteration complexity and degrees of connection represented by $p_{\max}$.
Smaller $p_{\max}$ indicates more closed transition dynamics within local regions, which leads to a faster convergence of FedQ.

\subsection{Sample Complexity of FedQ-SynQ}
We have evaluated FedQ-SynQ with different choices of $E$ in RandomMDP and WindyCliff with different settings.
Specifically, Figure \ref{fig:FedQ-SyncQ} demonstrates results on three sets of environments: RandomMDP with different numbers of $N$, RandomMDP with different values of $p_{\max}$, and WindyCliff with different values of wind power.
Performance of FedQ-SynQ under more settings of these environments are left to Appendix \ref{App:exper_more}.
The first row reveals that a larger number of $N$ enlarges the speedup effect of FedQ-SynQ, the second row shows that FedQ-SynQ achieves faster convergence in a more closed dynamics with smaller $p_{\max}$, and the third row justifies the efficiency of FedQ-SynQ in WindyCliff with more realistic separation of state space.
\vspace{-0.1in}
\section{Conclusion}
We have studied problems of federated control with heterogeneity among state spaces accessible for different agents.
To quantify the influence of such heterogeneity on the learning process, concepts of leakage probabilities are introduced to depict degrees of connection among different regions.
For the collaboration learning of globally optimal policy, we propose FedQ as a simple but attractive communication protocol, and theoretically justify its correctness.
Moreover, we consider FedQ-X with any feasible RL oracle $X$, proves its linear speedup w.r.t. sample complexity in converging to optimal policies, and unveils relationship between its iteration complexity and leakage probabilities of federated control problems.

\bibliographystyle{plainnat}
\bibliography{refer}

\appendix
\newpage
\section{Properties of Federated Bellman Operator}\label{Appendix:FedOp}
\begin{lem}\label{lem:max}
For any two bounded real functions $f,~g$, we have the following inequality relationship on the absolute difference of their maximum:
    \begin{equation*}
        |\max_x f(x)-\max_x g(x)|\leq\max_x |f(x)-g(x)|.
    \end{equation*}
\end{lem}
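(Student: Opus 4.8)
\textbf{Proof proposal for Lemma \ref{lem:max}.}
The plan is to exploit the symmetry of the statement in $f$ and $g$ and reduce to a one-sided bound. Without loss of generality I would assume $\max_x f(x)\geq\max_x g(x)$, so that $|\max_x f(x)-\max_x g(x)|=\max_x f(x)-\max_x g(x)$; the reverse case is identical after swapping the roles of $f$ and $g$. It then suffices to show $\max_x f(x)-\max_x g(x)\leq\max_x|f(x)-g(x)|$.

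For this, I would pick a point $x^\star$ attaining $\max_x f(x)$ (since $f$ is bounded, if the supremum is not attained one takes an $\varepsilon$-approximate maximizer and lets $\varepsilon\to 0$ at the end; in the applications of this paper the domain is the finite set $\SM\times\AM$, so the maximum is genuinely attained). The key chain of inequalities is then
\begin{equation*}
    \max_x f(x)-\max_x g(x) = f(x^\star)-\max_x g(x)\leq f(x^\star)-g(x^\star)\leq |f(x^\star)-g(x^\star)|\leq\max_x|f(x)-g(x)|,
\end{equation*}
where the first inequality uses $\max_x g(x)\geq g(x^\star)$ and the last uses that $x^\star$ is one particular point in the domain. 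Combining this with the WLOG reduction of the first paragraph yields the claim.

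There is essentially no hard step here; the only point requiring a word of care is the existence of a maximizer for a merely bounded function, which is handled by the $\varepsilon$-approximation remark above (or is immediate once one notes all uses of the lemma in this paper are over finite index sets). I would keep the write-up to the two short paragraphs sketched here.
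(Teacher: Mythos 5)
Your proof is correct and follows essentially the same route as the paper's: reduce by symmetry to the one-sided bound and chain $\max_x f(x)-\max_x g(x)\leq \max_x(f(x)-g(x))\leq \max_x|f(x)-g(x)|$. The only cosmetic difference is that the paper works pointwise under the supremum (bounding $f(x)-\max_y g(y)\leq f(x)-g(x)$ for every $x$ before taking $\max_x$), which sidesteps the maximizer-existence issue you handle with the $\varepsilon$-approximation remark.
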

\begin{proof}
    Without loss of generality, we assume $\max_x f(x)\geq\max_x g(x)$, we have
    \begin{equation*}
        \begin{aligned}
            |\max_x f(x)-\max_x g(x)|
            &=\max_x f(x)-\max_y g(y)\\
            &=\max_x \Big(f(x)-\max_y g(y)\Big)\\
            &\leq \max_x \Big(f(x)-g(x)\Big)\\
            &\leq \max_x \Big|f(x)-g(x)\Big|.
        \end{aligned}
    \end{equation*}
\end{proof}

\begin{lem}[Contraction property of $\Tfed^k$]\label{lem:Tfedk_contract}
    For all $k\in[N]$, the local Bellman operator $\Tfed^k$ satisfies the contraction property as follows:
    % are contraction mapping with respect to the maximum norm of matrix $Q\in\RB^{|\SM|\times|\AM|}$, \ie,
    \begin{equation*}
        \max_{s\in\SM_k,a\in\AM}\Big|\Tfed^k(Q_1)(s,a)-\Tfed^k(Q_2)(s,a)\Big|\leq \gamma_\text{fed}^k\|Q_1-Q_2\|_\infty,
    \end{equation*}
    where $\gamma_\text{fed}^k=\frac{\gamma p^k_{\max}}{1-\gamma (1-p^k_{\max})}\leq \gamma$ increasing with $p^k_{\max}$, and $\frac{1}{1-\gamma_\text{fed}^k}=1+\frac{\gamma p_{\max}^k}{1-\gamma}\leq 1+\frac{p_{\max}^k}{1-\gamma}$.
\end{lem}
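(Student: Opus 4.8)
The plan is to use the fixed-point characterization of $\Tfed^k$ recorded in Eq.~\eqref{ana:Tfed_local_optimal}. Writing $Q^{(1)} := \Tfed^k(Q_1)$ and $Q^{(2)} := \Tfed^k(Q_2)$, each is the (unique) optimal Q function of its augmented local MDP, so for every $(s,a) \in \SM_k\times\AM$,
\[
Q^{(i)}(s,a) = \RM(s,a) + \gamma\,\EB_{s'\sim\PM(\cdot\mid s,a)}\!\Big[\mathds{1}_k(s')\max_{a'}Q^{(i)}(s',a') + (1-\mathds{1}_k(s'))\max_{a'}Q_i(s',a')\Big].
\]
Subtracting the $i=1$ and $i=2$ equations cancels the reward term; taking absolute values and applying Lemma~\ref{lem:max} both to $\max_{a'}Q^{(1)}(s',a')-\max_{a'}Q^{(2)}(s',a')$ (valid since such $s'$ lie in $\SM_k$) and to $\max_{a'}Q_1(s',a')-\max_{a'}Q_2(s',a')$ bounds the first branch by $\Delta := \max_{(s,a)\in\SM_k\times\AM}|Q^{(1)}(s,a)-Q^{(2)}(s,a)|$ and the second by $M := \|Q_1-Q_2\|_\infty$. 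Since the probability of exiting $\SM_k$ from $(s,a)$ is exactly $\PM_k(s,a)$, this yields
\[
|Q^{(1)}(s,a)-Q^{(2)}(s,a)| \le \gamma\big[(1-\PM_k(s,a))\,\Delta + \PM_k(s,a)\,M\big], \qquad \forall\,(s,a)\in\SM_k\times\AM.
\]

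Next I would evaluate this inequality at a pair $(s^\star,a^\star)\in\SM_k\times\AM$ attaining $\Delta$ (it exists by finiteness), and set $p^\star := \PM_k(s^\star,a^\star)$. Rearranging $\Delta \le \gamma[(1-p^\star)\Delta + p^\star M]$ and using $\gamma\in[0,1)$, so that $1-\gamma(1-p^\star)\ge 1-\gamma>0$, gives $\Delta \le h(p^\star)\,M$ with $h(p):=\frac{\gamma p}{1-\gamma(1-p)}$. The point is that $p^\star\le p_{\max}^k$: this is the definition of $p_{\max}^k$ when $(s^\star,a^\star)\in E_k$, and $p^\star=0\le p_{\max}^k$ when $(s^\star,a^\star)\in K_k$. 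Since $h'(p)=\gamma(1-\gamma)/(1-\gamma+\gamma p)^2\ge 0$, the map $h$ is nondecreasing on $[0,1]$, so $\Delta\le h(p^\star)M\le h(p_{\max}^k)M=\gamma_\text{fed}^k M$, which is the asserted contraction. The same monotonicity yields $\gamma_\text{fed}^k=h(p_{\max}^k)\le h(1)=\gamma$ and that $\gamma_\text{fed}^k$ increases with $p_{\max}^k$; finally, from $1-\gamma_\text{fed}^k=\frac{1-\gamma}{1-\gamma+\gamma p_{\max}^k}$ one reads off $\frac{1}{1-\gamma_\text{fed}^k}=1+\frac{\gamma p_{\max}^k}{1-\gamma}\le 1+\frac{p_{\max}^k}{1-\gamma}$.

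The one step that genuinely needs care is this final maximization: bounding $(1-\PM_k(s,a))\Delta\le\Delta$ and $\PM_k(s,a)M\le p_{\max}^k M$ separately would only produce the weaker constant $\gamma p_{\max}^k/(1-\gamma)$, so one must keep $\PM_k(s,a)$ attached to both terms, pass to the argmax of $\Delta$, and only then invoke $\PM_k(s^\star,a^\star)\le p_{\max}^k$ together with the monotonicity of $h$. It is also worth flagging that the right-hand side above depends on $Q^{(i)}$ only through its restriction to $\SM_k\times\AM$ and never on values outside $\SM_k$; this is the same observation that underlies Remark~1, namely that $\Tfed^k$ is idempotent.
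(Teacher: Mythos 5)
Your proof is correct, and its core — characterizing $\Tfed^k(Q_i)$ through the Bellman optimality equation of the augmented local MDP, splitting the expectation over $s'\in\SM_k$ versus $s'\notin\SM_k$, and arriving at the pointwise bound $|\tilde Q_1(s,a)-\tilde Q_2(s,a)|\le\gamma\big[(1-\PM_k(s,a))\Delta+\PM_k(s,a)M\big]$ — is exactly the paper's. The one place you genuinely diverge is the final maximization, and your handling is the cleaner of the two. The paper first argues that the maximum of $|\tilde Q_1-\tilde Q_2|$ over $\SM_k\times\AM$ cannot be attained on the kernel $K_k$ (strictly speaking true only when $\Delta>0$), then takes the supremum over the edge set with a worst-case leakage probability $p_\star^k\in\{p_{\min}^k,p_{\max}^k\}$ selected by comparing $\Delta$ with $M$, and only afterwards deduces $p_\star^k=p_{\max}^k$ from the fact that $\gamma_\text{fed}^k\le\gamma$ forces $\Delta\le M$. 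You instead evaluate the pointwise bound at an argmax of $\Delta$, solve to get $\Delta\le h(p^\star)M$ with $h(p)=\gamma p/(1-\gamma(1-p))$, and finish with $p^\star\le p_{\max}^k$ plus the monotonicity of $h$; this single observation absorbs both the kernel/edge case split and the $p_{\min}^k$-versus-$p_{\max}^k$ bookkeeping, and handles the degenerate case $p^\star=0$ (where it forces $\Delta=0$) without separate comment. Both routes land on the same constant and the same auxiliary identities for $1/(1-\gamma_\text{fed}^k)$; yours is a tidier write-up of the same underlying estimate.
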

\begin{proof}
    % Without loss of generality, we consider the $k$-th local Bellman operator $\Tfed^k$ in the following proof.
    The proof is trivial if $p^k_{\max}=0$.
    Hence, we only need to consider the case when $p^k_{\max}>0$, and in this scenario, $E_k\neq\emptyset$.
    By the definition of $\Tfed^k$, we have the following form of Bellman equation
    \begin{equation*}
        \Tfed^k(Q)(s,a)=\RM(s,a)+\gamma \EB_{\PM}\left [ \mathds{1}_k(s')\max_{a'}\Tfed^k(Q)(s',a')+(1-\mathds{1}_k(s'))\max_{a'} Q(s',a')\right ], \forall (s,a)\in\SM_k\times\AM.
    \end{equation*}
    For simplicity, we abbreviate $\Tfed^k(Q_1),\Tfed^k(Q_2)$ as $\tilde{Q}_1,\tilde{Q}_2$, and $\|\cdot\|_\infty$ as $\|\cdot\|$.
    And we define $\|Q\|_{\SM_k}:=\max_{s\in\SM_k,a\in\AM}\Big|Q(s,a)\Big|$.
    For all $(s,a)\in\SM_k\times\AM$,
    \begin{equation*}
            \begin{aligned}
    &~~\Big|\tilde{Q}_1(s,a)-\tilde{Q}_2(s,a)\Big|\\
    =&\gamma\Bigg | \EB_{\PM}\Big [\mathds{1}_k(s')\Big (\max_{a'}\tilde{Q}_1(s',a')-\max_{a'}\tilde{Q}_2(s',a')\Big )+(1-\mathds{1}_k(s'))\Big (\max_{a'} Q_1(s',a')-\max_{a'} Q_2(s',a')\Big )\Big]\Bigg | \\
    =&\gamma\Bigg |\sum_{s'\in\SM_k}\PM(s'|s,a)\Big (\max_{a'}\tilde{Q}_1(s',a')-\max_{a'}\tilde{Q}_2(s',a')\Big )+\sum_{s'\notin \SM_k}\PM(s'|s,a)\Big (\max_{a'} Q_1(s',a')-\max_{a'} Q_2(s',a')\Big )\Bigg |\\
    \leq& \gamma \Big|\sum_{s'\in\SM_k}\PM(s'|s,a)\Big (\max_{a'}\tilde{Q}_1(s',a')-\max_{a'}\tilde{Q}_2(s',a')\Big )\Big|+\gamma \Big|\sum_{s'\notin \SM_k}\PM(s'|s,a)\Big (\max_{a'} Q_1(s',a')-\max_{a'} Q_2(s',a')\Big) \Big|\\
    \leq& \gamma \Big (\sum_{s'\in\SM_k}\PM(s'|s,a)\Big )\max_{s'\in\SM_k}\Big|\max_{a'}\tilde{Q}_1(s',a')-\max_{a'}\tilde{Q}_2(s',a')\Big|+\gamma \Big (\sum_{s'\notin\SM_k}\PM(s'|s,a)\Big )\max_{s'\notin\SM_k}\Big|\max_{a'}Q_1(s',a')-\max_{a'}Q_2(s',a')\Big|\\
    \leq&\gamma \Big (\sum_{s'\in\SM_k}\PM(s'|s,a)\Big )\max_{s'\in\SM_k,a'\in\AM}\Big|\tilde{Q}_1(s',a')-\tilde{Q}_2(s',a')\Big|+\gamma \Big (\sum_{s'\notin\SM_k}\PM(s'|s,a)\Big )\max_{s'\notin\SM_k,a'\in\AM}\Big|Q_1(s',a')-Q_2(s',a')\Big|\\
    % \label{final_ineq}
    \leq&\gamma(1-\PM_k(s,a))\|\tilde{Q}_1-\tilde{Q}_2\|_{\SM_k}+\gamma\PM_k(s,a)\|Q_1-Q_2\|,
        \end{aligned}
    \end{equation*}

    For $(s,a)\in K_k\times\AM$, we get $\PM_k(s,a)=0$ by the definition of $K_k$.
    Therefore, we are able to derive
        \begin{equation*}
        \Big|\tilde{Q}_1(s,a)-\tilde{Q}_2(s,a)\Big|\leq \gamma\|\tilde{Q}_1-\tilde{Q}_2\|_{\SM_k},\forall(s,a)\in K_k\times\AM,
    \end{equation*}
    which means the maximum absolute difference between $\tilde{Q}_1$ and $\tilde{Q}_2$ does not take place within $K_k\times\AM$.

    For $(s,a)\in E_k\times\AM$, we get $\PM_k(s,a)\in[p^k_{\min},p^k_{\max}]\subset (0,1]$, where $p_{\min}^k:=\min_{(s,a)\in E_k}\PM_k(s,a)>0$
, hence
        \begin{equation*}
        \Big|\tilde{Q}_1(s,a)-\tilde{Q}_2(s,a)\Big|\leq \gamma(1-p^k_\star)\|\tilde{Q}_1-\tilde{Q}_2\|_{\SM_k}+\gamma p^k_\star\|Q_1-Q_2\|,\forall(s,a)\in E_k\times\AM,
    \end{equation*}
    where $p^k_\star=p^k_{\min}\mathds{1}\{\|Q_1-Q_2\|< \|\tilde{Q}_1-\tilde{Q}_2\|_{\SM_k} \}+p^k_{\max}\mathds{1}\{ \|Q_1-Q_2\|\geq \|\tilde{Q}_1-\tilde{Q}_2\|_{\SM_k}\}\in(0,1]$. 
    Take the supremum on both sides and we get
    % where $\PM_k(s,a) = \sum_{s'\notin\SM_k}\PM(s'|s,a)$ represents the probability of keeping in restricted region $\SM_k$ when taking action $a$ at current state  $s$.
    \begin{equation*}
            \begin{aligned}
    \|\tilde{Q}_1-\tilde{Q}_2\|_{\SM_k} \leq \gamma(1-p^k_\star)\|\tilde{Q}_1-\tilde{Q}_2\|_{\SM_k}+\gamma p^k_\star\|Q_1-Q_2\|,
    \end{aligned}
    \end{equation*}
    therefore,
    \begin{equation*}
    \begin{aligned}
    \|\tilde{Q}_1-\tilde{Q}_2\|_{\SM_k} \leq \frac{\gamma p^k_\star}{1-\gamma (1-p^k_\star)}\|Q_1-Q_2\|:=\gamma_\text{fed}^k\|Q_1-Q_2\|,
    \end{aligned}
    \end{equation*}
    where
    \begin{equation}
        \begin{aligned}
                    \frac{1}{\gamma_\text{fed}^k}&=\frac{1-\gamma(1-p^k_\star)}{\gamma p^k_\star}\\
                    &=\frac{1-\gamma}{\gamma p^k_\star}+1\\
                    &\in [\frac{1}{\gamma},+\infty),
        \end{aligned}
    \end{equation}
    hence $\gamma_{\text{fed}}^k\in (0,\gamma]$, $p^k_\star=p^k_{\max}$ and $\gamma_\text{fed}^k$ increases with $p^k_{\max}$.

    To summarize, we have proved that local Bellman operator $\Tfed^k$ satisfies the contraction property with $\gamma_\text{fed}^k=\frac{\gamma p^k_{\max}}{1-\gamma (1-p^k_{\max})}\in (0,\gamma]$.
\end{proof}

\begin{proof}[Proof of Lemma~\ref{lem:Tfed_contraction}]
    For any $(s,a)\in\SM\times\AM$ and any $Q_1,Q_2\in\RB^{|\SM||\AM|}$,
    \begin{align}
        \notag
        &|\Tfed(Q_1)(s,a)-\Tfed(Q_2)(s,a)|\\
        \notag
        =&\frac{1}{N(s)}\bigg |\sum_{k=1}^N\big (\Tfed^k(Q_1)-\Tfed^k(Q_2)\big )(s,a)\mathds{1}_k(s)\bigg|\\
        \label{ana:Tfed_max}
        \leq&\frac{1}{N(s)}\sum_{k=1}^N\bigg |\big (\Tfed^k(Q_1)-\Tfed^k(Q_2)\big )(s,a)\mathds{1}_k(s)\bigg|\\
        \label{ana:Tfed_contract}
        \leq&\frac{1}{N(s)}\sum_{k=1}^N \gamma_\text{fed}^k\|Q_1-Q_2\|_\infty\mathds{1}_k(s)\\
        \label{ana:Tfed_gamma}
        \leq&\gamma_\text{fed}\|Q_1-Q_2\|_\infty,
    \end{align}
    where (\ref{ana:Tfed_max}) holds because of the triangle inequality, (\ref{ana:Tfed_contract}) comes from contraction properties of $\{\Tfed^k\}_{k=1}^N$, and (\ref{ana:Tfed_gamma}) naturally holds because of the definition of $\gamma_\text{fed}$.
\end{proof}

\begin{proof}[Proof of Lemma~\ref{lem:Tfed_fixpoint}]
        First, we introduce the following one step local optimistic Bellman operator $\Tfedn{1}^{k,\tilde{V}}\colon\RB^{|\SM||\AM|}\to\RB^{|\SM||\AM|}$, which is the standard optimistic Bellman operator for $\MM_k$ (note that we do not need to deal with $s_{null}$ here and we ignore the input $Q(s,a)$ for $s\notin\SM_k$), \ie, for any $a\in\AM, s\in\SM_k$,
        \begin{equation*}
                    \Tfedn{1}^{k,\tilde{V}}(Q)(s,a)=\RM(s,a)+\gamma\EB_{\PM}\Big [\mathds{1}_k(s')\max_{a'}Q(s',a')+(1-\mathds{1}_k(s'))\tilde{V}(s')\Big ],
        \end{equation*}
    for any $s\notin\SM_k$, $\Tfedn{1}^{k,\tilde{V}}(Q)(s,a)=\tilde{V}(s)$. By definition, $\Tfed^k(Q^\star)$ is the unique fixed point of $\Tfedn{1}^{k,V^\star}$. We define \begin{equation*}
    Q_k^\star(s,a)=
    \begin{cases}
    Q^\star(s,a)&s\in\SM_k\\
    V^\star(s)=\max_{a'}Q^\star(s,a')&s\notin\SM_k\\
    \end{cases},
\end{equation*}
which is equal to $Q^\star(s,a)$ when $s\in\SM_k$.
    Let's check $Q^*_k$ is that fixed point, for any $a\in\AM, s\in\SM_k$,
        \begin{align}
        \notag
        &\Tfedn{1}^{k,V^\star}(Q^\star_k)(s,a)\\
        \notag
        =&\RM(s,a)+\gamma\EB_{\PM}\Big [\mathds{1}_k(s')\max_{a'}Q^\star(s',a')+(1-\mathds{1}_k(s'))V^\star(s')\Big ]\\
        \notag
        =&\RM(s,a)+ \gamma\EB_{\PM}\Big [\max_{a'}Q^{\star}(s',a')\Big ]\\
        \notag
        =&Q^\star(s,a),
    \end{align}
for any $s\notin\SM_k$, $\Tfedn{1}^{k,V^\star}(Q^\star_k)(s,a)=V^\star(s)$. Therefore, $\Tfed^k(Q^\star)=Q^\star_k$, we have
    \begin{align}
        \notag
        &\Tfed(Q^\star)(s,a)\\
        \notag
        =&\frac{1}{N(s)}\sum_{k=1}^N \Tfed^{k}(Q^\star)(s,a)\mathds{1}_k(s)\\
        \notag
        =&\frac{1}{N(s)}\sum_{k=1}^N Q^\star_k(s,a)\mathds{1}_k(s)\\
        \notag
        =&\frac{1}{N(s)}\sum_{k=1}^N Q^\star(s,a)\mathds{1}_k(s)\\
        \notag
        =&Q^\star(s,a),
    \end{align}
which is desired.
\end{proof}

\section{Sample Complexity of FedQ-X}\label{Appendix:SampleComplexityFedQX}
For simplicity, in this section, we will abbreviate $\|\cdot\|_\infty$ as $\|\cdot\|$ and denote $\TM_{\text{fed},r}(Q_{r-1}):=Q_{r}$ and $\TM_{\text{fed},r}^k(Q_{r-1}):=Q_{r-1,k}$ for all $r\in[R]$.
\begin{proof}[Proof of Lemma~\ref{lem:converge_fedqx}]
We need to bound $\|Q^\star-Q_R\|$:
    \begin{equation*} 
        \begin{aligned}
            &\|Q^\star-Q_R\|\\
            =& \|\Tfed(Q^\star)-\TM_{\text{fed},R}(Q_{R-1})\|\\
            =& \left\| \left[\frac{1}{N(s)}\sum_{k=1}^N \left(\Tfed^k(Q^\star)(s,a)-\TM_{\text{fed},R}^k(Q_{R-1})(s,a)\right)\mathds{1}_k(s)\right]_{(s,a)\in\SM\times\AM} \right\|\\
            \leq& \max_{k\in [N]}\|\Tfed^k(Q^\star)-\TM_{\text{fed},R}^k(Q_{R-1}) \|_{\SM_k}\\
            \leq&\max_{k\in [N]}\|\Tfed^k(Q^\star)-\Tfed^k(Q_{R-1})\|_{\SM_k}+ \max_{k\in [N]}\|\Tfed^k(Q_{R-1})-\TM_{\text{fed},R}^k(Q_{R-1}) \|_{\SM_k}\\
            \leq& \gamma_\text{fed}\|Q^\star-Q_{R-1} \|+ \varepsilon_R,
        \end{aligned}
    \end{equation*}
    apply the upper bound recursively,
    \begin{equation*} 
        \begin{aligned}
            &\|Q^\star-Q_R\|\\
            \leq &\gamma_\text{fed}^R\|Q^\star-Q_0 \|+ \sum_{r=1}^R\gamma_\text{fed}^{R-r}\varepsilon_{r}\\
            \leq &\frac{\gamma_\text{fed}^R}{1-\gamma}+\sum_{r=1}^R\gamma_\text{fed}^{R-r}\varepsilon_r,
        \end{aligned}
\end{equation*}
    hence $\limsup_{R\to\infty}\|Q^\star-Q_R\|\leq\limsup_{R\to\infty}\sum_{r=1}^R\gamma_\text{fed}^{R-r}\varepsilon_r$.
    We assume $\limsup_{R\to\infty}\varepsilon_R=C<\infty$, otherwise $\limsup_{R\to\infty}\sum_{r=1}^R\gamma_\text{fed}^{R-r}\varepsilon_r\leq \limsup_{R\to\infty} \varepsilon_R=\infty$.
    Then for any $\epsilon>0$, there exists an $r(\epsilon)>0$ such that $\sup_{r\geq r(\epsilon)}\varepsilon_r\leq C+\epsilon$, hence
\begin{equation*} 
        \begin{aligned}
\limsup_{R\to\infty}\sum_{r=1}^R\gamma_\text{fed}^{R-r}\varepsilon_r&=\limsup_{R\to\infty}\sum_{r=1}^{r(\epsilon)-1}\gamma_\text{fed}^{R-r}\varepsilon_r+\limsup_{R\to\infty}\sum_{r=r(\epsilon)}^{R}\gamma_\text{fed}^{R-r}\varepsilon_r\\
&\leq (C+\epsilon)\limsup_{R\to\infty}\sum_{r=r(\epsilon)}^{R}\gamma_\text{fed}^{R-r}\\
&=\frac{C+\epsilon}{1-\gamma_\text{fed}}\limsup_{R\to\infty}(1-\gamma_\text{fed}^{R-r(\epsilon)+1})\\
&=\frac{C+\epsilon}{1-\gamma_\text{fed}},
        \end{aligned}
\end{equation*}
which holds for any $\epsilon>0$, therefore $\limsup_{R\to\infty}\sum_{r=1}^R\gamma_\text{fed}^{R-r}\varepsilon_r\leq\frac{1}{1-\gamma_\text{fed}}\limsup_{R\to\infty}\varepsilon_R$.
\end{proof}

\begin{proof}[Proof of Theorem~\ref{Thm:SampleComplexity_FedQX}]
Recall the upper bound we have derived:
    \begin{equation*} 
        \begin{aligned}
            \|Q^\star-Q_R\|
            \leq \frac{\gamma_\text{fed}^R}{1-\gamma}+\sum_{r=1}^R\gamma_\text{fed}^{R-r}\max_{k\in [N]}\|\Tfed^k(Q_{R-1})-\TM_{\text{fed},R}^k(Q_{R-1}) \|_{\SM_k},
        \end{aligned}
\end{equation*}
here we can bound the second term using the theoretical guarantee of the RL algorithm X. 
To be concrete, for all $r\in[R]$, if we use $n_{r,k}:= n\left(|\SM_k|,|\AM|,\gamma,\varepsilon_r,\delta_r\right)$ samples, we have with probability at least $1-\delta_r$, $\max_{k\in [N]}\|\Tfed^k(Q_{r-1})-\TM_{\text{fed},r}^k(Q_{r-1}) \|_{\SM_k}\leq\varepsilon_r$ for some $\delta_r>0$ and $\varepsilon_r>0$.
Apply the union bound, we obtain  with probability at least $1-\sum_{r=1}^R\delta_r$,
    \begin{equation*} 
        \begin{aligned}
            \|Q^\star-Q_R\|
            \leq \frac{\gamma_\text{fed}^R}{1-\gamma}+\sum_{r=1}^R\gamma_\text{fed}^{R-r}\varepsilon_r.
        \end{aligned}
\end{equation*}

For any $\varepsilon\in(0,\frac{1}{1-\gamma})$, to make sure the first term not larger than $\frac{\varepsilon}{2}$, we should take $R\geq \log\left(\frac{2}{\varepsilon(1-\gamma)}\right)/\log\left(\frac{1}{\gamma_\text{fed}}\right)=\widetilde{\OM}\left(\frac{1}{1-\gamma_\text{fed}}\right)$, where $1/\log\left(\frac{1}{\gamma_\text{fed}}\right)=-1/\log\left(1-(1-\gamma_\text{fed})\right)\leq \frac{1}{1-\gamma_\text{fed}}$.

We can take $\varepsilon_r$ as constant: when $\varepsilon_r=\frac{\varepsilon}{2}(1-\gamma_\text{fed})$, we have the second term $\sum_{r=1}^R\gamma_\text{fed}^{R-r}\varepsilon_r=\frac{\varepsilon}{2}(1-\gamma_\text{fed})\sum_{r=1}^R \gamma_\text{fed}^{R-r}<\frac{\varepsilon}{2}$, resulting in $\|Q^\star-Q_R\|\leq \varepsilon$.

For any $\delta\in(0,1)$, we can take $\delta_r=\frac{C\delta}{r[\log(r)]^2}$ which does not depend on $R$, and $C:=\left(\sum_{r=1}^\infty \frac{1}{t[\log(t)]^2}\right)^{-1}$. Hence, $\sum_{r=1}^R\delta_r<\delta\sum_{r=1}^\infty\frac{C}{r[\log(r)]^2}=\delta$ and $\delta_r=\widetilde{\Omega}\left(\delta(1-\gamma_\text{fed})\right)$ for all $r\in[R]$.

To summarize, the total sample complexity at device $k$ is $n_k=\sum_{r=1}^R n_{r,k}$.
, we have
\begin{equation*} 
        \begin{aligned}
n_k&=\sum_{r=1}^R n\left(|\SM_k|,|\AM|,\gamma,\varepsilon_r,\delta_r\right)\\
&=\widetilde{O}\left(\frac{|\SM_k||\AM|}{\varepsilon^2(1-\gamma_\text{fed})^3(1-\gamma)^m}  \right)\\
% &=\widetilde{O}\left(\frac{|\SM_k||\AM|}{\varepsilon^2(1-\gamma_\text{fed})^3(1-\gamma)^3}  \right)\\
&=\widetilde{O}\left(\frac{|\SM_k||\AM|}{\varepsilon^2(1-\gamma)^m}\left(1+\frac{p_{\max}^3}{(1-\gamma)^3}\right)  \right),
\end{aligned}
\end{equation*}
if $p_{\max}>0$, which is typically the case, we can determine that $n_k=\widetilde{O}\left(\frac{p_{\max}^3|\SM_k||\AM|}{\varepsilon^2(1-\gamma)^{m+3}}\right)$.
\end{proof}
\textit{\textbf{Remark}} 5: The sample complexity of solving the $k$-th local MDP may not be precisely given by $n\left(|\SM_k|,|\AM|,\gamma,\varepsilon_t,\delta_t\right)$. 
It may include a logarithmic term with respect to $|\SM|$ instead of $|\SM_k|$ for common provable RL solvers.
% with sample complexity $\widetilde{O}\left(\frac{|\SM||\AM|}{\varepsilon^2(1-\gamma)^m} \right)$ for some $m\geq 3$. 
However, we can neglect this small difference since we do not consider logarithmic terms in this paper.
% Beyond the constant error schedule $\varepsilon_t=\frac{\varepsilon}{2}(1-\gamma_{\text{fed}})$, we can try other error schedules: when $\varepsilon_t=\frac{1}{\gamma_{\text{fed}}^{T-t}}\frac{C}{(T-t+1)[\log(T-t+1)]^2}\frac{\varepsilon}{2}$, we have the second term
% \begin{equation*}
%     \sum_{t=1}^T\gamma_{\text{fed}}^{T-t}\varepsilon_t=\frac{\varepsilon}{2}\sum_{t=1}^T\frac{C}{(T-t+1)[\log(T-t+1)]^2}=\frac{\varepsilon}{2}\sum_{t=1}^T\frac{C}{t[\log(t)]^2}<\frac{\varepsilon}{2}\sum_{t=1}^\infty\frac{C}{t[\log(t)]^2}=\frac{\varepsilon}{2},
% \end{equation*}
% hence, the total sample complexity at device $k$ is given by
% \begin{equation*} 
%         \begin{aligned}
% n_k&=\sum_{t=1}^T n\left(|\SM_k|,|\AM|,\gamma,\varepsilon_t,\delta_t\right)\\
% &= \widetilde{O}\left(\frac{|\SM_k||\AM|}{(1-\gamma)^m}\sum_{t=1}^T  \frac{1}{\varepsilon_t^2}  \right)\\
% &=\widetilde{O}\left(\frac{|\SM_k||\AM|}{\varepsilon^2(1-\gamma)^m}\sum_{t=1}^\infty  \gamma_{\text{fed}}^{2t}t^2[\log(t)]^4  \right)\\
% &=\widetilde{O}\left(\frac{|\SM_k||\AM|}{\varepsilon^2(1-\gamma_\text{fed})^3(1-\gamma)^m}  \right),
% \end{aligned}
% \end{equation*}
% not changed.
\section{Sample Complexity of FedQ-SynQ}\label{section:proof_fedq_synq}
Recall the one step local optimistic
Bellman operator $\Tfedn{1}^{k,\tilde{V}}\colon\RB^{|\SM||\AM|}\to\RB^{|\SM||\AM|}$ for a given $\tilde{V}\in\RB^{|\SM|}$, which is the standard optimistic Bellman operator for $\MM_k$ (we do not need to deal with $s_{null}$ here), \ie, for any $a\in\AM, s\in\SM_k$,
\begin{equation*}
    \Tfedn{1}^{k,\tilde{V}}(Q)(s,a)=\RM(s,a)+\gamma\EB_{\PM}\Big [\mathds{1}_k(s')\max_{a'}Q(s',a')+(1-\mathds{1}_k(s'))\tilde{V}(s')\Big ],
\end{equation*}
    for any $s\notin\SM_k$, $\Tfedn{1}^{k,\tilde{V}}(Q)(s,a)=\tilde{V}(s)$.

We denote the $n$ step standard optimistic Bellman operator for $\MM_k$ as $\Tfedn{n}^{k,\tilde{V}}:=\Big[\Tfedn{1}^{k,\tilde{V}}\Big]^n$ and define the $n$ step local optimistic
Bellman operator (we abuse the name here, since this operator does not depend on a given $\tilde{V}$), $\Tfedn{n}^{k}\colon\RB^{|\SM||\AM|}\to\RB^{|\SM||\AM|}$, $\Tfedn{n}^{k}(Q):=\Tfedn{n}^{k,\max_{a\in\AM}Q(\cdot,a)}(Q)$, for all $Q\in\RB^{|\SM||\AM|}$. We will show that $\Tfedn{n}^{k}$ also satisfies the contraction property.

\begin{lem}
[Contraction property of $\Tfedn{n}^{k}$]\label{lem:Tfedkn_contract}
For any $n\geq 1$, $\Tfedn{n}^{k}$ satisfies the contraction property as follows:
    \begin{equation*}
                \max_{s\in\SM_k,a\in\AM}\Big|\Tfedn{n}^{k}(Q_1)(s,a)-\Tfedn{n}^{k}(Q_2)(s,a)\Big|\leq\gamma_{\text{fed-}n}^k\|Q_1-Q_2\|_\infty,
        ~\forall Q_1,Q_2\in\RB^{|\SM||\AM|},
    \end{equation*}
where $\gamma_{\text{fed-}n}^k=\gamma\Big\{[\gamma(1-p^k_{\max})]^{n-1}+  p^k_{\max}\frac{1-[\gamma(1-p^k_{\max})]^{n-1}}{1-\gamma(1-p^k_{\max})}\Big\}\in[\gamma_{\text{fed}}^k,\gamma]$, decreasing with $n$, and $\frac{1}{1-\gamma_{\text{fed-}n}}=\frac{1}{1-\gamma}\frac{1-\gamma(1-p^k_{\max})}{1-[\gamma(1-p^k_{\max})]^{n}}$.
\end{lem}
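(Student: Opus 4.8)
The plan is to proceed by induction on $n$, using the already-established contraction property of the one-step operator $\Tfedn{1}^{k,\tilde V}$ as the base case and engine of the inductive step. Observe first that for a \emph{fixed} $\tilde V$, the operator $\Tfedn{1}^{k,\tilde V}$ is an affine contraction on the coordinates indexed by $\SM_k\times\AM$: by the same triangle-inequality and transition-splitting argument already used in the proof of Lemma \ref{lem:Tfedk_contract}, one gets $\|\Tfedn{1}^{k,\tilde V}(Q_1)-\Tfedn{1}^{k,\tilde V}(Q_2)\|_{\SM_k}\le \gamma(1-p^k_{\max})\|Q_1-Q_2\|_{\SM_k}$ whenever $Q_1,Q_2$ agree outside $\SM_k$ (the "leakage" term cancels because $\tilde V$ is held fixed), and more generally $\|\Tfedn{1}^{k,\tilde V}(Q_1)-\Tfedn{1}^{k,\tilde V}(Q_2)\|_{\SM_k}\le \gamma(1-p^k_{\max})\|Q_1-Q_2\|_{\SM_k}+\gamma p^k_{\max}\|V_1-V_2\|_\infty$ when the boundary data $\tilde V$ also varies. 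The subtlety is that $\Tfedn{n}^{k}$ is \emph{not} an iterate of a single fixed operator: after the first application the boundary value used is $\max_a Q(\cdot,a)$ evaluated at the \emph{original} $Q$, so $\Tfedn{n}^{k}(Q)=\Tfedn{n}^{k,\tilde V_Q}(Q)$ with $\tilde V_Q=\max_a Q(\cdot,a)$ frozen for all $n$ inner steps.

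First I would write $\Tfedn{n}^{k}(Q_i)=\big(\Tfedn{1}^{k,\tilde V_{Q_i}}\big)^n(Q_i)$ for $i=1,2$ and telescope the difference. Split into two contributions: (i) changing the boundary data from $\tilde V_{Q_1}$ to $\tilde V_{Q_2}$ while keeping the inner argument, and (ii) changing the inner argument. Using Lemma \ref{lem:max}, $\|\tilde V_{Q_1}-\tilde V_{Q_2}\|_\infty\le\|Q_1-Q_2\|_\infty$. Iterating the one-step bound $n$ times, the inner-argument contractions compound to $[\gamma(1-p^k_{\max})]^{n-1}$ acting on the first-step discrepancy (which is bounded by $\gamma\|Q_1-Q_2\|_\infty$ coming from one raw application to the two different inputs), while the boundary-data errors injected at each of the $n$ steps, each of size $\gamma p^k_{\max}\|Q_1-Q_2\|_\infty$, get discounted by the geometric weights $1,\gamma(1-p^k_{\max}),\dots,[\gamma(1-p^k_{\max})]^{n-1}$. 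Summing the geometric series $\sum_{j=0}^{n-1}[\gamma(1-p^k_{\max})]^j=\frac{1-[\gamma(1-p^k_{\max})]^{n}}{1-\gamma(1-p^k_{\max})}$ and bookkeeping the powers yields exactly
\[
\gamma_{\text{fed-}n}^k=\gamma\Big\{[\gamma(1-p^k_{\max})]^{n-1}+p^k_{\max}\tfrac{1-[\gamma(1-p^k_{\max})]^{n-1}}{1-\gamma(1-p^k_{\max})}\Big\}.
\]
Cleanly, the inductive step is just: $\gamma_{\text{fed-}(n+1)}^k=\gamma(1-p^k_{\max})\,\gamma_{\text{fed-}n}^k+\gamma p^k_{\max}$, which is the one-step contraction factor applied to the $n$-step result plus the fresh boundary error; verifying this recursion matches the closed form is a one-line check, and $n=1$ recovers $\gamma_{\text{fed}}^k$.

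The remaining elementary claims follow by calculus on this closed form: $\gamma_{\text{fed-}n}^k$ is a convex combination of $[\gamma(1-p^k_{\max})]^{n-1}\le 1$ and the fixed point $\gamma_{\text{fed}}^k/\gamma\cdot$(something)—more directly, the recursion $x\mapsto \gamma(1-p^k_{\max})x+\gamma p^k_{\max}$ is an increasing affine map with fixed point $\gamma_{\text{fed}}^k$ and slope $\gamma(1-p^k_{\max})\in[0,1)$, so starting from $x_1=\gamma_{\text{fed}-1}^k=\gamma\ge \gamma_{\text{fed}}^k$ the sequence decreases monotonically to $\gamma_{\text{fed}}^k$; hence $\gamma_{\text{fed-}n}^k\in[\gamma_{\text{fed}}^k,\gamma]$ and decreases in $n$. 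The identity $\frac{1}{1-\gamma_{\text{fed-}n}^k}=\frac{1}{1-\gamma}\cdot\frac{1-\gamma(1-p^k_{\max})}{1-[\gamma(1-p^k_{\max})]^{n}}$ is then obtained by substituting the closed form and simplifying $1-\gamma_{\text{fed-}n}^k$ over the common denominator $1-\gamma(1-p^k_{\max})$. I expect the main obstacle to be purely organizational: correctly separating the "inner argument" discrepancy from the cumulative "boundary data" discrepancy in the telescoping sum and tracking which power of $\gamma(1-p^k_{\max})$ multiplies which term, so that the geometric sum reassembles into the stated closed form; once the one-step recursion $\gamma_{\text{fed-}(n+1)}^k=\gamma(1-p^k_{\max})\gamma_{\text{fed-}n}^k+\gamma p^k_{\max}$ is isolated, everything else is routine.
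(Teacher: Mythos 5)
Your proposal is correct and follows essentially the same route as the paper: both reduce the $n$-step bound to the one-step recursion $\gamma_{\text{fed-}(n+1)}^k=\gamma(1-p^k_{\max})\,\gamma_{\text{fed-}n}^k+\gamma p^k_{\max}$ (which the paper states on the operator differences as $\|\Tfedn{n}^{k}(Q_1)-\Tfedn{n}^{k}(Q_2)\|_{\SM_k}\leq \gamma(1-p^k_{\max})\|\Tfedn{(n-1)}^{k}(Q_1)-\Tfedn{(n-1)}^{k}(Q_2)\|_{\SM_k}+\gamma p^k_{\max}\|Q_1-Q_2\|_\infty$, after first noting by induction that the $(n-1)$-step difference is at most $\gamma\|Q_1-Q_2\|_\infty$) and then unroll the geometric series to the stated closed form. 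Your affine-fixed-point argument for the monotonicity and the range $[\gamma_{\text{fed}}^k,\gamma]$ is a slightly cleaner packaging of the paper's direct algebraic check, but the substance is identical.
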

\begin{proof}
    The proof is trivial if $p^k_{\max}=0$, we only need to consider the case when $p^k_{\max}>0$.
    Following the proof of Lemma~\ref{lem:Tfedk_contract}, we can similarly obtain
    \begin{equation*}
        \|\Tfedn{1}^{k}(Q_1)-\Tfedn{1}^{k}(Q_2)\|_{\SM_k}\leq \gamma\|Q_1-Q_2\|, 
    \end{equation*}
    and we can prove by induction that $\|\Tfedn{n}^{k}(Q_1)-\Tfedn{n}^{k}(Q_2)\|_{\SM_k}\leq \gamma\|Q_1-Q_2\|$.
    In fact, we can derive a slight better result: for $n\geq 2$,
        \begin{equation*}
        \|\Tfedn{n}^{k}(Q_1)-\Tfedn{n}^{k}(Q_2)\|_{\SM_k}\leq \gamma(1-p^k_{\max})\|\Tfedn{(n-1)}^{k}(Q_1)-\Tfedn{(n-1)}^{k}(Q_2)\|_{\SM_k}+\gamma p^k_{\max}\|Q_1-Q_2\|,
    \end{equation*}
    hence,
    \begin{equation*}
        \|\Tfedn{n}^{k}(Q_1)-\Tfedn{n}^{k}(Q_2)\|_{\SM_k}\leq \gamma\Big\{[\gamma(1-p^k_{\max})]^{n-1}+  p^k_{\max}\frac{1-[\gamma(1-p^k_{\max})]^{n-1}}{1-\gamma(1-p^k_{\max})}\Big\}\|Q_1-Q_2\|,
    \end{equation*}
    where
    \begin{equation*}
        \begin{aligned}
            \gamma_{\text{fed-}n}^k&:=\gamma\Big\{[\gamma(1-p^k_{\max})]^{n-1}+  p^k_{\max}\frac{1-[\gamma(1-p^k_{\max})]^{n-1}}{1-\gamma(1-p^k_{\max})}\Big\}\\
            &=\gamma\Big\{\frac{p^k_{\max}}{1-\gamma(1-p^k_{\max})}+  \Big(1-\frac{p^k_{\max}}{1-\gamma(1-p^k_{\max})}\Big)[\gamma(1-p^k_{\max})]^{n-1}\Big\},
        \end{aligned}
    \end{equation*}
    which is decreasing with $n$, since the coefficient $1-\frac{p^k_{\max}}{1-\gamma(1-p^k_{\max})}\geq 0$ and the base $\gamma(1-p^k_{\max})\in (0,1)$. 
    And we can check that $\gamma_{\text{fed-}1}^k=\gamma$ and $\lim_{n\to\infty}\gamma_{\text{fed-}n}^k=\gamma_{\text{fed}}^k$.
    \begin{equation*}
        \begin{aligned}
            \frac{1}{1-\gamma_{\text{fed-}n}^k}&=\frac{1}{1-\gamma\Big\{[\gamma(1-p^k_{\max})]^{n-1}+  p^k_{\max}\frac{1-[\gamma(1-p^k_{\max})]^{n-1}}{1-\gamma(1-p^k_{\max})}\Big\}}  \\
            &=\frac{1}{1-\gamma}\frac{1-\gamma(1-p^k_{\max})}{1-[\gamma(1-p^k_{\max})]^{n}}.
        \end{aligned}
    \end{equation*}

\end{proof}
Based on Lemma~\ref{lem:Tfedkn_contract}, we can regard $\Tfed^k$ as $\Tfedn{\infty}^k$. 
And we have the following general version of Lemma~\ref{lem:Tfed_contraction}.
\begin{lem}[Contraction property of $\TM_{\text{fed-}\{n_k\}_{k=1}^N}$]
Suppose $n_k\in\NB\cup\{\infty\}$ for $k\in[N]$, we define the corresponding federated Bellman operator:
\begin{equation*}
    \TM_{\text{fed-}\{n_k\}_{k=1}^N}(Q)(s,a)=\frac{1}{N(s)}\sum_{k=1}^N\Tfedn{n_k}^k(Q)(s,a)\mathds{1}_k(s),
\end{equation*}
then $\TM_{\text{fed-}\{n_k\}_{k=1}^N}$ also satisfies the contraction property as follows
\begin{equation*}
    \|\TM_{\text{fed-}\{n_k\}_{k=1}^N}(Q_1)-\TM_{\text{fed-}\{n_k\}_{k=1}^N}(Q_2)\|_\infty\leq\Big(\max_{k\in[N]}\gamma_{\text{fed-}n_k}^k\Big)\|Q_1-Q_2\|_\infty, \forall Q_1,Q_2\in\RB^{|\SM||\AM|}.
\end{equation*}
\end{lem}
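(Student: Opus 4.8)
The plan is to replay the proof of Lemma~\ref{lem:Tfed_contraction} almost verbatim, with the one-step local operators $\Tfed^k$ replaced by the $n_k$-step local operators $\Tfedn{n_k}^k$ and with each agent's own contraction constant $\gamma_{\text{fed-}n_k}^k$ used in place of $\gamma_{\text{fed}}^k$.

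First I would fix $Q_1,Q_2\in\RB^{|\SM||\AM|}$ and an arbitrary pair $(s,a)\in\SM\times\AM$. Writing $\TM_{\text{fed-}\{n_k\}}$ for $\TM_{\text{fed-}\{n_k\}_{k=1}^N}$, expanding its definition, pulling the factor $1/N(s)$ out of the absolute value (legitimate because $\cup_k\SM_k=\SM$ forces $N(s)\geq 1$), and applying the triangle inequality gives
\[
\big|\TM_{\text{fed-}\{n_k\}}(Q_1)(s,a)-\TM_{\text{fed-}\{n_k\}}(Q_2)(s,a)\big|\leq\frac{1}{N(s)}\sum_{k=1}^N\big|\Tfedn{n_k}^k(Q_1)(s,a)-\Tfedn{n_k}^k(Q_2)(s,a)\big|\mathds{1}_k(s).
\]

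Next, for every $k$ with $\mathds{1}_k(s)=1$, \ie\ $s\in\SM_k$, I would bound the $k$-th summand by $\gamma_{\text{fed-}n_k}^k\|Q_1-Q_2\|_\infty$. When $n_k<\infty$ this is precisely Lemma~\ref{lem:Tfedkn_contract} (the maximum there is over $\SM_k\times\AM$, which contains $(s,a)$); when $n_k=\infty$ we read $\Tfedn{\infty}^k$ as $\Tfed^k$ and invoke Lemma~\ref{lem:Tfedk_contract}, noting that its constant $\gamma_{\text{fed}}^k$ equals $\lim_{n\to\infty}\gamma_{\text{fed-}n}^k$, so that the per-agent constant $\gamma_{\text{fed-}n_k}^k$ is well defined in both cases and is at most $\max_{k\in[N]}\gamma_{\text{fed-}n_k}^k$. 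Since exactly $N(s)$ of the indicators $\mathds{1}_k(s)$ equal $1$, the right-hand side above is an average of $N(s)$ numbers each at most $\big(\max_{k\in[N]}\gamma_{\text{fed-}n_k}^k\big)\|Q_1-Q_2\|_\infty$, hence is itself bounded by that quantity. As $(s,a)$ was arbitrary, taking the supremum over $\SM\times\AM$ on the left finishes the argument.

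I do not expect a genuine obstacle: the statement is simply the common generalization of Lemmas~\ref{lem:Tfed_contraction} and~\ref{lem:Tfedkn_contract}, and the proof reuses exactly the two ingredients of the former --- the triangle inequality on the $N(s)$-fold average defining the federated operator, and the contraction of each local operator on its region. The one place that calls for a line of care is the uniform treatment of finite versus infinite $n_k$; this is dispatched by recording $\Tfedn{\infty}^k:=\Tfed^k$ and $\gamma_{\text{fed-}\infty}^k=\gamma_{\text{fed}}^k$, after which the single inequality $|\Tfedn{n_k}^k(Q_1)(s,a)-\Tfedn{n_k}^k(Q_2)(s,a)|\leq\gamma_{\text{fed-}n_k}^k\|Q_1-Q_2\|_\infty$ applies simultaneously to all agents.
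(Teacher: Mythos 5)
Your proof is correct and is exactly the argument the paper has in mind: the paper omits the proof as ``straightforward'' precisely because it is the verbatim replay of the proof of Lemma~\ref{lem:Tfed_contraction}, with the triangle inequality on the $N(s)$-fold average followed by the per-agent contraction bound from Lemma~\ref{lem:Tfedkn_contract} (and $\Tfedn{\infty}^k:=\Tfed^k$ with $\gamma_{\text{fed-}\infty}^k=\gamma_{\text{fed}}^k$ for the infinite case, as the paper itself sets up just before the statement). No gaps; your handling of the finite-versus-infinite $n_k$ distinction is the only point needing care and you dispatch it correctly.
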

The proof is straightforward, and we omit it.

We have the following general version of Lemma~\ref{lem:Tfed_fixpoint} for this federated Bellman operator:
\begin{lem}
    [Stationary point of $\TM_{\text{fed-}\{n_k\}_{k=1}^N}$] 
    $Q^\star$ is the unique stationary point of the contraction mapping $\TM_{\text{fed-}\{n_k\}_{k=1}^N}$.
\end{lem}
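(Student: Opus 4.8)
The plan is to show that $Q^\star$ is a fixed point of $\TM_{\text{fed-}\{n_k\}_{k=1}^N}$; uniqueness is then immediate from the contraction property established in the preceding lemma together with the Banach fixed point theorem (a contraction mapping on $\RB^{|\SM||\AM|}$ has a unique fixed point). So the substantive work is entirely in verifying $\TM_{\text{fed-}\{n_k\}_{k=1}^N}(Q^\star) = Q^\star$.

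The key reduction is that it suffices to prove $\Tfedn{n_k}^{k}(Q^\star)(s,a) = Q^\star(s,a)$ for every $k\in[N]$ and every $(s,a)\in\SM_k\times\AM$; once this is known, the aggregation step gives, for any $s$, $\TM_{\text{fed-}\{n_k\}_{k=1}^N}(Q^\star)(s,a) = \frac{1}{N(s)}\sum_{k=1}^N Q^\star(s,a)\mathds{1}_k(s) = Q^\star(s,a)$, exactly as in the proof of Lemma~\ref{lem:Tfed_fixpoint}. To prove the per-agent claim, recall $\Tfedn{n_k}^{k}(Q^\star) := \Tfedn{n_k}^{k,\,\max_{a}Q^\star(\cdot,a)}(Q^\star) = \Tfedn{n_k}^{k,\,V^\star}(Q^\star)$. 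Introduce, as in the proof of Lemma~\ref{lem:Tfed_fixpoint}, the extended function $Q^\star_k$ which equals $Q^\star(s,a)$ for $s\in\SM_k$ and equals $V^\star(s)$ for $s\notin\SM_k$. The proof of Lemma~\ref{lem:Tfed_fixpoint} already shows $\Tfedn{1}^{k,V^\star}(Q^\star_k) = Q^\star_k$ (for $s\in\SM_k$ the one-step operator collapses the indicator split back into $\gamma\EB_{\PM}[\max_{a'}Q^\star(s',a')]$, giving $Q^\star(s,a)$ by the Bellman optimality equation; for $s\notin\SM_k$ it returns $V^\star(s)$). Hence $Q^\star_k$ is the fixed point of $\Tfedn{1}^{k,V^\star}$, so iterating $n_k$ times (finite $n_k$) gives $\Tfedn{n_k}^{k,V^\star}(Q^\star_k) = Q^\star_k$; restricting to $s\in\SM_k$ yields $\Tfedn{n_k}^{k}(Q^\star)(s,a)=Q^\star(s,a)$, as needed. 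For the case $n_k=\infty$, i.e. $\Tfedn{n_k}^{k}=\Tfed^k$, this is precisely the content $\Tfed^k(Q^\star)=Q^\star_k$ already verified in the proof of Lemma~\ref{lem:Tfed_fixpoint}.

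One subtlety to handle carefully is the discrepancy between $\Tfedn{1}^{k,V^\star}$ taking a full vector in $\RB^{|\SM||\AM|}$ as input versus the fact that only the $\SM_k$-coordinates of that input are used when evaluating the operator on $\SM_k$ (the $s\notin\SM_k$ coordinates are overwritten by $V^\star$). Because of this, when iterating, one must track that after the first application the output on $\SM\setminus\SM_k$ is exactly $V^\star$, which is why $Q^\star_k$ — rather than $Q^\star$ itself — is the natural fixed point object; this is the only place a small amount of care is required, and it is already navigated in the proof of Lemma~\ref{lem:Tfed_fixpoint}. I do not expect any real obstacle: the whole argument is a routine upgrade of Lemma~\ref{lem:Tfed_fixpoint} from $n_k=\infty$ to arbitrary $n_k\in\NB\cup\{\infty\}$, using that $Q^\star_k$ is a genuine (one-step, hence $n$-step) fixed point of the standard optimistic Bellman operator for $\MM_k$, and the paper is right to say the proof is straightforward and omit it.
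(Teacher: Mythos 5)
Your proposal is correct and follows essentially the same route as the paper: the paper omits the argument but points to exactly the identity you establish, namely $\Tfedn{n}^{k}(Q^\star)=\big[\Tfedn{1}^{k,V^\star}\big]^{n}(Q^\star)=Q^\star_k$ with $Q^\star_k$ as defined in the proof of Lemma~\ref{lem:Tfed_fixpoint}, followed by aggregation and uniqueness via contraction. Your additional care about the input being $Q^\star$ rather than $Q^\star_k$ (resolved because the operator never reads the coordinates outside $\SM_k$ and overwrites them with $V^\star$) is a correct and worthwhile elaboration of the step the paper leaves implicit.
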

The proof is straightforward, as long as we note that $\Tfedn{n}^{k}(Q^\star)=\Tfedn{n}^{k,V^\star}(Q^\star)=\Big[\Tfedn{1}^{k,V^\star}\Big]^n(Q^\star)=Q_k^\star$, where $Q_k^\star$ is defined in the proof of Lemma~\ref{lem:Tfed_fixpoint}.

Inspired by the above analysis, we can now perform a separate analysis on the sample complexity of FedQ-SynQ, which employs synchronous Q-Learning as the RL algorithm X.

\begin{thm}[Sample Complexity of FedQ-SynQ]
Consider any given $\delta \in (0,1)$ and $\varepsilon\in (0, \frac{1}{1-\gamma})$.
Suppose the initialization of the global Q function obeys $0\leq Q_0(s,a)\leq\frac{1}{1-\gamma},\forall (s,a)\in\SM\times\AM$,  
the total update steps $T=ER$ satisfies
\begin{equation*}
    T \geq  \frac{1296}{N_{\min}\varepsilon^2(1-\gamma)^5 } \left(\log\frac{10}{\varepsilon(1-\gamma)^2}\right)^2 \log{\frac{6|\SM||\AM|TN}{\delta}},
\end{equation*}
\ie, $T=\widetilde{O}\left(\frac{1}{N_{\min}\varepsilon^2(1-\gamma)^5 }\right)$, the $k$-th agent needs $\widetilde{O}\left(\frac{|\SM_k||\AM|}{N_{\min}\varepsilon^2(1-\gamma)^5 }\right)$ samples, the step size $\eta$ satisfies
\begin{equation*}
    \eta \leq \frac{1}{1296} N_{\min} \varepsilon^2 (1-\gamma)^4  \left(\log{\frac{6|\SM||\AM|TN}{\delta}}\right)^{-1},
\end{equation*}
\ie, $\eta=\widetilde{\Omega}\left(N_{\min} \varepsilon^2 (1-\gamma)^4\right)$,
and the local update steps per communication round $E$ satisfies
\begin{equation*}
    E \le  1+ \frac{1}{1.01\eta} \min \left\{  \frac{1-\gamma}{4\gamma},  \frac{1}{N_{\min}} \right\}, 
\end{equation*}
\ie, $E=\widetilde{\Omega}\left(\frac{1}{\eta}\min \left\{  1-\gamma, \frac{1}{N_{\min}} \right\}\right)$,
where $N_{\min}:=\min_{s\in\SM}N(s)$.
Then, with probability at least $1-\delta$, the final estimate $Q_R$ is $\varepsilon$-optimal, \ie, $\|Q^{\star} - Q_R\|_{\infty}\leq \varepsilon$.

Furthermore, if we take $T=\widetilde{\Theta}\left(\frac{1}{N_{\min}\varepsilon^2(1-\gamma)^5 }\right)$ and $\eta=\tilde{\Theta}(N_{\min} \varepsilon^2 (1-\gamma)^4)$, then $E=\widetilde{\Omega}\left(\frac{1}{N_{\min} \varepsilon^2 (1-\gamma)^4}\min \left\{  1-\gamma, \frac{1}{N_{\min}} \right\}\right)$ and the total communication round satisfies $R=\widetilde{O}\left(\frac{1}{1-\gamma}\max \left\{ \frac{1}{1-\gamma}, N_{\min} \right\} \right)$. 
\end{thm}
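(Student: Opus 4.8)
## Proof Proposal

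The plan is to reduce the analysis of FedQ-SynQ to a careful bookkeeping of two competing error sources across communication rounds: the \emph{contraction error} coming from the federated Bellman operator, and the \emph{stochastic error} accumulated during the $E$ steps of synchronous Q-learning within each round. The key structural observation, enabled by Lemma~\ref{lem:Tfedkn_contract}, is that running synchronous Q-learning for $E$ local steps in the $k$-th augmented MDP $\MM_{r,k}$ is, in expectation, one application of the $E$-step local operator $\Tfedn{E}^k$, whose contraction factor $\gamma_{\text{fed-}E}^k$ interpolates between $\gamma$ (when $E=1$) and $\gamma_{\text{fed}}^k$ (when $E=\infty$). Choosing $E$ as prescribed — namely $E-1 \le \frac{1}{1.01\eta}\min\{\frac{1-\gamma}{4\gamma}, \frac{1}{N_{\min}}\}$ — makes $\gamma(1-p_{\max}^k)$ raised to the power $E-1$ small enough that the effective per-round contraction is comparable to $\gamma_{\text{fed}}$ when $p_{\max}$ is not too tiny, and the whole scheme contracts at rate bounded away from $1$ by $\Omega((1-\gamma)\max\{1-\gamma, N_{\min}\}^{-1})$-ish after $R$ rounds. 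So after $R$ rounds the deterministic bias term decays like $(\text{contraction})^R/(1-\gamma)$, which forces the stated $R = \widetilde{O}(\frac{1}{1-\gamma}\max\{\frac{1}{1-\gamma}, N_{\min}\})$.

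Next I would control the stochastic fluctuation. Within round $r$, the iterates $Q_{r,k}^{(0)}, \dots, Q_{r,k}^{(E)}$ evolve by the synchronous Q-learning update \eqref{pre:SynQ_update} with constant step size $\eta$ in the augmented MDP $\MM_{r,k}$. The standard constant-stepsize analysis (e.g.\ along the lines of Li et al.\ / Wainwright) gives, with the chosen $\eta$, that $\|Q_{r,k}^{(E)} - \Tfedn{E}^k(Q_r)\|_{\SM_k}$ is bounded by an error of order $\sqrt{\eta/(1-\gamma)^3}\cdot\log(\cdots)$ plus a transient term that decays geometrically in $\eta E$. The crucial point — which is where the $N_{\min}$ improvement over the generic Theorem~\ref{Thm:SampleComplexity_FedQX} enters — is that a state $s$ with $N(s)$ agents covering it receives $N(s)$ independent samples per iteration once we aggregate via \eqref{Alg:FedQ}, so the effective noise variance at $s$ is divided by $N(s)\ge N_{\min}$. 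I would make this precise by tracking, entrywise, the aggregated iterate $Q_{r+1}(s,a) = \frac{1}{N(s)}\sum_k Q_{r,k}^{(E)}(s,a)\mathds{1}_k(s)$ and showing its deviation from $\Tfed(Q_r)(s,a) = \frac{1}{N(s)}\sum_k\Tfedn{E,\ldots}$ — wait, note that aggregation happens only at round boundaries, so the operator being tracked across a round is $\TM_{\text{fed-}\{E\}}$-like but with the local iterates not yet at their fixed points. The clean way is a one-round recursion: $\|Q^\star - Q_{r+1}\|_\infty \le \gamma_{\text{fed-}E}\|Q^\star - Q_r\|_\infty + \zeta_r$, where $\zeta_r$ is the aggregated stochastic error, provably $O(\sqrt{\eta \log(\cdot)/(N_{\min}(1-\gamma)^3)})$ uniformly by a union bound over the $R$ rounds, $N$ agents, $ER$ steps, and $|\SM||\AM|$ coordinates — this is where the $\log\frac{6|\SM||\AM|TN}{\delta}$ factor comes from.

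Unrolling the one-round recursion gives $\|Q^\star - Q_R\|_\infty \le \gamma_{\text{fed-}E}^R\|Q^\star-Q_0\|_\infty + \frac{1}{1-\gamma_{\text{fed-}E}}\max_r\zeta_r \le \frac{\gamma_{\text{fed-}E}^R}{1-\gamma} + \frac{\zeta}{1-\gamma_{\text{fed-}E}}$. Using $\frac{1}{1-\gamma_{\text{fed-}E}} = \frac{1}{1-\gamma}\cdot\frac{1-\gamma(1-p_{\max})}{1-[\gamma(1-p_{\max})]^E} \le \frac{2}{1-\gamma}\cdot\frac{1}{\eta E}$-type bounds (combined with the constraint on $E$), the second term becomes $O(\frac{1}{1-\gamma}\sqrt{\eta/((1-\gamma)^3 N_{\min})}\cdot\frac{1}{\eta E}\log(\cdot))$; plugging in $\eta = \widetilde{\Theta}(N_{\min}\varepsilon^2(1-\gamma)^4)$ and $T=ER = \widetilde\Theta(\frac{1}{N_{\min}\varepsilon^2(1-\gamma)^5})$ makes both terms $\le \varepsilon/2$. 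Finally, sample count: agent $k$ samples $|\SM_k||\AM|$ state-action pairs per iteration over $T=ER$ iterations, giving $\widetilde O(\frac{|\SM_k||\AM|}{N_{\min}\varepsilon^2(1-\gamma)^5})$.

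I expect the main obstacle to be the \emph{constant-stepsize synchronous Q-learning error bound inside each round with the right dependence} — in particular, propagating a non-stationary initialization $Q_r$ (which is not the fixed point of $\Tfedn{E}^k$) through the $E$ steps while simultaneously (i) extracting the $1/\sqrt{N_{\min}}$ variance reduction from aggregation, (ii) keeping the transient/bias contributions from the initial $E$-step warm-up under control via the $E \le 1 + \frac{1}{1.01\eta}\min\{\cdots\}$ constraint, and (iii) getting the high-probability union bound uniform over all rounds without losing more than logarithmic factors. The interplay between the "few rounds, long local runs" regime (large $E$, where $\gamma_{\text{fed-}E}\approx\gamma_{\text{fed}}$ but stochastic error per round is well-averaged) and the constraint $E = O(1/(\eta N_{\min}))$ that caps how much averaging each state can exploit is the delicate balance that pins down the exponents, and verifying it requires the explicit formula for $\gamma_{\text{fed-}n}^k$ from Lemma~\ref{lem:Tfedkn_contract} rather than just its monotonicity.
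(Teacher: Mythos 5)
There is a genuine gap, and it sits at the heart of your argument. Your plan rests on the claim that running $E$ steps of synchronous Q-learning with constant stepsize $\eta$ inside $\MM_{r,k}$ is ``in expectation, one application of the $E$-step local operator $\Tfedn{E}^k$,'' so that the round-level recursion contracts with factor $\gamma_{\text{fed-}E}^k$. This is false: a single stochastic step is a \emph{damped} update whose expectation is $Q\mapsto(1-\eta)Q+\eta\,\Tfedn{1}^{k,V_r}(Q)$, so $E$ steps contract (in the bias sense) by roughly $\bigl(1-\eta(1-\gamma)\bigr)^E\approx 1-\eta E(1-\gamma)$, not by $\gamma_{\text{fed-}E}^k\le\gamma$. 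Under the theorem's own constraint $\eta(E-1)\lesssim\min\{\tfrac{1-\gamma}{4\gamma},\tfrac{1}{N_{\min}}\}$ we have $\eta E\ll1$, so the true per-round contraction is very close to $1$ and your recursion $\|Q^\star-Q_{r+1}\|_\infty\le\gamma_{\text{fed-}E}\|Q^\star-Q_r\|_\infty+\zeta_r$ simply does not hold. A second, compounding problem is that even with the corrected per-round factor $\rho\approx1-\eta E(1-\gamma)$, your accumulation $\frac{1}{1-\rho}\max_r\zeta_r$ treats the round-level noise as adversarial and forfeits the martingale cancellation \emph{across} rounds; chasing the exponents through shows this loses roughly a factor $\sqrt{E}$ (i.e., $\sqrt{N_{\min}}$ or $(1-\gamma)^{-1/2}$, whichever term of the min is active), so you cannot reach the stated $\widetilde{O}\bigl(\frac{|\SM_k||\AM|}{N_{\min}\varepsilon^2(1-\gamma)^5}\bigr)$ this way.

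The paper avoids both issues by never leaving the iteration level: it writes $\Delta_t=Q^\star-Q_t$ and decomposes $\Delta_t=E_t^1+E_t^2+E_t^3$ over all $t\in[T]$, where $E_t^1=(1-\eta)^t\Delta_0$ is the initialization bias, $E_t^2$ is a single $T$-step martingale handled by Freedman's inequality (this is the only place the $1/N(s)$ averaging enters, giving the $\sqrt{\eta/N_{\min}}/(1-\gamma)$ fluctuation), and $E_t^3$ collects the $P(V^\star-V_{i,k})$ terms, which require separately controlling the within-round client drift $\|Q_{i,k}-Q_{\iota(i)}\|_{\SM_k}$ via the constraint on $\eta(E-1)$. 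The result is a recursion of the form $\|\Delta_t\|_\infty\le M+\frac{1+\gamma}{2}\max_{\iota(t)-\beta E\le i\le t-1}\|\Delta_i\|_\infty$, solved by iterating $L$ times over blocks of length $\beta E$ — note the contraction constant there is $\frac{1+\gamma}{2}$, not any $\gamma_{\text{fed-}n}$. Lemma~\ref{lem:Tfedkn_contract}, which you take as the engine of the proof, is in fact only motivational in the paper; the quantity $\gamma_{\text{fed-}E}^k$ never appears in the actual bound. Your closing paragraph correctly identifies the delicate point (non-stationary initialization, variance reduction, and uniform union bounds), but the round-level reduction you propose to resolve it is not a viable path to these exponents.
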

\begin{proof}
The proof of this theorem follows a similar approach to the proof of Theorem 1 in \cite{woo2023blessing}.
However, we need to be cautious in addressing the differences in the analysis arising from the variations in the update scheme of FedQ-SynQ.
\paragraph{Notations and Update Schemes:}
Let us introduce some notations in the following proof. We will slightly abuse the notations, the subscript of $Q$ (and some related variables) represents the $t$-th iteration, rather than the $r$-th communication round. 
We denote $\iota(t): =E \lfloor \frac{t}{E} \rfloor $ as the most recent synchronization step until $t$.
% , and $r(t):=\lfloor \frac{t}{E} \rfloor$ as the number of synchronization steps until $t$.

The Q function of agent $k\in[N]$ is initialized as $Q_{0,k}=Q_0$, it is worthy noting that the $k$-th agent only updates the Q function for $s\in \SM_k$ locally. 
The local update scheme for the local Q function without aggregation is given by: for all $(s,a)\in\SM_k\times\AM$, $t\in[T]$, 
\begin{equation*}
\begin{aligned}
        Q_{t-\frac{1}{2},k}(s, a) &:= (1-\eta) Q_{t-1,k}(s,a) + \eta \left[ \RM(s,a) + \gamma\left( \ind_k(s_{t,k}(s,a))  V_{t-1,k}(s_{t,k}(s,a)) +\ind_k^c(s_{t,k}(s,a))  V_{\iota(t-1)}(s_{t,k}(s,a)) \right)  \right]\\
        &=(1-\eta) Q_{t-1,k}(s,a) + \eta \left( \RM(s,a) + \gamma V_{t-1,k}(s_{t,k}(s,a))\right),
\end{aligned}
\end{equation*}
where $s_{t,k}(s,a)\sim \PM(\cdot|s,a)$ independently, $\ind_k^c(s):=1-\ind_k(s)$, and
\begin{equation*}
    V_{t,k}(s) =  \max_{a\in \AM} Q_{t,k}(s,a),\forall s\in\SM,
\end{equation*}
\begin{equation*}
    V_{t}(s) :=  \max_{a\in \AM} Q_{t}(s,a),\forall s\in\SM,
\end{equation*}
\begin{equation*}
    Q_{t}(s) := \frac{1}{N(s)}\sum_{k=1}^N Q_{t-\frac{1}{2},k}(s,a)\ind_k(s),\forall (s,a)\in\SM\times\AM,
\end{equation*}
the second equality holds because $V_{t-1,k}(s)=V_{\iota(t-1)}(s)$ for $s\notin\SM_k$.
Note that the definition of $Q_t$ does not require $t \equiv 0 \; (\text{mod}~ E)$.

The local Q function is aggregated as follow: for all $(s,a)\in\SM\times\AM$,
\begin{equation*}
    \begin{aligned}
Q_{t,k}(s,a) =
  \begin{cases}
    \frac{1}{N(s)} \sum_{k=1}^N Q_{t-\frac{1}{2},k}(s,a)\ind_k(s) &~\text{if}~ t \equiv 0 \; (\text{mod}~ E)\\
    Q_{t-\frac{1}{2},k}(s,a) &~\text{otherwise}
  \end{cases}.
\end{aligned}
\end{equation*}

We further define the transition matrix $P:=\left(\PM(s'|s,a) \right)_{(s,a)\in\SM\times\AM,s'\in\SM} \in \RB^{|\SM||\AM| \times |\SM|}$, the local empirical transition matrix at the $t$-th iteration $P_{t,k}:=\left(\ind\{s'=s_{t,k}(s,a) \}\right)_{(s,a)\in\SM\times\AM,s'\in\SM} \in \{0,1\}^{|\SM||\AM| \times |\SM|}$, error $\Delta_t:=Q^\star-Q_t\in\RB^{|\SM||\AM|}$, projection matrix for $k$-th agent $\Pi_k:=\text{diag}\left\{ \left(\ind\{s\in\SM_k \}\right)_{(s,a)\in\SM\times\AM}\right\}\in\RB^{|\SM||\AM|\times|\SM||\AM|}$, and weighting matrix $\Lambda:=\text{diag}\left\{ \left(\frac{1}{N(s)}\right)_{(s,a)\in\SM\times\AM}\right\}\in\RB^{|\SM||\AM|\times|\SM||\AM|}$.
We can check that,
\begin{equation*}
    \Pi_k(\IM_{|\SM||\AM|}-\Pi_k)=0,
\end{equation*}
\begin{equation*}
    \Lambda\sum_{k=1^N}\Pi_k=\IM_{|\SM||\AM|},
\end{equation*}
where $\IM_{|\SM||\AM|}\in \RB^{|\SM||\AM|\times|\SM||\AM|}$ is the identity matrix, we write $\Pi_k^c:=\IM_{|\SM||\AM|}-\Pi_k$
With these notations, we can express the update scheme in a matrix-vector form as follows:
\begin{equation*}
        Q_{t-\frac{1}{2},k}=\Pi_k\left[(1-\eta) Q_{t-1,k} + \eta \left( \RM + \gamma P_{t,k} V_{t-1,k}\right)\right]+\Pi_k^c Q_{t-1,k},
\end{equation*}
\begin{equation*}
        Q_t=\Lambda\sum_{k=1}^N \Pi_k Q_{t,k}=\Lambda\sum_{k=1}^N \Pi_k Q_{t-\frac{1}{2},k}.
\end{equation*}
The complete description of FedQ-SynQ in the matrix-vector form is summarized in Algorithm~\ref{Alg:FedQ_SynQ}.
\begin{algorithm}[!htb]
    \caption{FedQ-SynQ}
    \begin{algorithmic}
    \label{Alg:FedQ_SynQ}
        \STATE \textbf{Initialization:} $\langle \SM,\AM,\RM,\PM,\gamma,\{\SM_k\}_{k=1}^N\rangle$ as the federated control problem, $~Q_{0,k}=Q_0$ as the initial Q functions, $k\in[N]$.
        \FOR{ $r=0$ {\bfseries to} $R-1$ }
            % \FOR{ $e=1$ {\bfseries to} $E$ }
                \FOR{ $k=1$ {\bfseries to} $N$ }
                    \FOR{ $e=1$ {\bfseries to} $E$ }
                        \STATE $t\leftarrow rE+e$.
                        \STATE $V_{t-1,k}\leftarrow \max_a Q_{t-1,k}(\cdot,a)$.
                        \STATE $Q_{t,k}=Q_{t-\frac{1}{2},k}\leftarrow\Pi_k\left[(1-\eta) Q_{t-1,k} + \eta \left( \RM + \gamma P_{t-1,k} V_{t-1,k}\right)\right]+\Pi_k^c Q_{t-1,k}$.
                    \ENDFOR
                \ENDFOR
            % \ENDFOR
            \STATE $Q_{(r+1)E}\leftarrow\Lambda\sum_{k=1}^N \Pi_k Q_{(r+1)E,k}$.
            \FOR{ $k=1$ {\bfseries to} $N$ }
                \STATE $Q_{(r+1)E,k}\leftarrow Q_{(r+1)E}$.
            \ENDFOR
        \ENDFOR
        \STATE \textbf{return} $Q_{T}=Q_{RE}$.
    \end{algorithmic}
\end{algorithm}
\paragraph{Error Decomposition:}
Now, we are ready to analyze error $\Delta_t$ of FedQ-SynQ.
\begin{equation*}
\begin{aligned}
    \Delta_t&=Q^\star-Q_t\\
    &=Q^\star-\Lambda\sum_{k=1}^N \Pi_k Q_{t-\frac{1}{2},k}\\
    &=\Lambda\sum_{k=1}^N \Pi_k (Q^\star-Q_{t-\frac{1}{2},k})\\
    &=\Lambda\sum_{k=1}^N \Pi_k\Bigg\{\Pi_k\Big[(1-\eta) (Q^\star-Q_{t-1,k}) + \eta \left(Q^\star- \RM - \gamma P_{t,k} V_{t-1,k}\right)\Big]+\Pi_k^c (Q^\star-Q_{t-1,k})  \Bigg\}\\
    &=(1-\eta)\Lambda\sum_{k=1}^N \Pi_k (Q^\star-Q_{t-1,k})+\eta\gamma \Lambda\sum_{k=1}^N \Pi_k(PV^\star-P_{t,k}V_{t-1,k})\\
    &=(1-\eta)\Delta_{t-1}+\eta\gamma \Lambda\sum_{k=1}^N \Pi_k \Big[(P-P_{t,k})V_{t-1,k}+P(V^\star-V_{t-1,k}) \Big],
\end{aligned}
\end{equation*}
we apply it recursively,
\begin{equation*}
      \Delta_t  
  =   \underbrace{ (1-\eta)^t \Delta_0}_{=: E_t^1}  +  \underbrace{ \eta\gamma\Lambda\sum_{i=1}^{t}(1-\eta)^{t-i}\sum_{k=1}^N\Pi_k( P  - P_{i,k}) V_{i-1,k} }_{ =: E_t^2} +  \underbrace{ \eta\gamma \Lambda \sum_{i=1}^{t}    (1-\eta)^{t-i}  \sum_{k=1}^N \Pi_k  P  (V^{\star} - V_{i-1,k}) }_{ =: E_t^3},
\end{equation*}
which is similar to the form of error decomposition in \cite{woo2023blessing}.

$E_t^1$ is easy to deal with: $\|E_t^1\|_\infty=(1-\eta)^t
    \left\|   \Delta_0  \right\|_{\infty}  
  \le   \frac{ (1-\eta)^t}{1-\gamma} $.
\paragraph{Bounding $\E_t^2$:}
$E_t^2$ is a summation of a bounded martingale difference process, which can be bounded leveraging Freedman's inequality \cite{freedman1975tail}. 
Here, we use the following form:
\begin{lem}[Theorem~6 in \cite{li2023q}] \label{thm:Freedman}
Suppose that $Y_{n}=\sum_{k=1}^{n}X_{k}\in\mathbb{R}$,
where $\{X_{k}\}$ is a martingale difference sequence bounded by $B>0$.
Define 
$W_{n}:=\sum_{k=1}^{n}\mathbb{E}\left[X_{k}^{2}|\FM_{k-1}\right]$,
where $\FM_k:=\sigma\left(X_1,\cdots,X_k\right)$. 
Suppose that $W_{n}\leq\sigma^{2}$ holds deterministically for some $\sigma^2>0$.
Then, with probability at least $1-\delta$,
we have
\begin{equation}
\left|Y_{n}\right|\leq\sqrt{8\sigma^2\log\frac{2}{\delta}}+\frac{4}{3}B\log\frac{2}{\delta}.\label{eq:Freedman-random}
\end{equation}
\end{lem}
For $(s,a)\in\SM\times\AM$, we can write $E_t^2(s,a)$ as
\begin{equation*}
      E_t^2(s,a) = \frac{\eta\gamma}{N(s)}  \sum_{i=1}^{t} (1-\eta)^{t-i}  \sum_{k:\ind_k(s)=1} (P(s,a) - P_{i,k}(s,a) ) V_{i-1,k}=:\frac{\eta\gamma}{N(s)}  \sum_{i=1}^{t}\sum_{k\in I(s)} z_{i,k}^{(t)}(s,a),
\end{equation*}
which satisfies the condition of Freedman's inequality, where $z_{i,k}^{(t)}(s,a)=(1-\eta)^{t-i}(P(s,a) - P_{i,k}(s,a) ) V_{i-1,k}$ for $(s,a)\in\SM_k\times\AM$, and $I(s):=\{k\in[N]:\ind_k(s)=1\}$.
The upper bound of the martingale difference process is given by
\begin{equation*}
\begin{aligned}
          B_t(s,a) &:= \max_{k\in I(s),i\in[t]} |z_{i,k}^{(t)}(s,a)|\\
          &\leq \max_{k\in I(s),i\in[t]}\left|(P(s,a) - P_{i,k}(s,a) ) (V_{i-1,k}-\frac{1}{2(1-\gamma)}\mathbf{1}_{|\SM|})\right| \\
          &\leq \max_{k\in I(s),i\in[t]}\big(\| P(s,a)\|_{1} +  \| P_{i,k}(s,a)\|_1 \big) \| V_{i-1,k}-\frac{1}{2(1-\gamma)}\mathbf{1}_{|\SM|}\|_{\infty} \\
          &\leq \frac{1}{1-\gamma},  
\end{aligned}
\end{equation*}

The variance term is given by 
\begin{equation*}
\begin{aligned}
    W_t(s,a)&=\sum_{i=1}^t\sum_{k\in I(s)}\Var\left(z_{i,k}^{(t)}(s,a)|\{V_{j,l}\}_{j<i,l\in I(s)} \right)\\
    &=\sum_{i=1}^t\sum_{k\in I(s)}(1-\eta)^{2(t-i)}\Big[P(s,a) V_{i-1,k}^2-(P(s,a)V_{i-1,k})^2 \Big]\\
    &\leq \sum_{i=1}^t\sum_{k\in I(s)}(1-\eta)^{2(t-i)}\| P(s,a)\|_{1} \| V_{i-1,k}\|_{\infty}^2\\
    &\leq \frac{N(s)}{(1-\gamma)^2}\sum_{i=0}^{\infty} (1-\eta)^{2i}\\
    &=\frac{N(s)}{(1-\gamma)^2} \frac{1}{1-(1-\eta)^2}\\
    &\leq \frac{N(s)}{\eta(1-\gamma)^2}\\
    &=:\sigma^2(s).
\end{aligned}
\end{equation*}
By applying the Freedman's inequality and utilizing the union bound over $s\in\SM,a\in\AM,t\in[T]$, we obtain, with probability at least $1-\frac{\delta}{3}$ , for all $s\in\SM,a\in\AM,t\in[T]$,
\begin{equation*}
\begin{aligned}
    |E_t^2(s,a)|&=\frac{\eta\gamma}{N(s)}  \left|\sum_{i=1}^{t}\sum_{k\in I(s)} z_{i,k}^{(t)}(s,a)\right|\\
    &\leq\frac{\eta\gamma}{N(s)} \left(\sqrt{8\frac{N(s)}{\eta(1-\gamma)^2}\log\frac{6|\SM||\AM|T}{\delta}}+\frac{4}{3}\frac{1}{1-\gamma}\log\frac{6|\SM||\AM|T}{\delta}  \right)\\
    &=\frac{2\gamma}{1-\gamma}\sqrt{\log\frac{6|\SM||\AM|T}{\delta}}\left(\sqrt{\frac{2\eta}{N(s)}}+\frac{2}{3}\eta\sqrt{\log\frac{6|\SM||\AM|T}{\delta}} \right)\\
    &\leq \frac{6\gamma}{1-\gamma}\sqrt{\frac{\eta}{N_{\min}}\log\frac{6|\SM||\AM|T}{\delta}},
\end{aligned}
\end{equation*}
where the last inequality holds since we will choose the step size $\eta\leq\frac{9}{2}N_{\min}\left(\log\frac{6|\SM||\AM|T}{\delta} \right)^{-1}$. Hence,
\begin{equation*}
    \|E_t^2\|_\infty\leq \frac{6\gamma}{1-\gamma}\sqrt{\frac{\eta}{N_{\min}}\log\frac{6|\SM||\AM|T}{\delta}}.
\end{equation*}

\paragraph{Bounding $\E_t^3$:} Let $\beta\in\{0,1,\cdots,R-1\}$, we will provide an upper bound for $\|E_t^3\|_\infty$ for all $\beta E\leq t\leq T$ with high probability, and the choice of $\beta$ will be postponed.
For $(s,a)\in\SM\times\AM$,
\begin{equation*}
    \begin{aligned}
        |E_t^3(s,a)|
  &= \frac{\eta \gamma}{N(s)}\left|\sum_{i=0}^{t-1} \sum_{k\in I(s)} (1-\eta)^{t-i-1} P(s,a) ( V^{\star} - V_{i,k}) \right| \\
  &\leq \underbrace{ \frac{\eta\gamma}{N(s)}\left| \sum_{i=0}^{\iota(t)-\beta E-1} \sum_{k\in I(s)} (1-\eta)^{t-i-1} P(s,a) (V^{\star} - V_{i,k}) \right| }_{=:E_t^{3a}(s,a)}
  + \underbrace{ \frac{\eta\gamma }{N(s)}\left|\sum_{i=\iota(t)-\beta E}^{t-1} \sum_{k\in I(s)} (1-\eta)^{t-i-1} P(s,a) ( V^{\star} - V_{i,k}) \right |}_{=:E_t^{3b}(s,a)}.
    \end{aligned}
\end{equation*}
We will provide upper bounds for $E_t^{3a}$ and $E_t^{3b}$ separately.
\begin{equation*}
    \begin{aligned}
  E_t^{3a}(s,a)
  & \leq \frac{\eta\gamma}{N(s)} \sum_{i=0}^{\iota(t)-\beta E-1} \sum_{k\in I(s)}(1-\eta)^{t-i-1} \| P(s,a)\|_{1} \| V^{\star}-V_{i,k}\|_{\infty}  \\
&  \leq \frac{ \eta \gamma}{1-\gamma} (1-\eta)^{\beta E+(t-\iota(t))} \sum_{i=0}^{\infty} (1-\eta)^{i}\\
  &\le \frac{ \gamma}{1-\gamma} (1-\eta)^{\beta E}.
    \end{aligned}
\end{equation*}

\begin{equation*}
    \begin{aligned}
  E_t^{3b}(s,a)
  & \leq \frac{\eta\gamma}{N(s)} \sum_{i=\iota(t)-\beta E}^{t-1} \sum_{k\in I(s)}(1-\eta)^{t-i-1} \| P(s,a)\|_{1} \| V^{\star}-V_{i,k}\|_{\infty} \\
&  \leq \frac{\eta\gamma}{N(s)} \sum_{i=\iota(t)-\beta E}^{t-1} \sum_{k\in I(s)}(1-\eta)^{t-i-1}  \| V^{\star}-V_{i,k}\|_{\infty},
    \end{aligned}
\end{equation*}
we will first give upper bound and lower bound for $ V^{\star}(s)-V_{i,k}(s)$ separately for all $s\in\SM$. 
Let us define $a^{\star}(s) := \argmax_{a \in \AM} Q^{\star}(s,a), a_{i,k}(s) := \argmax_{a \in\AM} Q_{i,k}(s,a), a_i(s) := \argmax_{a \in \AM}Q_i(s,a)$, and $ d^{k}_{i,j}:=Q_{j,k}-Q_{i,k}$.
\begin{equation*}
    \begin{aligned}
  V^{\star}(s)-V_{i,k}(s)&=Q^\star(s,a^\star(s))-Q_{i,k}(s,a_{i,k}(s))\\
  &\leq Q^\star(s,a^\star(s))-Q_{i,k}(s,a^\star(s))\\
  &=\left(Q^\star(s,a^\star(s))-Q_{\iota(i)}(s,a^\star(s))\right) +\left(Q_{\iota(i)}(s,a^\star(s))-Q_{i,k}(s,a^\star(s))\right)\\
  &=\Delta_{\iota(i)}(s,a^\star(s))-d^k_{\iota(i),i}(s,a^\star(s)),
    \end{aligned}
\end{equation*}
\begin{equation*}
    \begin{aligned}
    V^{\star}(s) - V_{i,k}(s)&= Q^{\star}(s, a^{\star}(s)) - Q_{i,k}(s, a_{i,k}(s)) \\
    &\geq Q^{\star}(s, a_{\iota(i)}(s)) -Q_{i,k}(s, a_{i,k}(s))\\
    &=\left(Q^{\star}(s, a_{\iota(i)}(s)) - Q_{\iota(i)}(s, a_{\iota(i)}(s))\right) +  \left(Q_{\iota(i)}(s, a_{\iota(i)}(s)) - Q_{i,k}(s, a_{i,k}(s))\right)\\
    &\geq \left(Q^{\star}(s, a_{\iota(i)}(s)) - Q_{\iota(i)}(s, a_{\iota(i)}(s))\right) +  \left(Q_{\iota(i)}(s, a_{i,k}(s)) - Q_{i,k}(s, a_{i,k}(s))\right)\\
    &=\Delta_{\iota(i)}(s,a_{\iota(i)}(s))-d^k_{\iota(i),i}(s,a_{i,k}(s)),
    \end{aligned}
\end{equation*}
combine the results, we have
\begin{equation*}
    \|V^{\star} - V_{i,k}\|_\infty\leq\|\Delta_{\iota(i)}\|_\infty+\|d^k_{\iota(i),i}\|_\infty=\|\Delta_{\iota(i)}\|_\infty+\|Q_{i,k}-Q_{\iota(i)}\|_{\SM_k},
\end{equation*}
we need to bound $\|Q_{i,k}-Q_{\iota(i)}\|_{\SM_k}$, which represents the local update magnitude for agent $k$ within a communication round, and it can be easily handled. 
We start in a similar way as we expanded $\Delta_t$ at the beginning, for all $j\in\{\iota(i), \iota(i)+1,\cdots i-1\}$, $(s,a)\in\SM_k\times\AM$
\begin{equation*}
    \begin{aligned}
        Q_{j+1,k}(s,a)-Q_{j,k}(s,a)&=(1-\eta)Q_{j,k}(s,a)+\eta\left(\RM(s,a)+\gamma P_{j+1,k}(s,a) V_{j,k}\right)   -Q_{j,k}(s,a) \\
        &=\eta\left(\RM(s,a)+\gamma P_{j+1,k}(s,a) V_{j,k}\right)-\eta Q_{j,k}(s,a) \\
        &=\eta\left(\RM(s,a)+\gamma P_{j+1,k}(s,a) V_{j,k}-Q^\star(s,a)\right)+\eta \left(Q^\star(s,a)- Q_{j,k}(s,a) \right)\\
        &=\eta\left(\gamma P_{j+1,k}(s,a)(V_{j,k}-V^\star)+\gamma(P_{j+1,k}(s,a)-P(s,a))V^\star+\Delta_{j,k}(s,a)\right),
    \end{aligned}
\end{equation*}
where $\Delta_{j,k}:=Q^\star- Q_{j,k}$ is the local error term, we apply the result recursively and obtain
\begin{equation*}
    \begin{aligned}
        |Q_{i,k}(s,a)-Q_{\iota(i)}(s,a)|&= \eta\left| \sum_{j=\iota(i)}^{i-1}\Big(\gamma P_{j+1,k}(s,a)(V_{j,k}-V^\star)+\gamma(P_{j+1,k}(s,a)-P(s,a))V^\star+\Delta_{j,k}(s,a)\Big)
  \right|  \\
        &\leq\eta\gamma\sum_{j=\iota(i)}^{i-1}  \|P_{j+1,k}(s,a)\|_1 \|V_{j,k}-V^\star\|_\infty +\eta\gamma \left|\sum_{j=\iota(i)}^{i-1}(P_{j+1,k}(s,a)-P(s,a))V^\star \right|+\eta\sum_{j=\iota(i)}^{i-1}|\Delta_{j,k}(s,a)|\\
        &\leq 2\eta \sum_{j=\iota(i)}^{i-1}\|\Delta_{j,k}\|_\infty+\eta\gamma\left|\sum_{j=\iota(i)}^{i-1}(P_{j+1,k}(s,a)-P(s,a))V^\star \right|,
    \end{aligned}
\end{equation*}
where the last inequality holds due to Lemma~\ref{lem:max}.
The second term can be bounded using the Hoeffding's inequality for the sum of independent bounded random variables (Theorem 2.2.6 in \cite{vershyninHighdimensionalProbabilityIntroduction2018}): $|(P_{j+1,k}(s,a)-P(s,a))V^\star|\leq \frac{1}{1-\gamma}$, combine Hoeffding's inequality with union bound over $s\in\SM_k, a\in\AM, t\in[T]$ and $k\in[N]$, we have, with probability at least $1-\frac{\delta}{3}$,
\begin{equation*}
    \begin{aligned}
        \eta\gamma\left|\sum_{j=\iota(i)}^{i-1}(P_{j+1,k}(s,a)-P(s,a))V^\star \right|&\leq\eta\gamma\sqrt{\frac{1}{2}\sum_{j=\iota(i)}^{i-1}\frac{1}{(1-\gamma)^2}\log\frac{6|\SM||\AM|TN}{\delta}}\\
        &\leq \frac{\eta\gamma}{1-\gamma}\sqrt{\frac{E-1}{2}\log\frac{6|\SM||\AM|TN}{\delta}}.
    \end{aligned}
\end{equation*}
As for the first term $2\eta \sum_{j=\iota(i)}^{i-1}\|\Delta_{j,k}\|_\infty$, we aim to bound $\|\Delta_{j,k}\|_\infty$ using $\|\Delta_{\iota(i)}\|_\infty$, which will allow us to eventually derive a recursive relationship for the error term $\Delta_t$.

\paragraph{Bounding $\|\Delta_{j,k}\|_\infty$ using $\|\Delta_{\iota(i)}\|_\infty$:}
Once again, we perform the same expansion as at the beginning to establish the relationship between $\Delta_{j,k}$ and $\Delta_{\iota(i)}$ for $j\in\{\iota(i), \iota(i)+1,\cdots i-1\}$. 
For all $(s,a)\in\SM_k\times\AM$,
\begin{equation*}
    \begin{aligned}
        &\Delta_{j,k}(s,a)\\
        =&Q^\star(s,a)-Q_{j,k}(s,a)\\
        =&(1-\eta)\Delta_{j-1,k}(s,a)+\eta\gamma\Big[(P(s,a)-P_{j,k}(s,a))V^\star+P_{j,k}(s,a)(V^\star-V_{j-1,k}) \Big]\\
        =&\underbrace{(1-\eta)^{j-\iota(i)}\Delta_{\iota(i)}(s,a)}_{=:F_j^1(s,a)}+\underbrace{\eta\gamma\sum_{l=\iota(i)}^{j-1} (1-\eta)^{j-l-1}(P(s,a)-P_{l+1,k}(s,a))V^\star}_{=:F_j^2(s,a)}+\underbrace{\eta\gamma\sum_{l=\iota(i)}^{j-1}(1-\eta)^{j-l-1}P_{l+1,k}(s,a)(V^\star-V_{l,k})}_{=:F_j^3(s,a)},
    \end{aligned}
\end{equation*}
$F_j^1(s,a)$ is easy to deal with: $|F_t^1(s,a)|\leq(1-\eta)^{j-\iota(i)}
    \left\|   \Delta_{\iota(i)}  \right\|_{\infty}$.
    
$F_j^2(s,a)$ can be bounded using the Hoeffding's inequality for the sum of independent bounded random variables (Theorem 2.2.6 in \cite{vershyninHighdimensionalProbabilityIntroduction2018}): $|(P(s,a)-P_{l+1,k}(s,a))V^\star|\leq \frac{1}{1-\gamma}$, combine Hoeffding's inequality with union bound over $s\in\SM_k, a\in\AM, t\in[T]$ and $k\in[N]$, we have, with probability at least $1-\frac{\delta}{3}$,
\begin{equation*}
    \begin{aligned}
        |F_j^2(s,a)|&=\eta\gamma\left|\sum_{l=\iota(i)}^{j-1} (1-\eta)^{j-l-1}(P(s,a)-P_{l+1,k}(s,a))V^\star\right|\\
        &\leq\eta\gamma\sqrt{\frac{1}{2}\sum_{l=\iota(i)}^{j-1}(1-\eta)^{2(j-l-1)}\frac{1}{(1-\gamma)^2}\log\frac{6|\SM||\AM|TN}{\delta}}\\
        &\leq \frac{\eta\gamma}{1-\gamma}\sqrt{\frac{1}{2}\sum_{l=0}^{\infty}(1-\eta)^{2l}\log\frac{6|\SM||\AM|TN}{\delta}}\\
        &\leq \frac{\gamma}{1-\gamma}\sqrt{\frac{\eta}{2}\log\frac{6|\SM||\AM|TN}{\delta}}\\
        &=:\rho.
    \end{aligned}
\end{equation*}
As for $F_j^3(s,a)$,
\begin{equation*}
    \begin{aligned}
        |F_j^3(s,a)|&=\eta\gamma\left|\sum_{l=\iota(i)}^{j-1}(1-\eta)^{j-l-1}P_{l+1,k}(s,a)(V^\star-V_{l,k})\right|\\
        &\leq \eta\gamma\sum_{l=\iota(i)}^{j-1}(1-\eta)^{j-l-1} \|P_{l+1,k}(s,a)\|_{1}\|V^\star-V_{l,k} \|_\infty\\
        &\leq\eta\gamma\sum_{l=\iota(i)}^{j-1}(1-\eta)^{j-l-1}\|\Delta_{l,k} \|_\infty.
    \end{aligned}
\end{equation*}
Combine the previous upper bounds, we obtain
\begin{equation*}
    \begin{aligned}
        \|\Delta_{j,k}\|_{\SM_k}\leq(1-\eta)^{j-\iota(i)}
    \left\|   \Delta_{\iota(i)}  \right\|_{\infty}+\rho+\eta\gamma\sum_{l=\iota(i)}^{j-1}(1-\eta)^{j-l-1}\|\Delta_{l,k} \|_\infty.
    \end{aligned}
\end{equation*}
Given $\eta E\leq\frac{1}{2}$, we will prove by induction for $m\in\{0,1,\cdots,E-1\}$ that,
\begin{equation*}
    \begin{aligned}
        \|\Delta_{\iota(i)+m,k}\|_{\infty}\leq \left\|   \Delta_{\iota(i)}  \right\|_{\infty}+2\rho,
    \end{aligned}
\end{equation*}
note that 
\begin{equation*}
    \begin{aligned}
        \|\Delta_{\iota(i)+m,k}\|_{\infty}&=\max\{\|\Delta_{\iota(i)+m,k}\|_{\SM_k}, \|\Delta_{\iota(i)+m,k}\|_{\SM_k^c}\}\\
        &=\max\{\|\Delta_{\iota(i)+m,k}\|_{\SM_k}, \|\Delta_{\iota(i)}\|_{\SM_k^c}\}\\
        &\leq\max\{\|\Delta_{\iota(i)+m,k}\|_{\SM_k}, \|\Delta_{\iota(i)}\|_{\infty}\},
    \end{aligned}
\end{equation*}
hence $\|\Delta_{\iota(i)+m,k}\|_{\SM_k}\leq \left\|   \Delta_{\iota(i)}  \right\|_{\infty}+2\rho$ is equivalent to $\|\Delta_{\iota(i)+m,k}\|_{\infty}\leq \left\|   \Delta_{\iota(i)}  \right\|_{\infty}+2\rho$.

When $m=0$, $\|\Delta_{\iota(i),k}\|_{\infty}=\|\Delta_{\iota(i)}\|_{\infty}<\|\Delta_{\iota(i)}\|_{\infty}+2\rho$ holds.
When $m\in[E-1]$, suppose $\|\Delta_{\iota(i)+l,k}\|_{\SM_k}\leq \left\|   \Delta_{\iota(i)}  \right\|_{\infty}+2\rho$ holds for $l\in\{0,1,\cdots,m-1\}$, 
\begin{equation*}
    \begin{aligned}
        \|\Delta_{\iota(i)+m,k}\|_{\SM_k}&\leq(1-\eta)^{m}
    \left\|   \Delta_{\iota(i)}  \right\|_{\infty}+\rho+\eta\gamma\sum_{l=\iota(i)}^{\iota(i)+m-1}(1-\eta)^{\iota(i)+m-l-1}\|\Delta_{l,k} \|_\infty \\
    &\leq  (1-\eta)^{m}
    \left\|   \Delta_{\iota(i)}  \right\|_{\infty}+\rho+\eta\gamma \left(\left\|   \Delta_{\iota(i)}  \right\|_{\infty}+2\rho\right)\sum_{l=0}^{m-1}(1-\eta)^{l}\\
    &= \Big\{(1-\eta)^{m}+\gamma\left[1-(1-\eta)^m\right]  \Big\}\left\| \Delta_{\iota(i)}\right\|_{\infty} +   \Big\{1+2\gamma\left[1-(1-\eta)^m\right]  \Big\} \rho,
    \end{aligned}
\end{equation*}
where $(1-\eta)^{m}+\gamma\left[1-(1-\eta)^m\right]< (1-\eta)^{m}+1-(1-\eta)^m=1$, and for the second term
\begin{equation*}
    \begin{aligned}
        (1-\eta)^m\geq (1-\eta)^E=\left[(1-\eta)^{\frac{1}{\eta}}\right]^{\eta E}\geq \left(\frac{1}{4}\right)^{\frac{1}{2}}=\frac{1}{2},
    \end{aligned}
\end{equation*}
since $\eta E\leq \frac{1}{2}$ (implying $\eta\leq\frac{1}{2}$), hence $\|\Delta_{\iota(i)+m,k}\|_{\SM_k}\leq \left\| \Delta_{\iota(i)}\right\|_{\infty}+2\rho$, which is desired.

Substituting this upper bound back into the previous result, we have:
\begin{equation*}
    \begin{aligned}
        \|Q_{i,k}-Q_{\iota(i)}\|_{\SM_k}&\leq 2\eta \sum_{j=\iota(i)}^{i-1}\|\Delta_{j,k}\|_\infty+\eta\gamma\left|\sum_{j=\iota(i)}^{i-1}(P_{j+1,k}(s,a)-P(s,a))V^\star \right|\\
        &\leq 2\eta (E-1)\left\| \Delta_{\iota(i)}\right\|_{\infty} +\frac{4\eta \gamma(E-1)}{1-\gamma}\sqrt{\frac{\eta}{2}\log\frac{6|\SM||\AM|TN}{\delta}}   +\frac{\eta\gamma}{1-\gamma}\sqrt{\frac{E-1}{2}\log\frac{6|\SM||\AM|TN}{\delta}}\\
        &=2\eta (E-1)\left\| \Delta_{\iota(i)}\right\|_{\infty} + \frac{\eta\gamma}{1-\gamma}\sqrt{\frac{E-1}{2}\log\frac{6|\SM||\AM|TN}{\delta}}\left(  4\sqrt{\eta (E-1)}+1\right)\\
        &\leq 2\eta (E-1)\left\| \Delta_{\iota(i)}\right\|_{\infty} + \frac{3\eta\gamma}{1-\gamma}\sqrt{(E-1)\log\frac{6|\SM||\AM|TN}{\delta}}\\
        &=:2\eta (E-1)\left\| \Delta_{\iota(i)}\right\|_{\infty} +\kappa,
    \end{aligned}
\end{equation*}
therefore,
\begin{equation*}
    \begin{aligned}
  E_t^{3b}(s,a)
&  \leq \frac{\eta\gamma}{N(s)} \sum_{i=\iota(t)-\beta E}^{t-1} \sum_{k\in I(s)}(1-\eta)^{t-i-1}  \left(\|\Delta_{\iota(i)}\|_\infty+\|Q_{i,k}-Q_{\iota(i)}\|_{\SM_k}\right)\\
&\leq \frac{\eta\gamma}{N(s)} \sum_{i=\iota(t)-\beta E}^{t-1} \sum_{k\in I(s)}(1-\eta)^{t-i-1}\Big\{\left[1+2\eta(E-1)\right]\|\Delta_{\iota(i)}\|_\infty+ \kappa \Big\}\\
&\leq \eta\gamma \Big\{\left[1+2\eta(E-1)\right]\max_{\iota(t)-\beta E\leq i\leq t-1}\|\Delta_{\iota(i)}\|_\infty+ \kappa \Big\}\sum_{i=0}^\infty (1-\eta)^i\\
&=\gamma\left[1+2\eta(E-1)\right]\max_{\iota(t)-\beta E\leq i\leq t-1}\|\Delta_{i}\|_\infty+ \gamma\kappa,
    \end{aligned}
\end{equation*}
and finally,
\begin{equation*}
    \begin{aligned}
  \|E_t^{3}\|_\infty\leq \frac{\gamma}{1-\gamma}(1-\eta)^{\beta E}+\gamma\left[1+2\eta(E-1)\right]\max_{\iota(t)-\beta E\leq i\leq t-1}\|\Delta_{i}\|_\infty+ \frac{3\eta\gamma^2}{1-\gamma}\sqrt{(E-1)\log\frac{6|\SM||\AM|TN}{\delta}}.
    \end{aligned}
\end{equation*}

\paragraph{Solving the Recursive  Inequality:}
Substituting the upper bounds for $E_t^1$, $E_t^2$, and $E_t^3$ back into the expression for the error decomposition, we obtain: with probability at least $1-\delta$, for all $\beta E\leq t\leq T$,
\begin{equation*}
    \begin{aligned}
  \|\Delta_t\|_\infty&\leq \|E_t^1\|_\infty+\|E_t^2\|_\infty+\|E_t^3\|_\infty\\
  &\leq \frac{(1-\eta)^t}{1-\gamma}+\frac{6\gamma}{1-\gamma}\sqrt{\frac{\eta}{N_{\min}}\log\frac{6|\SM||\AM|T}{\delta}}
  +\frac{\gamma}{1-\gamma}(1-\eta)^{\beta E}+\\
  &\gamma\left[1+2\eta(E-1)\right]\max_{\iota(t)-\beta E\leq i\leq t-1}\|\Delta_{i}\|_\infty+ \frac{3\eta\gamma^2}{1-\gamma}\sqrt{(E-1)\log\frac{6|\SM||\AM|TN}{\delta}}\\
  &\leq \frac{2(1-\eta)^{\beta E}}{1-\gamma}+\frac{3\gamma}{1-\gamma}\sqrt{\eta\log\frac{6|\SM||\AM|TN}{\delta}}\left(\frac{2}{\sqrt{N_{\min}}}+\sqrt{\eta(E-1)} \right) +\frac{1+\gamma}{2}\max_{\iota(t)-\beta E\leq i\leq t-1}\|\Delta_{i}\|_\infty\\
  &\leq\left(\frac{2(1-\eta)^{\beta E}}{1-\gamma}+ \frac{9\gamma}{1-\gamma}\sqrt{\frac{\eta}{N_{\min}}\log\frac{6|\SM||\AM|TN}{\delta}} \right)+\frac{1+\gamma}{2}\max_{\iota(t)-\beta E\leq i\leq t-1}\|\Delta_{i}\|_\infty\\
  &=: M+\frac{1+\gamma}{2}\max_{\iota(t)-\beta E\leq i\leq t-1}\|\Delta_{i}\|_\infty,
    \end{aligned}
\end{equation*}
where the third and last inequality hold since we assume $\eta(E-1)\leq\min\left\{\frac{1-\gamma}{4\gamma},\frac{1}{N_{\min}}\right\}$.

Let us continue to analyse this recursive inequality. 
We aim to narrow down the range of $t$ in which an inequality holds and obtain a tighter upper bound.
Note that for all $2\beta E\leq t\leq T$, $\iota(t)-\beta E\geq \beta E$, hence
\begin{equation*}
    \begin{aligned}
  \|\Delta_t\|_\infty &\leq M+\frac{1+\gamma}{2}\max_{\iota(t)-\beta E\leq i\leq t-1}\|\Delta_{i}\|_\infty\\
  &\leq M+\frac{1+\gamma}{2}\max_{\iota(t)-\beta E\leq i\leq t-1}\left(M+\frac{1+\gamma}{2}\max_{\iota(i)-\beta E\leq j\leq i-1}\|\Delta_{j}\|_\infty \right)\\
  &\leq \left(1+\frac{1+\gamma}{2}\right)M+\left(\frac{1+\gamma}{2}\right)^2 \max_{\iota(t)-2\beta E\leq i\leq t-1}\|\Delta_{i}\|_\infty.
    \end{aligned}
\end{equation*}
We can apply the inequality recursively $L$ times ($L$ will be determined later, and $L\beta E$ should be less than $T$) and we have for all $L\beta E\leq t\leq T$,
\begin{equation*}
    \begin{aligned}
  \|\Delta_t\|_\infty &\leq \sum_{l=0}^{L-1}\left(\frac{1+\gamma}{2}\right)^l M+\left(\frac{1+\gamma}{2}\right)^L \max_{\iota(t)-L\beta E\leq i\leq t-1}\|\Delta_{i}\|_\infty\\
  &\leq \frac{2M}{1-\gamma}+\left(\frac{1+\gamma}{2}\right)^L\frac{1}{1-\gamma}.
    \end{aligned}
\end{equation*}
We can take $\beta = \left \lfloor \frac{1}{E} \sqrt{\frac{(1-\gamma)T}{2\eta }} \right \rfloor$ and $L = \left \lceil \sqrt{\frac{\eta T}{1-\gamma}} \right \rceil$, at this time, $L\beta E\leq T$, and 
\begin{equation*}
    \begin{aligned}
(1-\eta)^{\beta E}\leq \exp( - \eta \beta E)\leq \exp\left(-\frac{\sqrt{(1-\gamma)\eta T}}{2}\right),
    \end{aligned}
\end{equation*}

\begin{equation*}
    \begin{aligned}
\left(\frac{1+\gamma}{2}\right)^L=\left(1-\frac{1-\gamma}{2}\right)^L \leq \exp\left(-\frac{(1-\gamma)}{2}L \right) \leq \exp\left(-\frac{\sqrt{(1-\gamma)\eta T}}{2}\right),
    \end{aligned}
\end{equation*}

we have the upper bound for $\|\Delta_T\|_\infty$:
\begin{equation*}
    \begin{aligned}
  \|\Delta_T\|_\infty &\leq \frac{2M}{1-\gamma}+\left(\frac{1+\gamma}{2}\right)^L\frac{1}{1-\gamma}\\
  &\leq \frac{1}{(1-\gamma)^2}\left[ 5\exp\left(-\frac{\sqrt{(1-\gamma)\eta T}}{2}\right)+18\gamma\sqrt{\frac{\eta}{N_{\min}}\log\frac{6|\SM||\AM|TN}{\delta}}\right].
    \end{aligned}
\end{equation*}
For any $\varepsilon\in(0,1)$, if we choose $\eta$ and $T$ such that $\exp\left(-\frac{\sqrt{(1-\gamma)\eta T}}{2}\right) \leq \frac{\varepsilon(1-\gamma)^2}{10}$ and $\gamma\sqrt{\frac{\eta}{N_{\min}}\log\frac{6|\SM||\AM|TN}{\delta}} \leq \frac{\varepsilon(1-\gamma)^2}{36}$, we can ensure that $\|\Delta_T\|_\infty \leq \varepsilon$, at this time, $\eta\leq \frac{1}{1296}N_{\min} \varepsilon^2 (1-\gamma)^4  \left(\log{\frac{6|\SM||\AM|TN}{\delta}}\right)^{-1}$ and $T\geq\frac{1296}{N_{\min}\varepsilon^2(1-\gamma)^5 } \left(\log\frac{10}{\varepsilon(1-\gamma)^2}\right)^2 \log{\frac{6|\SM||\AM|TN}{\delta}}$.
If we ignore the logarithmic terms, we choose $T=\widetilde{O}\left(\frac{1}{N_{\min}\varepsilon^2(1-\gamma)^5 }\right)$ and $\eta=\widetilde{\Omega}\left(N_{\min} \varepsilon^2 (1-\gamma)^4\right)$, which is desired.
\end{proof}

\newpage
\section{Details of Experiments}
\subsection{Construction of environments}
\label{App:env_construction}
\textbf{RandomMDP.}
For a RandomMDP parameterized by $(N,K_S,E_S,N_S)$, the state space of corresponding global MDP $\MM=\langle \SM,\AM,\PM,\RM,\gamma \rangle$ has a size of 
\begin{equation*}
    |\SM| = K_S*N + E_S*N * N / N_S.
\end{equation*}
For $E_S*N*N/N_S$ states, we assign each state to $N_S$ agents.
In this way, the $k$-th agent is assigned a restricted region $\SM_k$ with a size of
\begin{equation*}
    |\SM_k|=K_S+E_S
\end{equation*}
in expectation.
If not specified $p_{\max}$, the transition dynamics $\PM$ of $\MM$ is randomly generated.
When we specify $p_{\max}$ to control degrees of connection among $\{\SM_k\}_{k=1}^N$, $E_S$ is by default set to $0$ for simplification in generating $\PM$.
Specifically, $\PM$ is the combination of two randomly generated transition dynamics ($\PM_{in}^k$ and $\PM_{out}^k$ for the $k$-th agent), where $\PM_{in}^k$ represent transitions from $\SM_k$ to $\SM_k$ and $\PM_{out}^k$ represent transitions f rom $\SM_k$ to $\SM\backslash\SM_k$.
Therefore, $\PM$ is generated as follows:
\begin{equation*}
    \PM^k = (1-p_{\max})\PM_{in}^k\oplus p_{\max}\PM_{out}^k,
\end{equation*}
where $\PM^k$ represents the submatrix of $\PM$ restricted within $\SM_k$, and $\oplus$ represents the concatenation of transitions to different regions.
In Figure \ref{fig:FedQ-SyncQ}, the first row sets $K_S=20,E_S=20,N_S/N=0.6$, and the second row sets $K_S=20,N=10$.

\begin{figure}[!htb]
    \centering
    \includegraphics[width=0.5\textwidth]{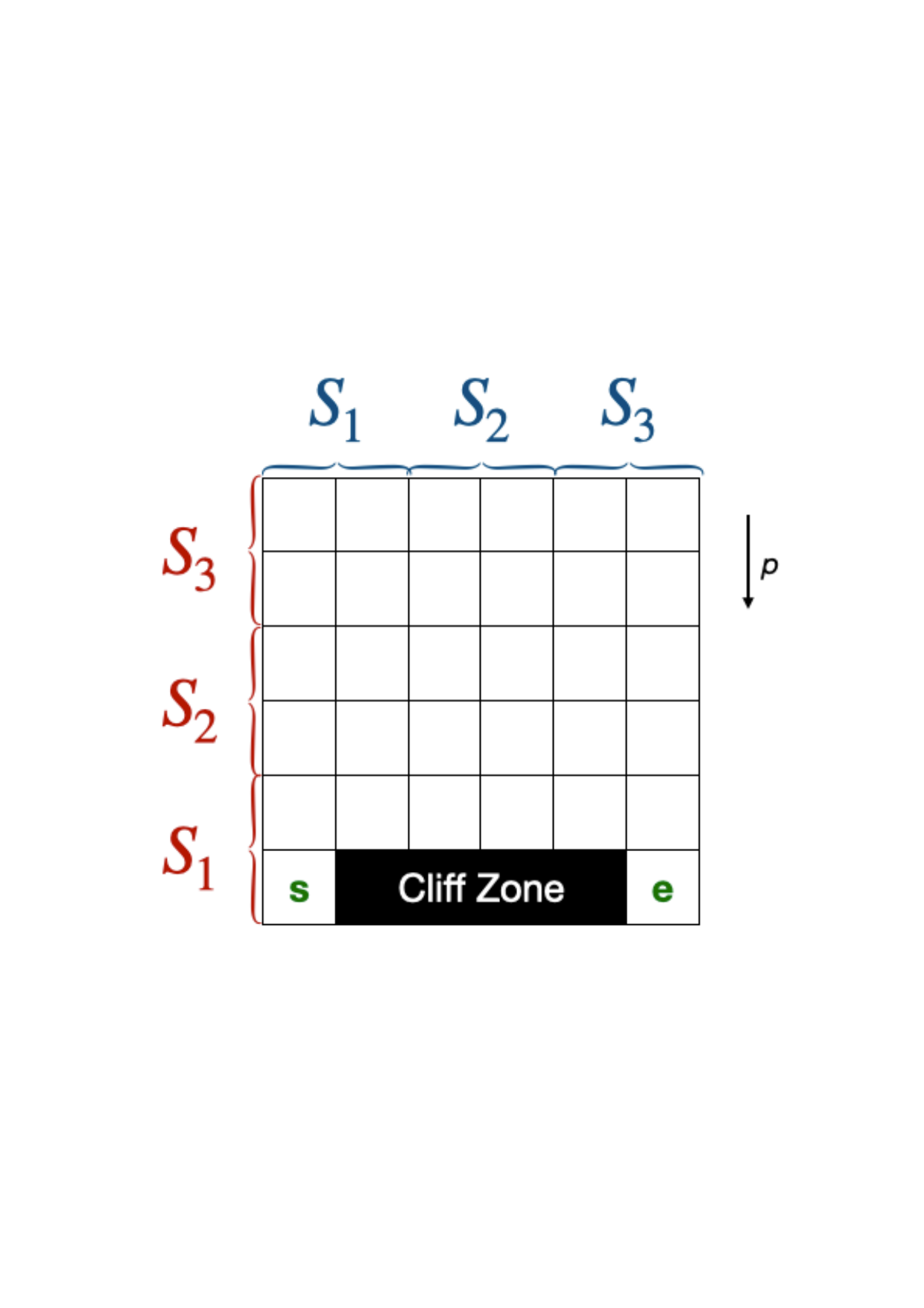}
    
    \caption{Examples of WindyCliff with a $6\times 6$ state space: Red $\{S_k\}_{k=1}^3$ represents horizontal splitting; blue $\{S_k\}_{k=1}^3$ represents vertical splitting; $p$ indicates the power of wind glowing downwards.}
    \label{WindyCliff_example}
\end{figure}

\textbf{WindyCliff.}
As shown in Figure \ref{WindyCliff_example}, the state space of WindyCliff is composed of "Cliff Zone" and "Land Zone".
In terms of reward function $\RM$, the agent gets values of $1.0$, $-0.01$ and $-0.1$ when it reaches $s=e$, "Land Zone" other than $e$, and "Cliff Zone" respectively.
Agents have four actions corresponding to going at directions of \{"up", "down", "right", "left"\}.
There is a probability of $p$ for the glowing wind with a power of $p\in(0,1)$ to override agents' actions as "down".
Ways of splitting the state space have been shown in Figure \ref{WindyCliff_example}.
Specifically, we consider two splitting directions: horizontal splitting (h), and vertical splitting (v).
In Figure \ref{fig:FedQ-SyncQ}, the third row considers $N=5$ agents located in $10\times 10$ state space which is horizontally split.

\subsection{Selection of hyper-parameters}
\label{App:exper_hyper}
In terms of the application of FedQ-SynQ, we have to specify choices of learning rates $\{\eta_t\}_{t=1}^\infty$ and batch-size $b$ for local generators.

\textbf{RandomMDP.} 
The learning rate $\eta$ is uniformly set to $0.5$, while batch size $b$ is set to $5$.

\textbf{WindyCliff.}
The learning rate $\eta$ is uniformly set to $0.5$, while batch size $b$ is set to $10$.

\begin{figure*}
\centering
    \includegraphics[width=0.6\textwidth]{./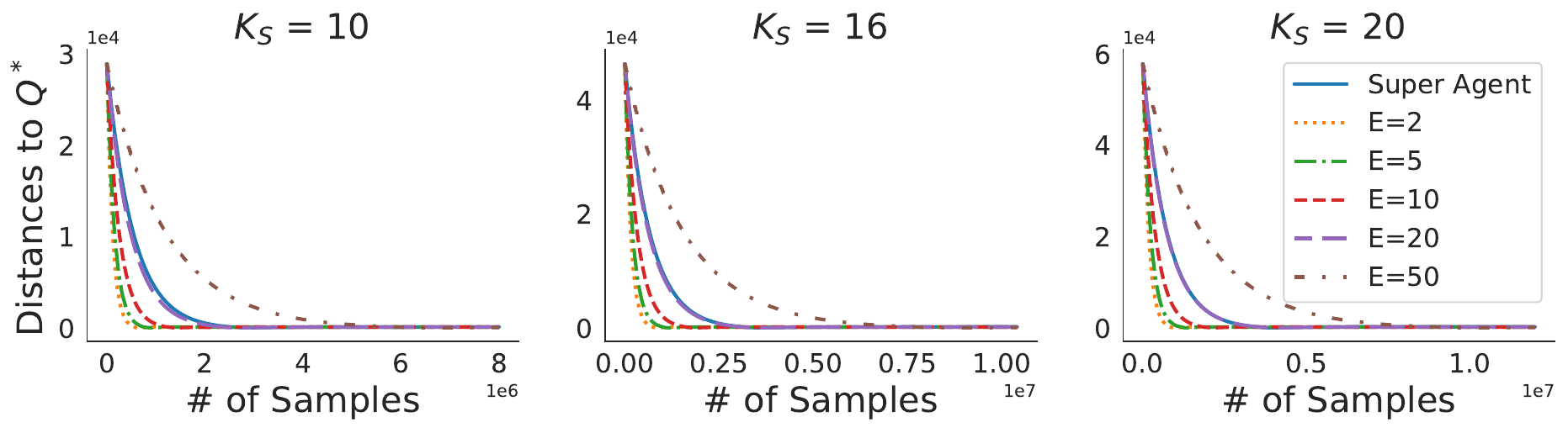}
    \includegraphics[width=0.6\textwidth]{./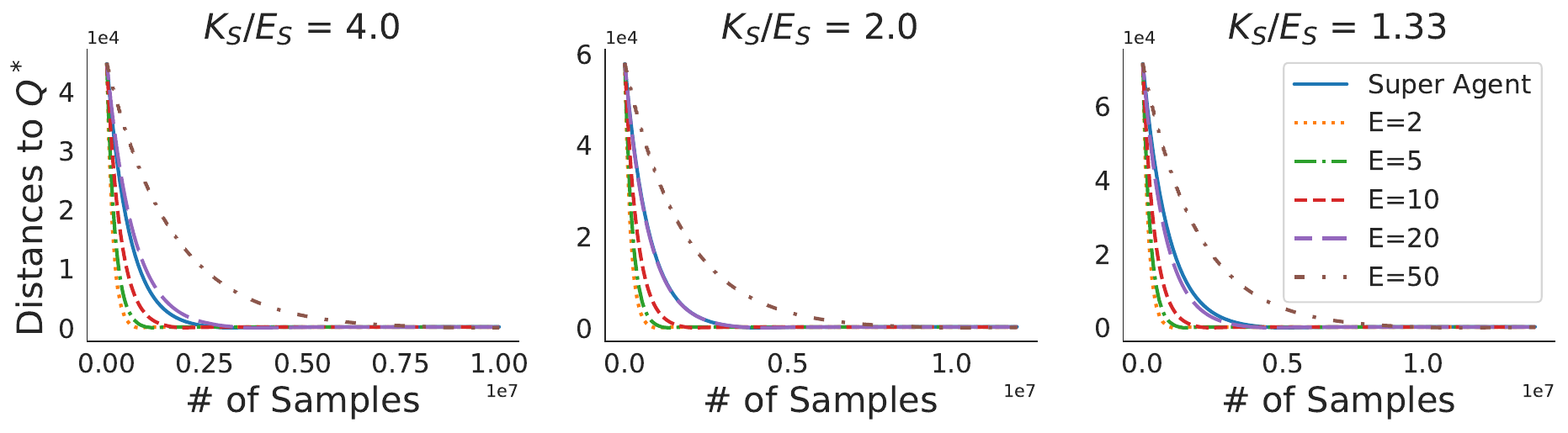}
    \includegraphics[width=0.6\textwidth]{./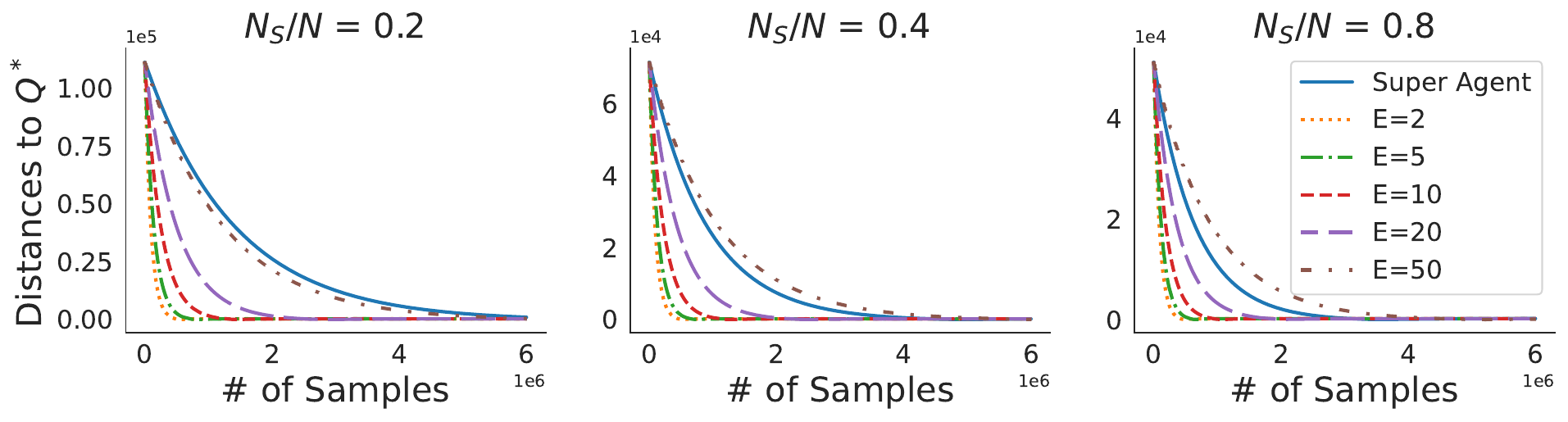}
     \includegraphics[width=0.8\textwidth]{./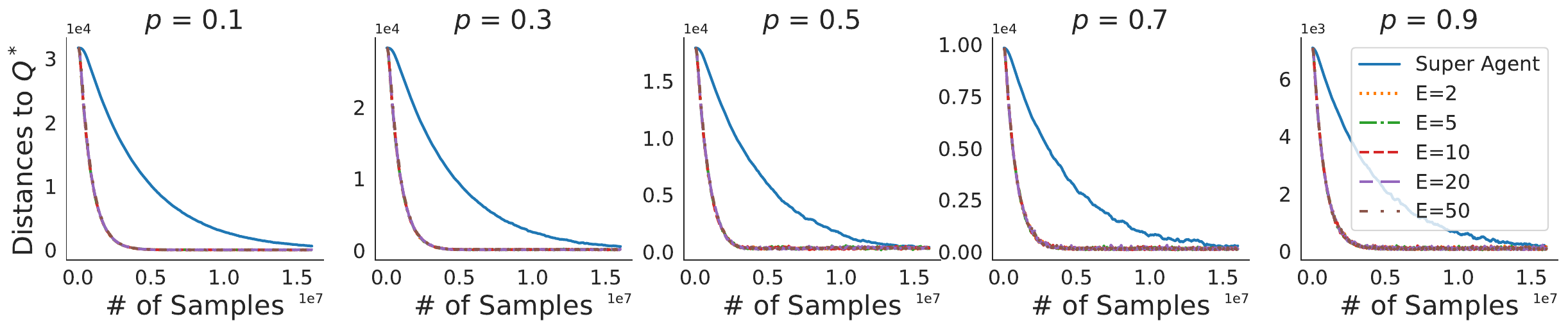}
     \includegraphics[width=0.8\textwidth]{./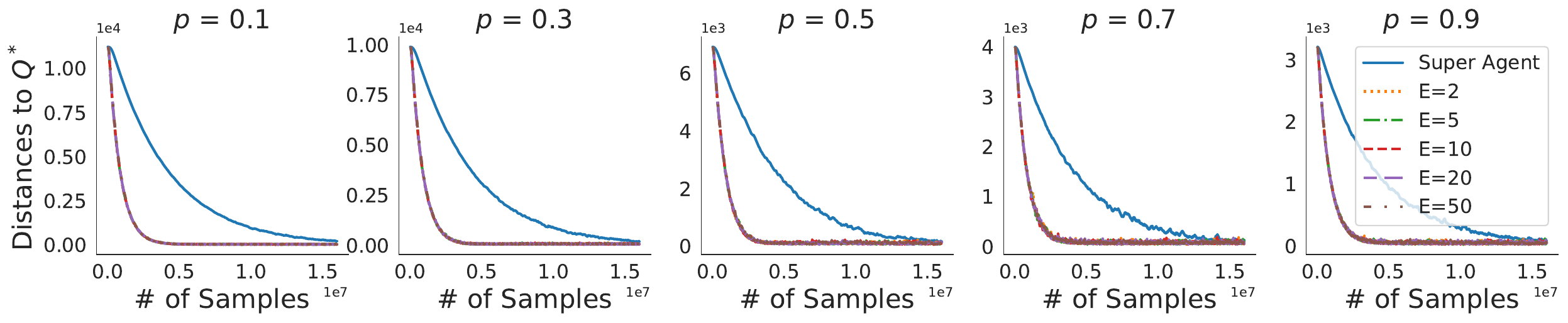}
      \includegraphics[width=0.8\textwidth]{./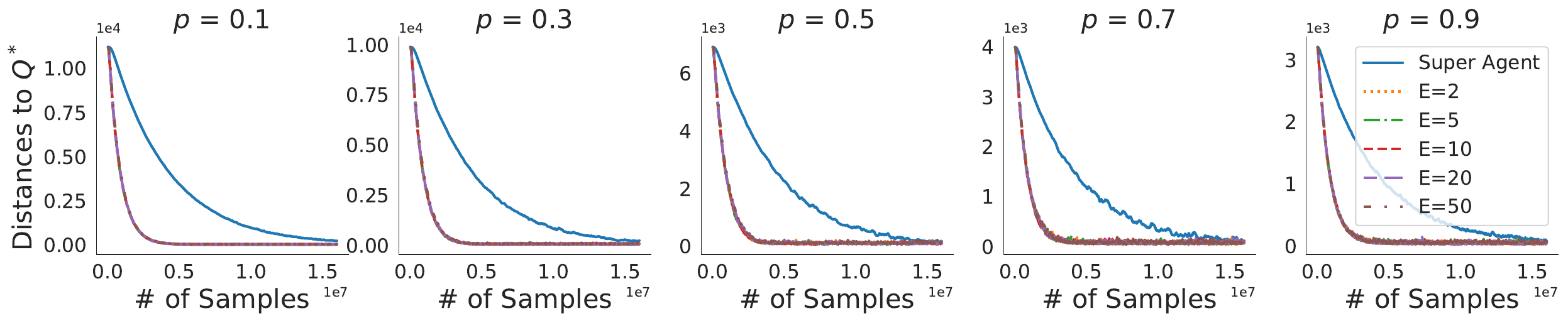}
      \caption{
      Convergence of FedQ-SynQ in different environments:
      the first row with $E_S=0.5,K_S,N_S=3,N=5$;
      the second row with $K_S=20,N_S=3,N=5$;
      the third row with $K_S=10,E_S=5,N=10$;
      the forth row with $10\times 10$ state space, $N=5$ agents and splitting direction as $v$;
      the fifth row with $6\times 6$ state space, $N=3$ agents and splitting direction as $v$;
      the sixth row with $6\times 6$ state space, $N=3$ agents and splitting direction as $h$.
      }
      \label{fig:exper_more}
\end{figure*}
\subsection{More Numerical Experiments}
\label{App:exper_more}
We construct additional numerical experiments to evaluate how structure of federated control problems affect the efficiency of FedQ-SyncQ.
Specifically, we consider following settings in environments of RandomMDP and WindyCliff:
\begin{itemize}
    \item RandomMDP with different sizes of $K_S$ with fixed values of $(N_S,N)$ and fixed ratios of $K_S/E_S$;
    \item RandomMDP with different ratios of $K_S/E_S$ with fixed value of $(K_S,N_S,N)$ and fixed value of $K_S$;
    \item RandomMDP with different ratios of $N/N_S$ with fixed values of $(K_S,E_S,N)$;
    \item WindyCliff with different sizes of state space $6\times 6$, different number of agents $N$, and different splitting directions.
\end{itemize}
As shown in Figure \ref{fig:exper_more}, FedQ-SynQ exhibits smaller sample complexity in converging to globally optimal Q functions.
The first row with fixed ratio of $K_S/E_S$ but different values of $K_S$ has shown that the speedup effect of FedQ-SynQ remains at different problem sizes;
the second row with fixed value of $K_S$ but different values of $E_S$ has shown that FedQ-SynQ achieves slightly greater speed-up when an agent has more states sharing with other agents;
the third row with fixed values of $(K_S,E_S,N)$ but different ratios of $N_S/N$ indicates that the speedup effect of FedQ-SynQ decreases when the difference between sizes of restricted state subset and global set diminishes.
Remaining three rows of experiments on WindyCliffs have demonstrated an obvious speedup effect of FedQ-SynQ on different sizes of state space and different values of wind power.

\end{document}